\documentclass{article}

\usepackage{microtype}
\usepackage{graphicx}
\usepackage{subcaption}
\usepackage{booktabs} %
\usepackage{aliascnt}
\usepackage{hyperref}

\usepackage[accepted]{icml2026}

\usepackage{amsmath, amssymb, amsthm}
\usepackage{mathtools}
\usepackage{bm}
\usepackage{upgreek}
\usepackage{circledsteps}
\usepackage[capitalize]{cleveref}
\usepackage{siunitx}
\usepackage{aliascnt}

\theoremstyle{plain}

\newaliascnt{proposition}{theorem}
\newtheorem{proposition}[proposition]{Proposition}
\aliascntresetthe{proposition}

\newaliascnt{lemma}{theorem}
\newtheorem{lemma}[lemma]{Lemma}
\aliascntresetthe{lemma}

\newaliascnt{corollary}{theorem}

\aliascntresetthe{corollary}

\theoremstyle{definition}

\newaliascnt{definition}{theorem}
\newtheorem{definition}[definition]{Definition}
\aliascntresetthe{definition}

\newaliascnt{assumption}{theorem}
\newtheorem{assumption}[assumption]{Assumption}
\aliascntresetthe{assumption}

\theoremstyle{remark}

\newaliascnt{remark}{theorem}

\aliascntresetthe{remark}

\newtheorem*{remark*}{Remark}
\newtheorem*{proof*}{Proof}
\newtheorem*{interpretation*}{Interpretation}

\crefname{section}{Sec.}{Secs.}
\Crefname{section}{Sec.}{Secs.}

\crefname{definition}{Def.}{Defs.}
\Crefname{definition}{Def.}{Defs.}

\crefname{proposition}{Prop.}{Props.}
\Crefname{proposition}{Prop.}{Props.}

\crefname{lemma}{Lem.}{Lems.}
\Crefname{lemma}{Lem.}{Lems.}

\crefname{assumption}{Assump.}{Assumps.}
\Crefname{assumption}{Assump.}{Assumps.}

\crefname{figure}{Fig.}{Figs.}
\Crefname{figure}{Fig.}{Figs.}

\crefname{equation}{Eq.}{Eqs.}
\Crefname{equation}{Eq.}{Eqs.}

\crefname{algorithm}{Alg.}{Algs.}
\Crefname{algorithm}{Alg.}{Algs.}

\crefname{table}{Tab.}{Tabs.}
\Crefname{table}{Tab.}{Tabs.}

\crefname{appendix}{App.}{Apps.}
\Crefname{appendix}{App.}{Apps.}

\usepackage{algorithm}
\usepackage{algorithmic}
\makeatletter
\newcommand{\NoNumber}[1]{\item[]\noindent\hskip-\algorithmicindent #1}
\makeatother

\newcommand{\nash}{\textsc{NASH}}

\newcommand{\Oh}{\mathcal{O}}
\newcommand{\E}{\mathbb{E}}
\newcommand{\Cov}{\mathrm{Cov}}

\newcommand{\Tau}{\mathcal{T}}
\newcommand{\game}{\mathcal{G}}

\newcommand{\phimax}{{\phi_\mathrm{max}}}
\newcommand{\ua}{u_V}
\newcommand{\uahat}{\hat{u}_V}
\newcommand{\uhat}{\hat{u}}
\newcommand{\umax}{u_\mathrm{max}}
\newcommand{\rM}{\mathcal{M}} %
\newcommand{\rMi}{{\mathcal{M}^\textup{I}}}

\DeclareMathOperator*{\argmax}{arg\,max}

\usepackage{enumitem}
\setlist{leftmargin=*, align=left, itemsep=0pt,topsep=0pt,parsep=0pt,partopsep=0pt}

\usepackage{titlesec}
\titlespacing*{\paragraph}{0pt}{0pt}{1em}

\usepackage{wrapfig}
\usepackage{extarrows}
\usepackage{longtable}
\usepackage{tabularx}
\newcolumntype{Y}{>{\centering\arraybackslash}X} %

\usepackage[bottom,belowfloats]{footmisc}
\usepackage[dvipsnames]{xcolor}
\definecolor{myblue}{HTML}{0072B2}
\definecolor{myred}{HTML}{D55E00}

\newcommand{\ulbf}[1]{\underline{\textbf{#1}}}

\usepackage[most]{tcolorbox}
\definecolor{MyPurple}{HTML}{cc79a7}
\definecolor{MyLightViolet}{HTML}{faf1f6}

\newtcolorbox{myquote}{
  enhanced,
  breakable,
  colback=MyLightViolet,
  colframe=MyLightViolet,
  boxrule=0pt,
  frame hidden,
  borderline west={3pt}{0pt}{MyPurple},
  arc=6pt,
  sharp corners=west,
  left=8pt,
  right=4pt,
  top=4pt,
  bottom=4pt,
  before skip=\baselineskip,
  after skip=\baselineskip,
  fontupper=\itshape
}

\icmltitlerunning{Is Data Shapley Not Better than Random in Data Selection? Ask NASH}

\begin{document}

\twocolumn[
  \icmltitle{Is Data Shapley Not Better than Random in Data Selection? Ask NASH}

  \icmlsetsymbol{equal}{*}

  \begin{icmlauthorlist}
    \icmlauthor{Xiao Tian}{equal,nus,astar}
    \icmlauthor{Jue Fan}{equal,nus,astar}
    \icmlauthor{Rachael Hwee Ling Sim}{nus}
    \icmlauthor{Zixuan Wang}{nus}
    \icmlauthor{Nancy F. Chen}{astar}
    \icmlauthor{Bryan Kian Hsiang Low}{nus}
  \end{icmlauthorlist}

  \icmlaffiliation{nus}{Department of Computer Science, National University of Singapore, Singapore}
  \icmlaffiliation{astar}{Agency for Science, Technology and Research (A*STAR), Singapore}

  \icmlcorrespondingauthor{Bryan Kian Hsiang Low}{lowkh@comp.nus.edu.sg}

  \icmlkeywords{Machine Learning, Data Shapley, Data Selection}

  \vskip 0.3in
]

\printAffiliationsAndNotice{\icmlEqualContribution}

\begin{abstract}
    Data selection studies the problem of identifying high-quality subsets of training data.
    While some existing works have considered selecting the subset of data with top-$m$ Data Shapley or other semivalues as they account for the interaction among every subset of data, other works argue that Data Shapley can sometimes perform ineffectively in practice and select subsets that are \textit{no better than random}. This raises the questions: \textbf{(I)} \textit{Are there certain ``Shapley-informative'' settings where Data Shapley consistently works well?} \textbf{(II)} \textit{Can we strategically utilize these settings to select high-quality subsets consistently and efficiently?}
    In this paper, we propose a novel data selection framework, \textbf{\nash} (\ulbf{N}on-linear \ulbf{A}ggregation of \ulbf{SH}apley-informative components), which \textbf{(I)} decomposes the target utility function (e.g., validation accuracy) into simpler, Shapley-informative component functions, and selects data by optimizing an objective that \textbf{(II)} aggregates these components non-linearly. We demonstrate that \nash\ substantially boosts the effectiveness of Shapley/semivalue-based data selection with minimal additional runtime cost. Our implementation is provided in \url{https://github.com/snoidetx/nash}.
\end{abstract}

\section{Introduction} \label{sec:introduction}

As the scale of machine learning (ML) grows larger, \textit{data selection} \citep{subsetml_icml21_workshop, albalak2024surveydataselectionlanguage}, which aims to \textbf{select the best subset of training data of a limited size $m$}, has become increasingly important. For example, the ML model owner may have a limited budget to purchase data or a limited storage to store data for training; the raw feasible set 
may contain harmful data that degrade model utility. Specifically, the utility of a selected subset 
can be quantified through some utility function $u$, such as the commonly used \textit{validation accuracy} $\ua$ on a held-out validation set $V$.

However, \textbf{optimizing $\ua$ is computationally challenging}. Bruteforce search over all candidate subsets requires exponential number of \textit{evaluations} of $\ua$ (note that the evaluation of $\ua$ on each candidate subset requires one round of model training and evaluation). Even if $\ua$ is considered (approximately) monotone submodular, the greedy algorithm (i.e., selecting the datum that maximally improves $\ua$ at each round) requires polynomial number of evaluations of $\ua$.
To address this challenge, a common data selection heuristic is the \textbf{\textit{top-$m$ heuristic}}: \textbf{to assign each training datum a numerical score computed using \textit{data valuation} methods \citep{sim2022data}, and select the subset of training data with top-$m$ scores}.

\begin{figure}
    \centering
    \includegraphics[width=0.9\linewidth]{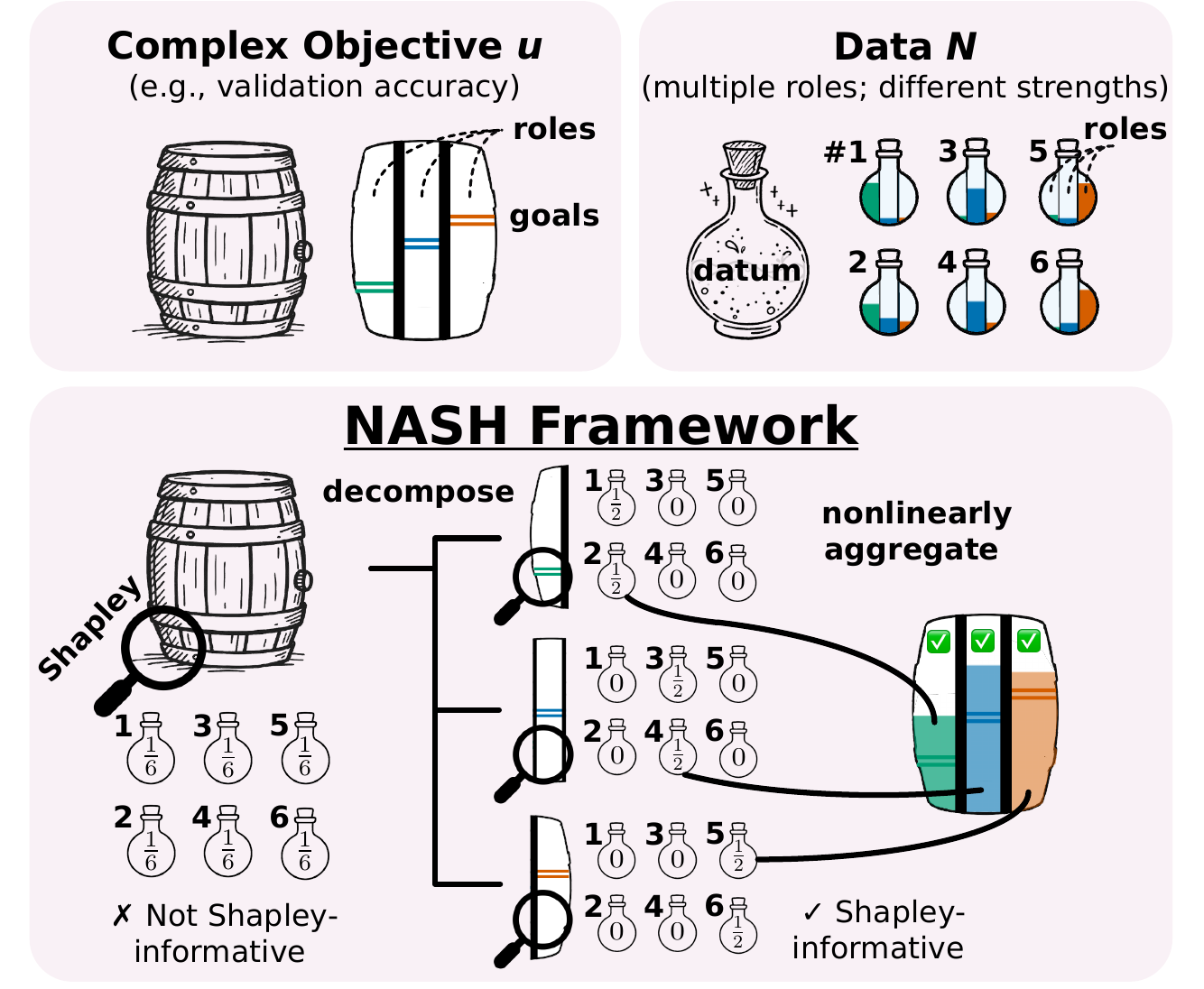}
    \caption{\textbf{Overview of \nash\ data selection framework.}  When the utility function (e.g., validation accuracy $\ua$) is complex and involves different \textit{roles} of data (e.g., $3$ roles in the figure), Shapley values fail to inform on data with different strengths (e.g., different columns of data in the figure), and thus may perform badly. To address this, our \nash\ first decomposes $\ua$ to simpler, fewer-role components where Shapley values are informative (i.e., each compartment in the figure). It then aggregates these \textit{Shapley-informative components} non-linearly to account for the interaction among selected data. }
    \label{fig:overview}
\end{figure}

\textit{Data Shapley} \citep{ghorbani2019data, jia2019efficient} and its generalization, \textit{semivalues} \citep{dubey1981value, kwon2022beta, wang2023data}, are well-known data valuation methods and thus also commonly applied to data selection tasks. \cref{sec:background} gives a detailed review of these methods. An important principle of Data Shapley and semivalues is that \textbf{(\textdagger) each datum's worth depends on other data present in the training set}. Thus, each datum $i$'s valuation score should incorporate its (marginal) contribution to every subset $S$ of other training data (i.e., $u(S \cup \{i\}) - u(S)$). This principle is particularly relevant to data selection, because it is sensible that one should not \textit{myopically} view each datum as working independently when trying to select a subset that jointly works well. Hence, prior works \citep{ghorbani2019data, kwon2022beta, wang2024freeshap} have empirically demonstrated the effectiveness of Data Shapley and semivalues (using $\ua$ as utility function) in data selection, compared with other methods that do not consider interactions such as \textit{leave-one-out (LOO) values} and \textit{influence functions} \citep{koh2017understanding}. Nonetheless, recent works \citep{wang2024rethinking, chi2025unifyingoptimizingdatavalues} show that \textbf{selecting via top-$m$ heuristic with Data Shapley based on $\ua$ may sometimes result in an under-performing subset}, which can be even worse than random selection (Sec.~\ref{sec:background}).

The above paradox implies that Data Shapley is only effective for \textit{certain} utility functions $u$. \textbf{Can we make Data Shapley consistently effective in ML data selection (e.g., using $u_V$ as utility function)?} Intuitively, selecting with Data Shapley works well when each subset's sum of Shapley values is \textit{indicative} of its actual utility. 
We formalize this \textit{Shapley-informativeness} property and analyze why \textit{arbitrary} utility functions $u$, as well as specific utility functions $\ua$ (induced by ML), are not Shapley-informative. 
Thus, we ask the question: \textit{\textbf{Is there any Shapley-informative ML utility functions} such that each subset's sum of Shapley values is well indicative of its actual utility?} We show that simpler utility functions, such as \textit{prediction correctness} for a single validation datum, are provably Shapley-informative under reasonable assumptions. We name such simpler functions \textit{Shapley-informative components}, which will serve as building blocks for our proposed data selection framework.
This leads to the next question: \textit{Given Data Shapley's advantages (\textdagger), \textbf{can we strategically utilize the Shapley-informative components to consistently achieve effective data selection and optimize $\ua$?}}
At first glance, readers may think of maximizing the \textit{average} of these components. However, this \textit{linear} aggregation recovers the flawed vanilla Data Shapley. 
Therefore, we present a novel data selection framework, \textbf{\nash} (\textit{\ulbf{N}on-linear \ulbf{A}ggregation of \ulbf{SH}apley-informative components}), which uses a sum of concave functions to aggregate Shapley-informative components. \cref{fig:overview} gives an overview of \nash. We also explain how the \nash\ objective can be optimized effectively and efficiently using the greedy algorithm, thus introducing minimal additional runtime cost to current Shapley/semivalue-based data selection methods. \textbf{Notably, existing works on \textit{efficient} approximation of Shapley values/semivalues} (e.g., \citet{ghorbani2019data, li2024faster, wang2024freeshap}) \textbf{can be freely plugged into \nash\ and become \textit{effective} in data selection}, which further validates the practicality of \nash.

Our contributions are summarized as follows: \begin{itemize}
    \item We formalize the \textit{Shapley-informativeness} property that ensures the effectiveness of Data Shapley for data selection, and analyze why commonly used utility functions fail to satisfy it thus making Data Shapley ineffective (\cref{subsec:shapley-informativeness});
    \item We show that simpler utility functions, such as \textit{prediction correctness} for a single validation datum, are provably Shapley-informative (\cref{subsec:shapley-informative-components});
    \item We propose a novel data selection framework \nash\ based on the above Shapley-informative components, which consistently selects data effectively and efficiently (\cref{subsec:aggregate}); 
    \item We empirically demonstrate that \nash\ substantially boosts the effectiveness of Data Shapley and semivalues using various datasets and models (\cref{sec:experiments}). 
\end{itemize}

\paragraph{Significance to the research community.} Due to the inconsistent performance of Data Shapley/semivalues in data selection, existing works have only demonstrated that Data Shapley is beneficial on limited datasets and settings (e.g., injecting noise to the datasets). \nash\ largely eliminates this restriction and shows that Data Shapley/semivalues can be consistently reliable in data selection and have robust performances across datasets/settings, which would hence benefit future related works (e.g., on improving the approximation efficiency).

\section{Background} \label{sec:background}

Let $N$ be the \textit{feasible set} of data (to be selected from) with $|N|=n$. Let $u: 2^N \to \mathbb{R}$ be a \textit{utility function} that measures the utility of ML model trained using subset $M \subseteq N$, $u(M)$. A commonly used utility function is $\ua$, the \textit{validation accuracy} on validation set $V$. Let the selection size be $m$. We focus on the following conventional formulation of the data selection problem \citep{wang2024advancing}:
\begin{align} \textstyle
M^* \coloneq \argmax_{M \subseteq N; |M| \leq m}\ u(M)\ . \label{eqn:data-selection-objective}
\end{align}
A natural way to identify high-quality subsets of training data $M \subseteq N$ is to quantify the value of each datum and select the ones with highest values (i.e., top-$m$ heuristic in \cref{sec:introduction}). To achieve this, \textit{data valuation} works \citep{koh2017understanding, sim2022data} assign each datum a numerical score representing its worth. \textit{Data Shapley} \citep{ghorbani2019data, jia2019efficient} is a principled data valuation method which models ML as a \textit{cooperative game} among all data (as players), such that each \textit{coalition} (subset) $S \subseteq N$ of data can achieve the model utility $u(S)$. In particular, Data Shapley utilizes the \textit{Shapley value} \citep{shapley1953value} which computes the valuation score of each datum $i$ by aggregating its \textit{marginal contribution} to every coalition $S$ formed by other data, $\Delta_u(i|S) \coloneq u(S \cup \{i\}) - u(S)$. Formally, 
\begin{definition}[Shapley Value] \label{def:shapley-value}
The \textit{Shapley value} of a datum $i \in N$ with regards to (w.r.t.) utility function $u$ is
\begin{equation} \label{eqn:shapley-value} \textstyle
\phi_i(u) \coloneq \sum_{S \subseteq N \setminus \{i\}} \frac{1}{n} \cdot \Delta_u(i|S) / \binom{n-1}{|S|}\ .
\end{equation}
\end{definition}

Data Shapley has two major advantages in the context of data selection. Firstly, it is \textbf{non-myopic} as \cref{eqn:shapley-value} accounts for the interaction/synergy within every subset of data, which is important to selecting a subset $M$ that \textit{jointly} gives a high utility. Secondly, it is \textbf{equitable} such that each datum is valued proportionately to its actual contribution. Equitability is characterized by a list of axioms which ensures data are not overvalued or undervalued, such as
\begin{itemize}[leftmargin=*, align=left]
\item[\textbf{[INT]}] \textit{Interchangeability}: If data $i$ and $j$ are interchangeable w.r.t. utility function $u$ (i.e., $u(S \cup \{i\}) = u(S \cup \{j\})$ for all $S \subseteq N \setminus \{i\}$), then $\phi_i(u) = \phi_j(u)$.
\end{itemize}
App.~\ref{app:equitability-axioms} gives the full list of equitability axioms. Notably, Data Shapley is the only data valuation method that satisfies all these axioms while enforcing the sum of all data's valuation scores to be $u(N)$.\footnote{This constraint means that the overall utility $u(N)$ is attributed/broken down to every single datum in $N$.} This additional constraint can sometimes be less important and removed, which gives the relaxation of Shapley value, \textit{semivalues} \citep{dubey1981value}: $\varphi_i(u) \coloneq \sum_{S \subseteq N \setminus \{i\}} w_{|S|} \cdot \Delta_u(i|S) / \binom{n-1}{|S|}$, where $w_{|S|}$'s are non-negative \textit{weighting coefficients} such that $\sum_{s=0}^{n-1} w_s = 1$. Common semivalues include \textit{Beta Shapley} \citep{kwon2022beta} and \textit{Data Banzhaf} \citep{wang2023data} (see App.~\ref{app:semivalues} for more details). 

Exact computation of Shapley values and semivalues requires evaluation of $u(S)$ for every $S \subseteq N$, which is challenging. Fortunately, existing works on efficient approximation of Shapley values and semivalues (\cref{sec:related-works}) greatly improve their practicality and complement this work.

\paragraph{Data Shapley for data selection.} The aforementioned advantages make Data Shapley and semivalues promising approaches for data selection. Prior works \citep{ghorbani2019data, tang2021data, wang2024freeshap} empirically demonstrate that Data Shapley w.r.t. validation accuracy $\ua$ gives stronger performance than other methods that do not possess these advantages such as \textit{leave-one-out (LOO) values} and its approximation, the \textit{influence function} \citep{koh2017understanding}, which value each datum based on the effect of removing it from the feasible set and hence do not consider the interaction within each subset of data. App.~\ref{app:other-data-selection} gives a review of these methods. However, recent works \citep{kwon2022beta, wang2024rethinking} identify that Data Shapley w.r.t. validation accuracy $\ua$ sometimes performs less competitively and can even be \textit{no better than random}. This leads to the question: \textit{When does Data Shapley perform well for data selection?} Under top-$m$ heuristic, \citet{wang2024rethinking} discovers that Data Shapley works well on heterogeneous datasets (i.e., contains low-quality/noisy data) or w.r.t. \textit{monotonically transformed modular (MTM) functions}; \citet{chi2025unifyingoptimizingdatavalues} shows that Data Shapley works well on utility functions with small curvature (i.e., close to modular functions) (see App.~\ref{app:selecting-h-shapley} for a summary of the reasoning). Unfortunately, the actual utility function (e.g., $u_V$) may not always coincide with these settings and thus Data Shapley may give inconsistent performances. To address this challenge and make Data Shapley consistently effective, in this work, we do not adopt top-$m$ heuristic. Instead, we decompose the complex objective $\ua$ into simpler utility functions where Data Shapley gives desirable behaviors (i.e., \textit{Shapley-informativeness} in Sec.~\ref{subsec:shapley-informativeness}). We then aggregate these Shapley-informative components non-linearly to select a high-quality subset $M$.

\section{Methodology} \label{sec:methodology}

In \cref{subsec:shapley-informativeness}, we formulate the \textit{Shapley-informativeness} property for utility functions, a desirable behavior for Data Shapley to be effective data selection. We then motivate why the commonly used validation accuracy $\ua$ is not Shapley-informative and thus might cause ineffective data selection. Based on these motivations, in \cref{subsec:shapley-informative-components}, we introduce a type of provably Shapley-informative utility functions, which will serve as building blocks (i.e., \textit{Shapley-informative components}) for our proposed method. In \cref{subsec:aggregate}, we formulate our \nash\ objective that properly aggregates these Shapley-informative components and propose an effective and efficient algorithm to optimize it.

\subsection{Shapley-Informativeness} \label{subsec:shapley-informativeness}

Data valuation methods such as Data Shapley assess the quality of data using valuation scores. As motivated in \cref{sec:introduction}, for such methods to work in data selection, it is desirable that a set of high-quality data should in general result in high utility, and vice versa. This requires each subset $M \subseteq N$'s sum of Shapley values, $\uhat(M)$, to be indicative of its actual utility $u(M)$. To formalize this idea, we define the following \textit{Shapley-informativeness} property probabilistically:

\begin{definition}[Shapley-Informativeness] \label{def:shapley-informativeness}

A utility function $u$ is \textit{$(\epsilon, \delta)$-Shapley-informative at size $m$ under mapping $f: \mathbb{R} \to \mathbb{R}$} if for a size-$m$ subset $\mathcal{M} \subseteq N$ drawn uniformly at random, 
\begin{equation}
    \Pr[|u(\mathcal{M}) - f(\hat{u}(\mathcal{M}))| \leq \epsilon] \geq 1 - \delta,
\end{equation}
where $\hat{u}(\mathcal{M}) \coloneq \sum_{i \in \mathcal{M}} \phi_i(u)$ is the sum of Shapley values in $\mathcal{M}$.
\end{definition}
Intuitively, if $u$ is Shapley-informative, then if one computes the Shapley values and knows a proper mapping $f$, one can recover the actual utility with high probability and thus select desirable subsets. As an extreme example, a \textit{modular} utility function is $(0,0)$-Shapley-informative at any size under the identity mapping $f(x) = x$ (App.~\ref{app:examples-of-shapley-informative-functions} gives more simple examples).
\begin{remark*}
An \textit{arbitrary} utility function $u: 2^N \to \mathbb{R}$ (not necessarily in the context of ML) is in general unlikely to be Shapley-informative under any mapping $f$. This is because each $u$ corresponds to a unique size-$2^n$ vector $\game_u \in \mathbb{R}^{2^n}$ such that each entry of $\game_u$ corresponds to a unique coalition $S \subseteq N$ and equates to $u(S)$. Data Shapley is a linear mapping from $\game_u \in \mathbb{R}^{2^n}$ to $\bm\phi_u \in \mathbb{R}^n$ (i.e., the $n$ Shapley values $\phi_i(u)$) of rank $n$ and hence nullity $2^n - n - 1$.\footnote{$-1$ here since $u(\emptyset) = 0$ by convention.} Hence, for any utility function $u$ and corresponding $\game_u$, adding any linear combination of basis vectors from the null space of Data Shapley mapping would result in the same Shapley values $\bm\phi_u$. Thus, there is no mapping $f$ that can recover all original $u$ from the Shapley values $\bm\phi_u$ or its induced $\uhat(M)$'s (even when a small probability of violation $\delta$ is allowed). For instance, App.~\ref{app:arbitrary-not-shapley-informative} gives an example where utility functions with different desirable subsets have the same Shapley value vector (and thus one cannot identify desirable subsets through any mapping $f$).
\end{remark*}
\vspace{-\parskip}
It is our view that \textbf{in the context of ML and Data Shapley, the utility function $u: 2^N \to \mathbb{R}$  is \emph{not} arbitrary}, but always induced by ML task-specific datasets and models. Hence, the negative conclusion about arbitrary utility functions in the above remark does not necessarily carry over. Indeed, data selection with Data Shapley consistently demonstrates good performance on \emph{certain} ML tasks \citep{ghorbani2019data, kwon2022beta, wang2024freeshap}, which implies that \textbf{\textit{some} ML-induced utility functions $u$ possess certain structures or properties that make them Shapley-informative}. We can thus identify and exploit such functions to select high-quality subsets. 

To envision what ML-induced utility functions are Shapley-informative,
we first examine the commonly used utility function, \textit{validation accuracy $\ua$}. Unfortunately, $\ua$ is unlikely to be Shapley-informative: As shown in \cref{fig:validation-accuracy}, the (linear) sum of Shapley values $\uahat(M)$ is not indicative of the actual utility $\ua(M)$ (this observation is also supported by prior works \citep{wang2024rethinking, chi2025unifyingoptimizingdatavalues}). 
We identify the key reason as follows: \textbf{Complex utility functions such as $\ua$ involve multiple distinct \textit{roles}}, and \textbf{each feasible datum $i \in N$ has its unique \textit{strength} in a certain set of roles}.
For example, $\ua$ averages across validation data from different parts of the data space (e.g., different classes or subpopulations), which correspond to different sets of roles and require training data with different strengths (e.g., from every class or subpopulation). This raises two serious problems that make $\ua$ not Shapley-informative (illustrated in \cref{fig:roles}): \\[-1em]

\begin{enumerate}[label=\textbf{P\arabic*}, leftmargin=*, align=left]
    \item \textbf{Each datum $i$'s Shapley value $\phi_i(\ua)$ does not capture its strength.} For example, consider $2$ training data $i, j$ with equal Shapley values $\phi_i(\ua) = \phi_j(\ua)$ but different strengths 
    (e.g., $\Circled{2}$ and $\Circled{3}$ cover different areas in \cref{fig:roles}).
    Although $i$ and $j$ behave differently, any data selection method based solely on the Shapley values $\bm\phi_{\ua}$ would treat $i$ and $j$ interchangeably. However, data with different strengths should not be treated interchangeably. For example, data strong at similar roles may weaken each other's contributions, while data with complementary strengths may amplify them \citep{hu2024miss}. Hence, any data selection method based solely on $\bm\phi_{\ua}$ cannot capture these effects (i.e., $\uahat$ is not indicative of $\ua$) and may select data that are no better than random. \label{itm:P1}
    \item\label{itm:P2} \textbf{The objective $\ua(M)$ aggregating different roles cannot be represented by a linear aggregation (e.g., sum) of values in $M$.} %
    Linearity implies that %
    selection of some data does not affect selection of other data (since the values do not interact). This is undesirable when maximizing $\ua$ because a datum has different instead of same contribution to different coalitions depending on how it interacts with data within them. 
    For example, if some roles/parts of the data space have already been predicted correctly (e.g., \Circled{1} and Role $1$ in \cref{fig:roles}), the new data to be added should focus on the uncovered roles/underrepresented parts of the data space (e.g., Role $2$ in \cref{fig:roles}) instead of the covered ones. In practice, this issue causes Data Shapley to select duplicated data or data from only a single class 
    as their values can be similarly high and lie inside the top-$m$ data.
    Thus, a non-linear aggregation should be used.
\end{enumerate}

\begin{figure}
    \centering
    \includegraphics[width=\linewidth]{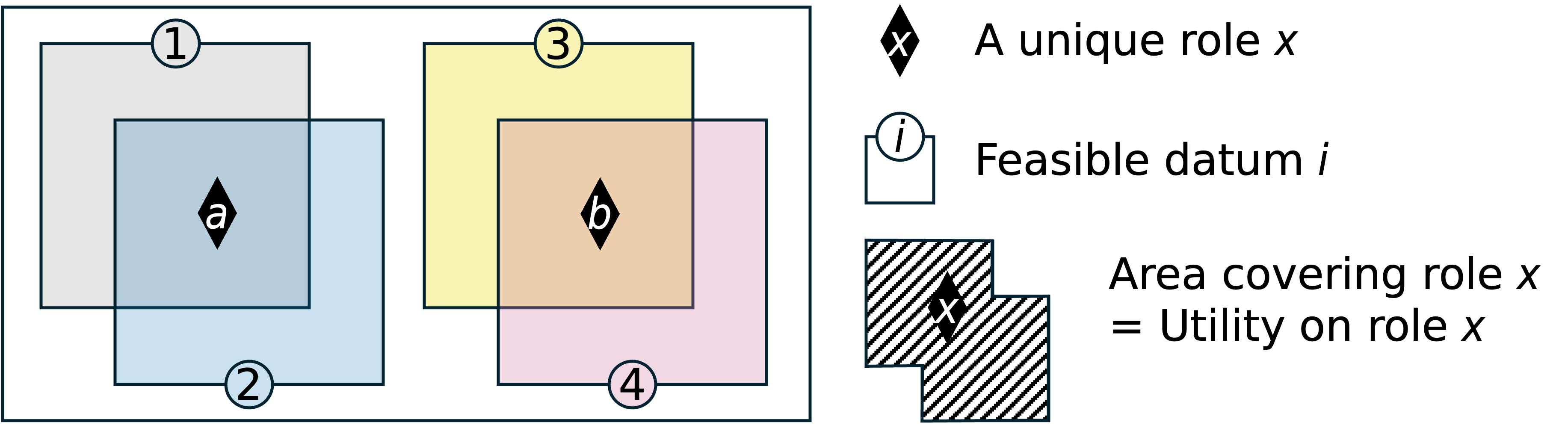}
    \caption{\textbf{Complex utility functions involving multiple roles (e.g., $\ua$) is not Shapley-informative.} In this toy illustration, all data have the same Shapley value. Yet, if \Circled{1} has been selected, \Circled{3} would contribute more than \Circled{2} as role $b$ is uncovered.}
    \label{fig:roles}
\end{figure}

To address \ref{itm:P1} while retaining the advantages of Data Shapley (\cref{sec:background}), in \cref{subsec:shapley-informative-components} we break down $\ua$ to simpler (component) utility functions $u_v$'s such that each $u_v$ corresponds to a single role (but each training datum has different contribution to this role) or a set of similar roles. Intuitively, each component $u_v$ would be Shapley-informative, which we theoretically prove. To address \ref{itm:P2}, in \cref{subsec:aggregate} we propose a non-linear aggregation of Shapley-informative components $u_v$'s as our optimization objective (i.e., proxy to $\ua$) and show that it can be optimized effectively and efficiently.

\begin{figure}
    \centering
    \begin{subfigure}[b]{0.325\linewidth}
        \centering
        \includegraphics[width=\linewidth]{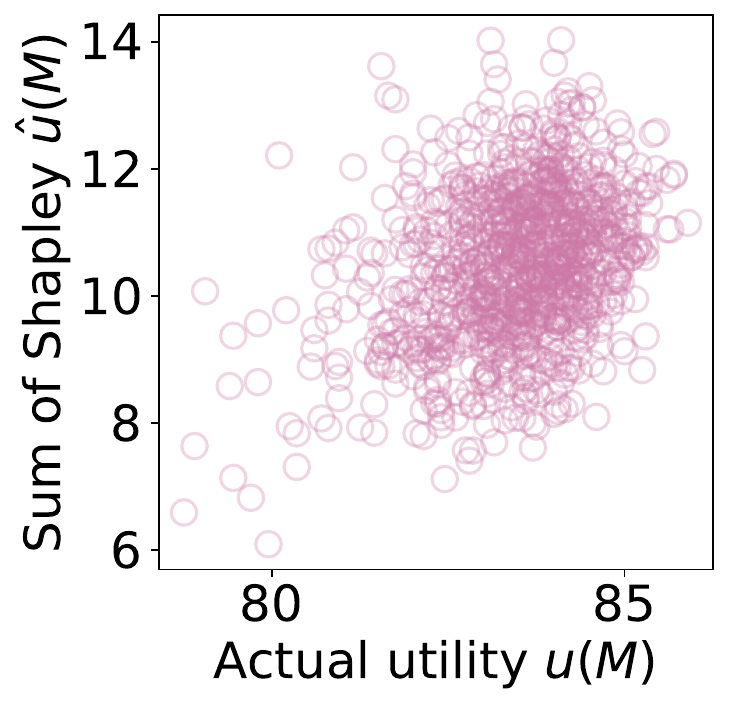}
        \captionsetup{justification=centering}
        \caption{\textbf{Validation\\accuracy $\ua$.}}
        \label{fig:validation-accuracy}
    \end{subfigure}
    \hfill
    \begin{subfigure}[b]{0.325\linewidth}
        \centering
        \includegraphics[width=\linewidth]{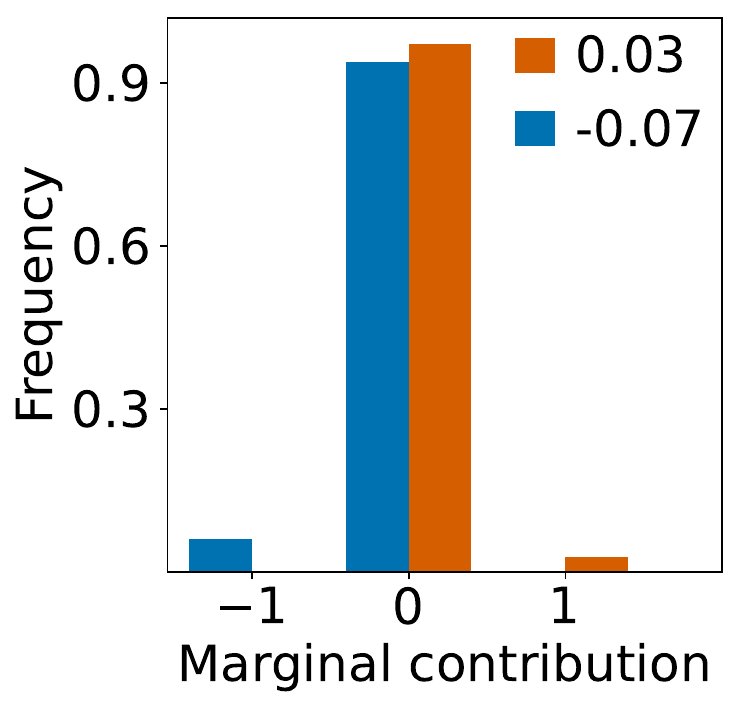}
        \captionsetup{justification=centering}
        \caption{\textbf{Consistent\\player.}}
        \label{fig:consistent-player}
    \end{subfigure}
    \hfill
    \begin{subfigure}[b]{0.325\linewidth}
        \centering
        \includegraphics[width=\linewidth]{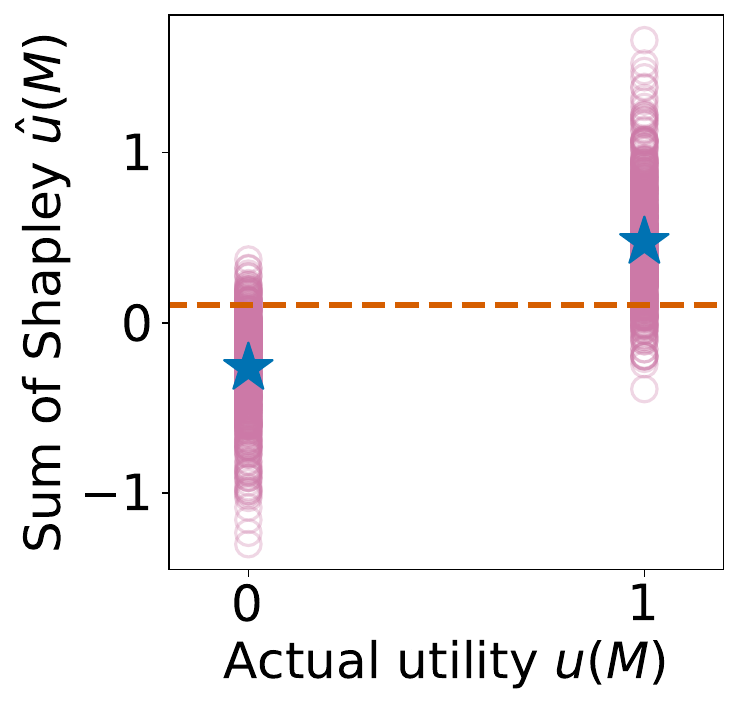}
        \captionsetup{justification=centering}
        \caption{\textbf{Prediction\\correctness $u_v$.}}
        \label{fig:prediction-correctness}
    \end{subfigure}
    \caption{\textbf{Illustration of Shapley-Informativeness.} \ref{fig:validation-accuracy} shows that validation accuracy is not Shapley-informative since the same sum of Shapley values $\hat{u}$ can correspond to a range of actual utility $u$; \ref{fig:consistent-player} shows the distribution of marginal contributions when two data with different goodness (can be seen from their Shapley values) join different subsets, and they both demonstrate consistency; \ref{fig:prediction-correctness} shows that prediction correctness on a single validation datum is Shapley-informative with regards to a proper threshold function.}
    \label{fig:shapley-informativeness}
\end{figure}

\subsection{Shapley-Informative Components} \label{subsec:shapley-informative-components}

In \cref{subsec:shapley-informativeness}, we motivate that \textbf{a utility function $u$ involving a single role is likely to be Shapley-informative}. Yet, for the complex objective $\ua$ that involves model training and evaluation, it is hard to know exactly what the \textit{single} roles are or how to break down $\ua$ to such roles (discussed further in App.~\ref{app:further-discussion-on-roles}). However, a natural way to decompose $\ua$ into \textit{simpler} components (that is likely to involve much fewer roles) is to exploit each individual validation datum $v \in V$ and define the utility function $u_v: 2^N \to \{0, 1\}$ such that $u_v(S) = 1$ when the ML model trained on coalition $S$ predicts validation datum $v$ correctly (i.e., \textit{prediction correctness}). Note that $\ua(S) = (1/|V|)\sum_{v \in V} u_v(S)$.

\textit{How is the new utility function $u_v$ simpler?} Intuitively, each training datum's contribution to a validation datum $v$ is clear-cut: it is either good, harmful, or irrelevant to $v$. The typical empirical behavior is shown in \cref{fig:consistent-player}, where a ``good'' player indicated by a positive Shapley value either improves $u_v$ from $0$ to $1$ or makes no  change, whereas a ``bad'' player indicated by a negative Shapley value either decreases $u_v$ from $1$ to $0$ or makes no change. 

Since $u_v$ is simpler and involves much fewer roles, we will now show that \textbf{$u_v$ is provably Shapley-informative}. In \cref{subsec:shapley-informativeness} we have explained why an arbitrary utility function is in general not Shapley-informative, thus suitable assumptions are needed to model $u_v$. Based on our earlier observations (e.g., \cref{fig:consistent-player}), we make the following Consistent Player assumption about $u_v$:

\begin{assumption}[Consistent player] \label{assump:consistent-player} Each training datum contributes \textit{consistently} w.r.t. $u_v$: A good datum is always good (contributes either $0$ or $1$, but never $-1$), and vice versa. Formally, each datum $i \in N$ is associated with a \textit{goodness indicator} $g_i \in \{+1, -1\}$ such that
\begin{equation}
    \forall S \subseteq N \setminus \{i\} \quad [\Delta_{u_v}(i|S) \in \{0, g_i\}]\ .
\end{equation}
\end{assumption}

In App.~\ref{app:consistent-player-practicality}, we show that empirically \cref{assump:consistent-player} has a low violation rate. Thus, the assumption is reasonable and necessary to establish a theoretical framework that designs and analyzes Shapley-informative utility functions (e.g., \cref{prop:uv-self-mappedness}). Note that it is more flexible than existing assumptions such as the MTM assumption \cite{wang2024rethinking} (in \cref{sec:background}) which requires the ranking of the contribution of data points to be the same across all coalitions.

In \cref{fig:prediction-correctness}, we plot the graph of sums of Shapley values $\uhat(M)$ against actual utilities $u(M)$ for different coalitions $M$. It is clear that $\uhat(M)$ is indicative of $u(M)$: if $\uhat(M)$ is larger than a proper threshold (dashed line in \cref{fig:prediction-correctness}), then $u(M) = 1$ with high probability, vice versa. Theoretically, we show that this is not a coincidence and is a result of the Consistent Player assumption (\cref{assump:consistent-player}):
\begin{proposition}[Informal] \label{prop:uv-self-mappedness}
    Suppose $u_v$ satisfies \cref{assump:consistent-player}. Let $f_\tau$ be the threshold function (i.e., $f_\tau(\hat{u}_v(S)) = \mathbb{I}[ \hat{u}_v(S) \geq \tau]$). Then at any selection size $m$, there exists $\tau \leq 1$ such that  $u_v$ is $(0, \delta)$-Shapley informative under mapping $f_\tau$. 
\end{proposition}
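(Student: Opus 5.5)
The plan is to make the informal statement precise by taking $\delta$ to be the smallest misclassification probability a threshold can achieve, and then to show that a minimiser can be chosen with $\tau \le 1$. Fix $m$ and let $\mathcal{M}$ be a uniformly random size-$m$ subset. For each $\tau$ set $\delta(\tau) \coloneq \Pr[u_v(\mathcal{M}) \ne f_\tau(\hat u_v(\mathcal{M}))]$. Because $u_v$ is $\{0,1\}$-valued and $f_\tau$ is $\{0,1\}$-valued, the event $|u_v(\mathcal{M}) - f_\tau(\hat u_v(\mathcal{M}))| \le 0$ is exactly agreement. So $u_v$ is $(0,\delta(\tau))$-Shapley-informative under $f_\tau$ by \cref{def:shapley-informativeness}.

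The function $\tau \mapsto \delta(\tau)$ is piecewise constant and changes only at the finitely many values $\hat u_v(M)$, so a minimiser exists. The content of the proposition is therefore that some minimiser satisfies $\tau \le 1$, together with the qualitative claim that the minimum is small. This is where \cref{assump:consistent-player} is used.

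\textbf{Step 1: structure from \cref{assump:consistent-player}.} Partition $N = G \cup B$ by the goodness indicators $g_i$. Then $u_v$ is monotone non-decreasing in the elements of $G$ and non-increasing in those of $B$. For $i \in G$, $\phi_i(u_v)$ is the probability, over a uniformly random permutation $\pi$, that inserting $i$ flips $u_v$ from $0$ to $1$. For $i \in B$, $-\phi_i(u_v)$ is the probability of a $1 \to 0$ flip. Hence
\[
\hat u_v(M) = \E_\pi\bigl[U_M(\pi) - D_M(\pi)\bigr],
\]
where $U_M(\pi)$ and $D_M(\pi)$ count the up-flips and down-flips caused by members of $M$ along $\pi$. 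Along any permutation the flips alternate, starting from $u_v(\emptyset)=0$. Consequently the total number of up-flips minus down-flips over all of $N$ is $u_v(N) \in \{0,1\}$.

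\textbf{Step 2: $\tau \le 1$ via a comparison at the threshold $1$.} I would show that raising the threshold above $1$ never reduces $\delta$. It suffices to prove the key lemma: $\hat u_v(M) \ge 1$ implies $u_v(M) = 1$. Then every $M$ with $\hat u_v(M) \in [1,\tau)$ is classified correctly by $f_1$, so $\delta(1) \le \delta(\tau)$ for every $\tau > 1$.

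To prove the lemma, I would condition on the relative order of $M$ and $N\setminus M$ in $\pi$. On permutations in which $M$ comes first, the flips caused by $M$ telescope to $u_v(M) - u_v(\emptyset) = u_v(M)$. For a general $\pi$, I would use the exchange property from Step 1 to compare with the $M$-first order. Moving a good element of $M$ later can only turn a $0 \to 1$ flip into a non-flip; this requires showing that the alternation forces the net flip count of $M$ to be at most $1$. Moving a bad element of $M$ later is handled symmetrically. Taking expectations would give $\hat u_v(M) \le 1$, with equality only when $u_v(M) = 1$.

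\textbf{Main obstacle.} The hard part is precisely this pointwise bound on the net flip count of $M$ along an arbitrary permutation. The alternation argument controls the net count for all of $N$, but not automatically for a subset, because elements of $N\setminus M$ interleave with those of $M$. I would first try to prove the bound by induction on adjacent transpositions, swapping an element of $M$ past an element of $N\setminus M$ and tracking how the flip pattern changes under \cref{assump:consistent-player}.

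If the pointwise bound turns out to be false, I would fall back on the weaker expectation-level statement. It still suffices to show that the coalitions $M$ with $\hat u_v(M) > 1$ and $u_v(M) = 0$ are no more numerous than those with $u_v(M) = 1$ in the same range. That is enough to move an optimal threshold down to $\tau = 1$ without increasing $\delta$.
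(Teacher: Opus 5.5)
Your proposal has a genuine gap in two places. The argument you give for $\tau\le1$ is false. And your reading of the statement leaves out what the paper actually proves, namely that $\delta$ is small.

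\textbf{The exchange step and the key lemma are false.} A bad element outside $M$ can reset $u_v$ to $0$ between two members of $M$. So moving a good member of $M$ later can \emph{create} an up-flip rather than remove one. The net flip count of $M$ is therefore not bounded by $1$, and $\hat{u}_v(M)\le1$ fails. For example, on $N=\{1,2,3\}$ let $u_v(S)=\mathbb{I}[1\in S]\cdot\mathbb{I}[3\notin S \text{ or } 2\in S]$. It satisfies \cref{assump:consistent-player} with $g_1=g_2=+1$ and $g_3=-1$, and its Shapley values are $(5/6,1/3,-1/6)$. Hence $\hat{u}_v(\{1,2\})=7/6$, and along $\pi=(1,3,2)$ the set $\{1,2\}$ makes two up-flips and no down-flip.

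The implication $\hat{u}_v(M)\ge1\Rightarrow u_v(M)=1$ also fails, so no other argument can rescue it. Take a good player $a$, good players $C$ and bad players $B$ with $|B|=|C|=6$, and set $u_v(S)=\mathbb{I}[a\in S]\cdot\mathbb{I}[|S\cap B|\le|S\cap C|]$, which is consistent. Then $\hat{u}_v(C)$ equals $1-\phi_a(u_v)$ plus the expected number of $0\to-1$ steps of the walk $|S\cap C|-|S\cap B|$ after $a$ arrives. This works out to $1024/1001>1$, while $u_v(C)=0$. At $m=6$ the best threshold in this example still lies below $1$, so the lemma is stronger than what $\tau\le1$ requires. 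Your fallback restates the needed comparison without giving an argument for it.

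\textbf{What the paper proves instead.} Setting $\delta$ equal to the best achievable error makes the statement vacuous: every $\tau$, including $\tau=1$, is trivially $(0,\delta(\tau))$-informative. The real content, that $\delta$ is small, is never argued. The paper's formal version does not use or prove $\tau\le1$; it simply places $\tau$ at the midpoint of the two conditional means. It gets smallness from a monotone coupling followed by a concentration bound:
\begin{itemize}
\item Under Bernoulli$(m/n)$ sampling $\rMi$, set $Z_i=\mathbb{I}[i\in\rMi]$ if $g_i=+1$ and $Z_i=\mathbb{I}[i\notin\rMi]$ if $g_i=-1$. The $Z_i$ are independent, and $u_v$ is non-decreasing in each of them.
\item Since $\phi_i(u_v)$ has the sign of $g_i$, $\hat{u}_v(\rMi)=\sum_i|\phi_i(u_v)|Z_i+\text{const}$ is also non-decreasing in each $Z_i$.
\item Harris--FKG then gives nonnegative covariance, i.e.\ $\gamma\ge0$, with equality only if all Shapley values vanish (\cref{lem:conditional-expectation-separation}).
\item Bernstein's inequality applied to this linear statistic gives $\delta=\exp(-\gamma^2/(8\mathcal{V}+\tfrac43\phimax\gamma))$, and a Stirling estimate passes to exact size $m$.
\end{itemize}
Your flip-counting analysis contains neither the coupling nor the concentration step, so it gives no bound on $\delta$.
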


The formal proposition (including the closed form of $\delta$) and proof are given in App.~\ref{app:consistent-player-implies-shapley-informativeness}. To summarize, we first show that \cref{assump:consistent-player} leads to \cref{lem:conditional-expectation-separation} (in App.~\ref{app:consistent-player-implies-separation-of-conditional-expectation}), which states that the expected sum of Shapley values for subset $\mathcal{M}_0$ (whose actual utility $u(\mathcal{M}_0)$ is $0$) is smaller than the expected sum of Shapley values for subset $\mathcal{M}_1$ (whose actual utility $u(\mathcal{M}_1)$) is $1$ (let $\gamma$ denote their gap). This can also be empirically observed in \cref{fig:prediction-correctness} where the two expectations are shown as \textcolor{myblue}{$\bigstar$}. Then, by showing how each subset $M$'s sum of Shapley values concentrates around its corresponding expectation, we can bound the probability that the sum of Shapley values falls outside the threshold. The bound given in \cref{prop:uv-self-mappedness} is realistic. For example, in App.~\ref{app:consistent-player-implies-shapley-informativeness}, we give an example with $\delta = 3.6\mathrm{e}{-3}$ at selection size $400$ out of $2000$ data.

\begin{remark*}
Prior works \citep{ghorbani2019data, wang2024rethinking} demonstrate that Data Shapley w.r.t. validation accuracy $\ua$ works well for heterogeneous-quality datasets (e.g., with corrupted data). This aligns with our analysis because \cref{assump:consistent-player} is partially satisfied: The corrupted data are consistently bad, whereas the uncorrupted data are generally better. Hence, Data Shapley can distinguish the corrupted data from the uncorrupted ones well.
\end{remark*}

\begin{remark*}
The time taken for computing the Shapley value vectors $\bm\phi_{u_v}$'s for all validation data $v \in V$ is the same as the time taken for computing the Shapley value vector $\bm\phi_{\ua}$ (validation accuracy), because validation accuracy is computed by averaging every prediction correctness. Hence, \textbf{computing all $\bm\phi_{u_v}$'s does not incur extra time cost}.
\end{remark*}

Given the Shapley-informative components $u_v$'s, the next question becomes: \textit{How do we aggregate these $u_v$'s to get the highest validation accuracy $\ua$ on the entire validation set?} In the following section, we explain why and how a non-linear aggregation should be made.

\subsection{\nash: Non-Linear Aggregation of Shapley-Informative Components} \label{subsec:aggregate}

Given that simpler utility functions (e.g., prediction correctness $u_v$) are Shapley-informative (i.e., sum of Shapley values $\uhat_v$ indicates $u_v$ well), \textit{how should one properly aggregate these Shapley-informative components to optimize the true, complex objective (e.g., validation accuracy $\ua$)?} At first glance, readers may think of maximizing the \textit{average}/\textit{sum} of all $\uhat_v$'s. However, we have argued in \cref{subsec:shapley-informativeness} \ref{itm:P2} that the true objective $\ua$, which aggregates different roles, cannot be represented as a linear aggregation. Indeed, by Linearity of the Shapley value (App.~\ref{app:equitability-axioms}), 
$
(1/|V|)\sum_{v \in V} \uhat_v(M) = (1/|V|)\sum_{v \in V} \sum_{i \in M} \phi_i(u_v) = \sum_{i \in M} \phi_i\left((1/|V|) \sum_{v \in V}  u_v\right) = \uhat_V(M)
$.
Maximizing the average of all $\uhat_v$'s is equivalently maximizing the sum of Shapley values w.r.t. validation accuracy $\ua$ (i.e., top-$m$ Shapley values), which is ineffective.

Now recall \cref{prop:uv-self-mappedness}. Suppose we know the threshold $\tau_v$ corresponding to each validation datum $v \in V$. Then, if the sum of Shapley values $\uhat_v(M) \geq \tau_v$, then $u_v(M) = 1$ (and $v$ is predicted correctly) with high probability. Optimizing validation accuracy $\ua$ is equivalently optimizing the number of validation points $v$ on which the threshold $\tau_v$ is met:
\begin{equation} \label{eqn:vanilla-objective} \textstyle
\max_{\substack{M \subseteq N \\ |M|=m}} \quad \sum_{v \in V} \mathbb{I}\left[\uhat_v(M) \geq \tau_v\right],
\end{equation}
where $\mathbb{I}[\cdot]$ is the \textit{indicator function} which evaluates to $1$ if condition $\cdot$ is met and $0$ otherwise. 
However, estimating each validation datum $v$'s threshold $\tau_v$ is computationally challenging, as one needs to gather a large number of samples $M$ for which $u_v(M) = 0$ and $1$ respectively for each validation datum $v \in V$. \textit{Can we relax Obj.~\eqref{eqn:vanilla-objective} to one that does not depend on the exact $\tau_v$'s so that it is computationally feasible?}

Let $\Tau$ be the random variable denoting the threshold value $\tau_v$ of each validation datum $v$ drawn from $V$. Let $p_\Tau$ denote the distribution of $\Tau$ and $F_\Tau$ denote the cumulative distribution function (c.d.f.) of $p_\Tau$. Since the exact $\tau_v$'s are unknown, one can instead maximize the expectation of Obj.~\eqref{eqn:vanilla-objective}. The expectation of the indicator $\mathbb{I}\left[\uhat_v(M) \geq \tau_v\right]$ is $\Pr[\uhat_v(M) \geq \Tau] = F_\Tau(\uhat_v(M))$. 
Thus, the expected objective is equivalent to the following:
\begin{definition}[\nash\ Objective\footnote{Fun fact: When $F_\Tau$ is set based on the logarithmic law, \nash\ optimizes the \textit{Nash welfare} \citep{nash1950bargaining} across all validation data $v \in V$.}] \label{def:nash-objective}
Given Shapley informative components $u_v$'s and the corresponding Shapley values, the \textit{\nash\ objective} is defined as
\begin{equation} \label{eqn:nash-objective} \textstyle
\max_{\substack{M \subseteq N \\ |M|=m}} \quad \sum_{v \in V} F_\Tau\left(\uhat_v(M)\right).
\end{equation}
\end{definition}
The next step is to set $F_\Tau$ and optimize the \nash\ objective.

\paragraph{Choice of $F_\Tau$.} The closed form of $F_\Tau$ can be inspired from the \textit{learning curves} \citep{viering2022shape}, which describe how model utility changes as one scales the number of training data $m$ in the \emph{general} ML setting (where the training set $\mathcal{M} \subseteq N$ with $|\mathcal{M}| = m$ is selected uniformly at random). It is known that such curves can be well fitted by concave functions such as an exponential law (i.e., $u(\mathcal{M}) \approx \umax - \upalpha \exp{(- \upbeta m)}$) or a power law (i.e., $u(\mathcal{M}) \approx \umax - \upalpha m^{- \upbeta}$) \citep{viering2022shape}.
In this random selection setting, the expected sum of Shapley values $\mathbb{E}[\uhat_v(\mathcal{M})]$ scales \textit{linearly} with $m$: $\mathbb{E}[\uhat_v(\mathcal{M})] = (m/n) \cdot \uhat_v(N) = m/n$,
because each feasible datum is equally likely to be included in $\mathcal{M}$. 
Thus, since the learning curves inform how model utility increases as $m$ increases, they also inform how likely the threshold $\Tau$ is met as the (expected) $\uhat_v(\mathcal{M})$ increases. 
For example, given the exponential law and $m$, we can set $F_\Tau$ such that it has the same value as the law when $\uhat_v(\mathcal{M})$ takes on the expected value $(m/n) \cdot \uhat_v(N)$ under random selection, i.e., $F_\Tau(x) \coloneq \umax - \upalpha\exp{(- \uplambda x)}$ for $\uplambda \coloneq \upbeta n / \uhat_v(N)$.\footnote{\label{foot:hyperparameter}Since Obj.~\eqref{eqn:nash-objective} is a summation of $F_\Tau$'s, the values of $\umax$ and $\upalpha$ do not matter and one only needs to tune $\uplambda$ in practice.}
In App.~\ref{app:ablation-ftau}, we compare across $F_\Tau$'s based on different laws and the above exponential law gives the best performance. 

\begin{algorithm}[!b]
\caption{\textbf{The \nash\ algorithm.} We consider the exponential law-based $F_\Tau$ with one hyperparameter $\uplambda$ (see Footnote~\ref{foot:hyperparameter} for why only the $-\exp$ term remains).}
\label{alg:nash}
\begin{algorithmic}[1] \footnotesize
\NoNumber{{\bfseries Input:} Feasible set $N$, validation set $V$, size $m$, lambda $\uplambda$.}
\NoNumber{{\bfseries Output:} Selected set $M$. }
\STATE Pre-compute the Shapley value vectors $\bm\phi_{u_v}$ for all $v \in V$
\STATE $M \gets \emptyset$
\FOR{round $t = 1, 2, \cdots, m$}
    \STATE $i^* \gets \argmax_{i \in N \setminus M} \sum_{v \in V} -\exp(-\uplambda \uhat_v(M \cup \{i\}))$
    \STATE $M \gets M \cup \{i^*\}$
\ENDFOR
\STATE \textbf{return} selected set $M$
\end{algorithmic}
\end{algorithm}

At a high level, the \nash\ objective \eqref{eqn:nash-objective} addresses \ref{itm:P2} (recall that \ref{itm:P1} is addressed by \cref{subsec:shapley-informative-components}): When $F_\tau$ is concave and increasing, it would be increased by a larger extent by a datum that contributes to uncovered roles/validation data that are predicted wrongly (i.e., thresholds not yet met). Thus, a datum would have different contributions to different sets depending on the strengths of data in these sets. \cref{fig:qualitative-analysis} provides an empirical example that further illustrates how \nash\ addresses \ref{itm:P2}: (1) From the histogram of Shapley (top), validation data with higher sum of Shapley values $\uhat_v(M)$ (so more likely $\uhat_v(M) \geq \tau_v$) are more likely to be correctly predicted, vice versa; (2) Data Shapley accumulates unnecessarily high sum of Shapley values $\uhat_v(M)$'s on some validation data (represented by the heavy right tail); (3)
the subset selected by \nash\ (bottom) results in a higher sum of Shapley values $\uhat_v(M)$ on the validation data that are wrongly predicted by the subset selected by vanilla Shapley, making them more likely to exceed the thresholds and become correctly predicted. This explains why \nash\ accounts for more roles through a non-linear aggregation and leads to a much better performance.

\begin{figure}
    \centering
    \includegraphics[width=0.65\linewidth]{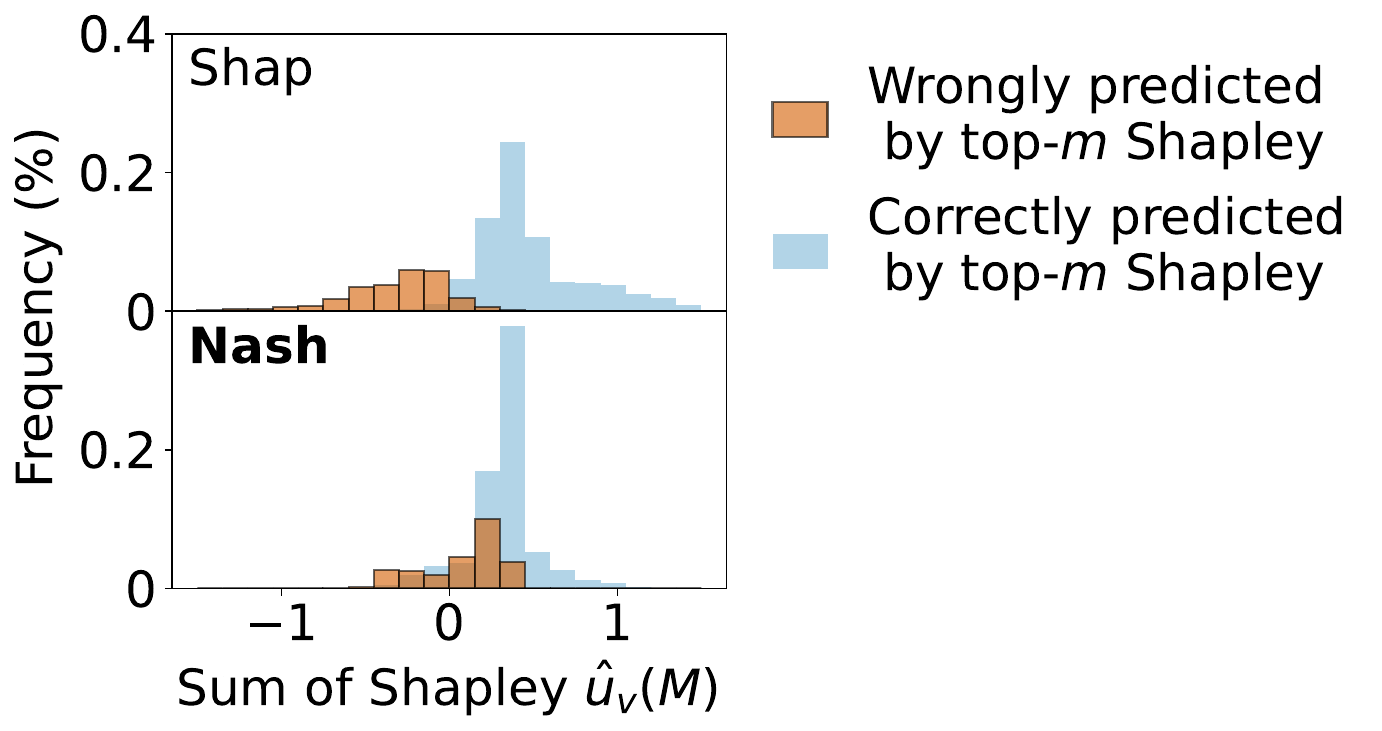}
    \caption{\textbf{Qualitative analysis of subsets selected by Data Shapley and \nash.} We use a $40\%$ subset of our \textbf{PO-LR} setting (see \cref{sec:experiments}) and show histograms of $\uhat_v(M)$ on different validation data $v \in V$ or roles.}
    \label{fig:qualitative-analysis}
\end{figure}

\paragraph{Optimization of \nash\ objective \eqref{eqn:nash-objective}.} \cref{alg:nash} gives the $\nash$ algorithm that maximizes the \nash\ objective. Specifically, the \nash\ objective \eqref{eqn:nash-objective} can be maximized effectively and efficiently using the greedy algorithm (i.e., adding the datum that increases Obj.~\eqref{eqn:nash-objective} by the most at each round). When all Shapley values are non-negative, Obj.~\eqref{eqn:nash-objective} is monotone submodular because it is a sum of concave over modular functions and the greedy algorithm gives a $(1-1/e)$-approximation \citep{nemhauser1978analysis}. For datasets where not all Shapley values are non-negative, we notice that the main source of non-submodularity is the harmful/noisy data (indicated by very low $\phi_i(\ua)$), which are not desirable. For example, in our \textbf{WD-LR} experiment, the top $80\%$ feasible data gives a submodularity ratio $\approx 0.85$. Note that the excluded harmful data will not be preferred by the greedy algorithm.
In App.~\ref{app:greedy-is-good}, we empirically demonstrate the effectiveness of the greedy algorithm.

The \nash\ algorithm introduces minimal additional runtime cost to existing methods that compute/approximate the Shapley values/semivalues. Calculation of all $\phi_i(u_v)$'s incurs almost no additional runtime to the traditional $\phi_i(\ua)$ because one already obtains all $u_v(S)$'s to calculate their average $\ua(S)$ for each sampled subset $S$. The greedy algorithm is also efficient in our case since all operations can be vectorized. For example, selecting $\num{10000}$ out of $\num{20000}$ training data w.r.t. $\num{5000}$ validation data takes only $\sim 53.4$ seconds wallclock runtime. We also demonstrate that the performance of \nash\ is robust to the choice of the only hyperparameter $\uplambda$ in \cref{app:sensitivity-to-lambda}. Therefore, \nash\ can be paired with any efficient approximation of Shapley values/semivalues to select data effectively.

\section{Experiments} \label{sec:experiments}

In this section, we demonstrate that data selection with \nash\ leads to substantial improvements over selection with Data Shapley and other baselines via top-$m$ heuristic. In the main paper, we provide results on standard data valuation tasks including training logistic regression (\textbf{LR}) models on Wind (\textbf{WD}) \citep{vanschoren2014wind}, Pol (\textbf{PO}) \citep{vanschoren2014pol} and Phoneme (\textbf{PM}) \citep{grin2022phoneme} datasets, as well as training ridge regression (\textbf{RR}) models on Physiochemical Protein (\textbf{PP}) \citep{2022pp} and Auction (\textbf{AU}) \citep{2022au} datasets.
Furthermore, we consider
\textbf{larger-scale prompt-based finetuning of language models} including BERT (\textbf{BT}) \citep{devlin2019bert} and Llama-2-7B (\textbf{LM}) \citep{touvron2023llama2} models on Rotten Tomatoes Movie Review (\textbf{MR}) \citep{pang2005seeing}, Microsoft Research Paraphrase Corpus (\textbf{MP}) \citep{dolan2005automatically} and Recognizing Textual Entailment (\textbf{RT}) \citep{bentivogli2009fifth} datasets. \textbf{This matches the largest scale of related works on improving the efficiency of Data Shapley to the best of our knowledge.} We use notations like \textbf{WD-LR} to indicate the specific dataset-model combinations. For the efficient approximation of Shapley values and semivalues (see \cref{sec:related-works}), we adopt \textit{(truncated) MC sampling} \citep{ghorbani2019data} for Shapley values, \textit{least squares approximation} \citep{li2024faster} for semivalues, and \textit{FreeShap} \citep{wang2024freeshap} for language models. Detailed descriptions of our setups and more experiments are included in App.~\ref{app:experiments}.

\paragraph{Baselines.} For the standard data valuation tasks (\textbf{WD-LR}, \textbf{PO-LR}, \textbf{PM-LR}), we compare \nash\ against random selection and vanilla Data Shapley as in \citet{wang2024rethinking}. 
For the finetuning tasks (\textbf{MR-BT}, \textbf{MP-BT}, \textbf{RT-LM}), we additionally consider other data valuation-based data selection methods (via top-$m$ heuristic) including influence function  \citep{koh2017understanding}, TracIn  \citep{pruthi2020estimating} and Representer Point \citep{yeh2018representer} as in \citet{wang2024freeshap}. We give a brief introduction of these methods in App.~\ref{app:other-data-selection}.

\begin{figure}
    \centering
    \begin{subfigure}[b]{0.325\linewidth}
        \centering
        \includegraphics[width=\linewidth]{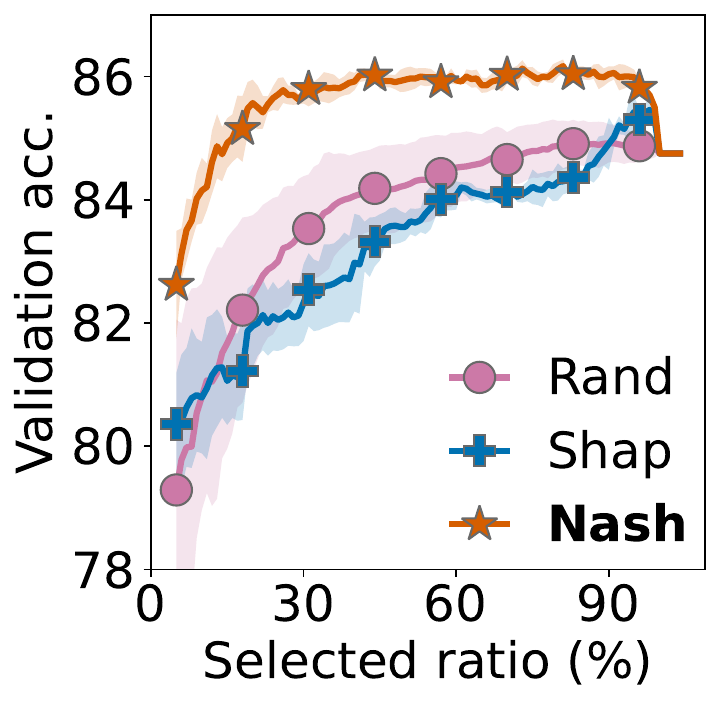}
        \caption{\textbf{WD-LR.}}
        \label{fig:wd-lr}
    \end{subfigure}
    \hfill
    \begin{subfigure}[b]{0.325\linewidth}
        \centering
        \includegraphics[width=\linewidth]{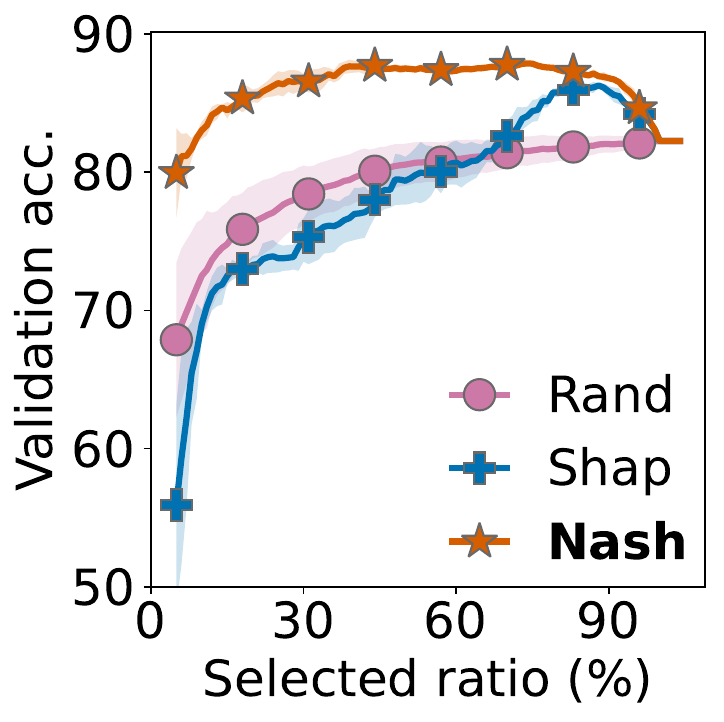}
        \caption{\textbf{PO-LR.}}
        \label{fig:po-lr}
    \end{subfigure}
    \hfill
    \begin{subfigure}[b]{0.325\linewidth}
        \centering
        \includegraphics[width=\linewidth]{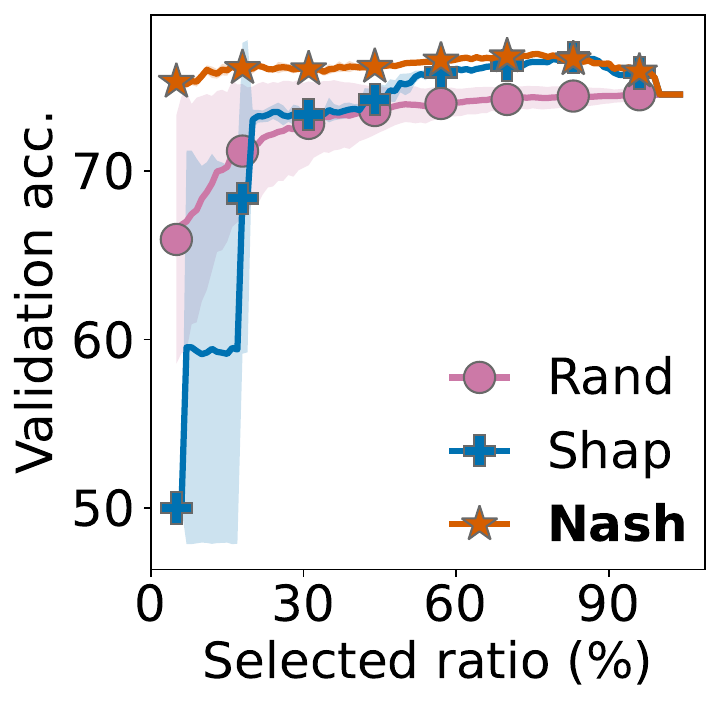}
        \caption{\textbf{PM-LR.}}
        \label{fig:pm-lr}
    \end{subfigure}
    \\
    \begin{subfigure}[b]{0.325\linewidth}
        \centering
        \includegraphics[width=\linewidth]{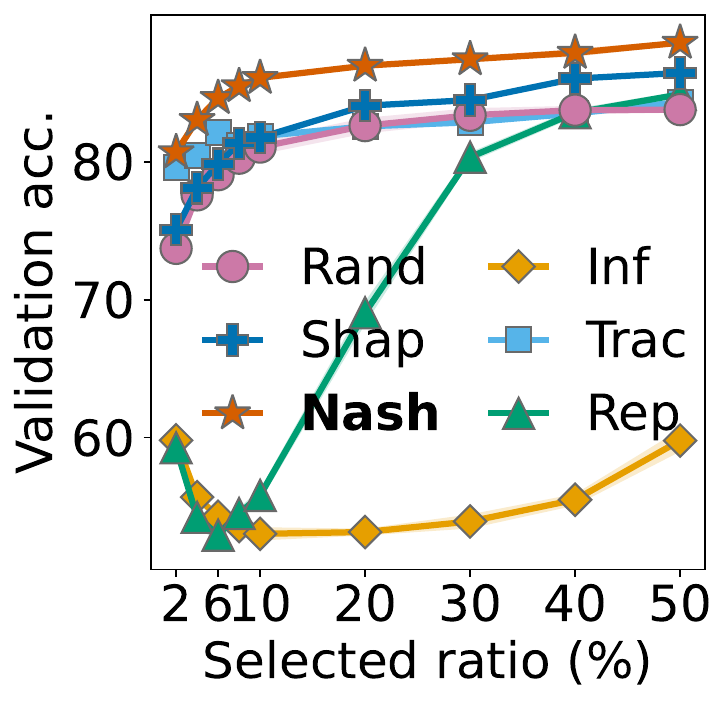}
        \caption{\textbf{MR-BT.}}
        \label{subfig:data-selection-mr-bt}
    \end{subfigure}
    \hfill
    \begin{subfigure}[b]{0.325\linewidth}
        \centering
        \includegraphics[width=\linewidth]{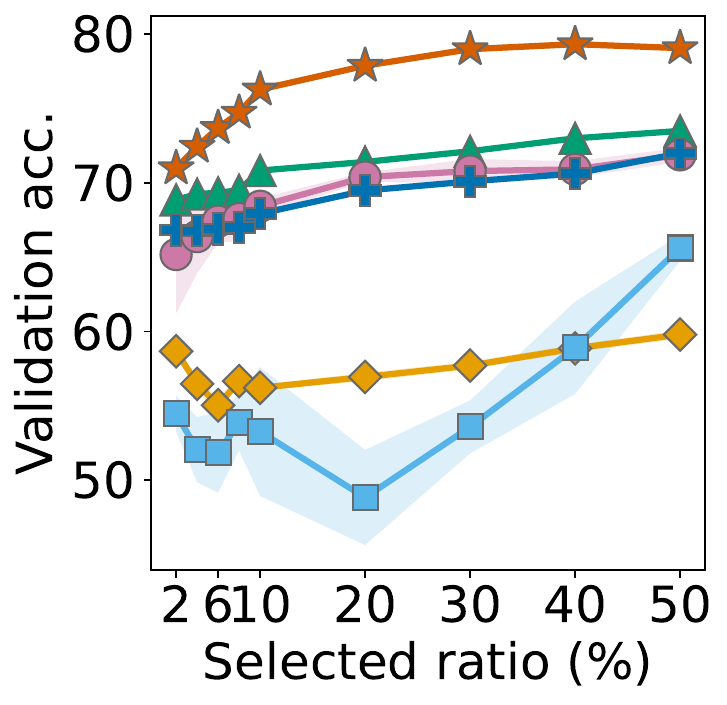}
        \caption{\textbf{MP-BT.}}
        \label{subfig:data-selection-mp-bt}
    \end{subfigure}
    \hfill
    \begin{subfigure}[b]{0.325\linewidth}
        \centering
        \includegraphics[width=\linewidth]{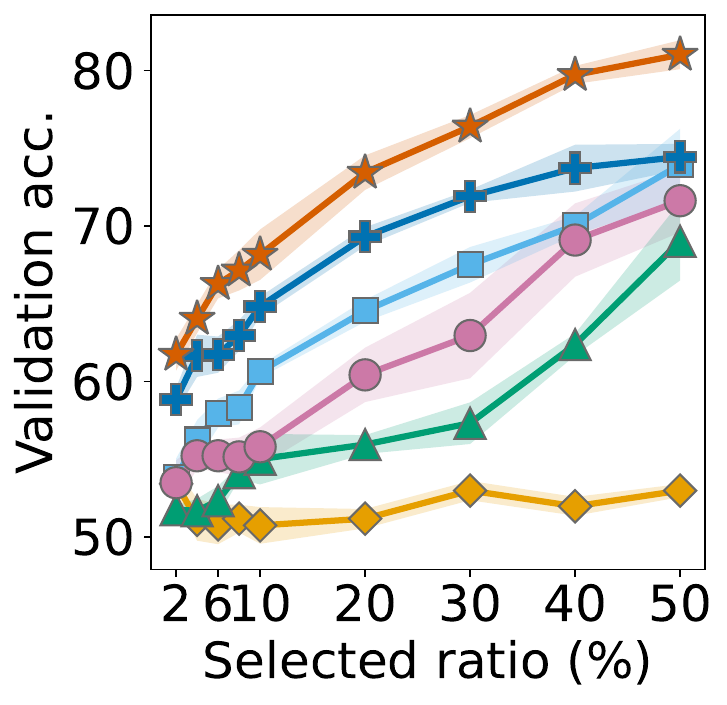}
        \caption{\textbf{RT-LM.}}
        \label{subfig:data-selection-rt-lm}
    \end{subfigure}
    \caption{\textbf{General data selection performance.} Different ratios of training data are selected and evaluated using validation accuracy. \nash\ consistently outperforms other baselines.
    }
    \label{fig:data-selection}
\end{figure}

\subsection{General Data Selection} \label{subsec:general-data-selection}

In this section, we focus on the general data selection setting on natural datasets, where there is no injected noise and the data quality is less heterogeneous. Prior works \citep{wang2024rethinking} claim that Data Shapley performs poorly and no better than random in this setting, which we also analyze in \cref{subsec:shapley-informativeness}. \textit{Is Data Shapley really not better than random and does \nash\ solve this issue \emph{(we have argued that \nash\ is effective in this setting too as it solves \ref{itm:P1} and \ref{itm:P2})}?} The results are shown in \cref{fig:data-selection}. In all experiments, \nash\ brings a substantial improvement to all baselines at every selection size, demonstrating its consistent effectiveness in data selection. Additional results are given in App.~\ref{app:general-data-selection}.

For the larger-scale finetuning tasks, some datasets are inherently of heterogeneous quality to some extent. Thus, vanilla Data Shapley sees a better performance in Fig. \ref{subfig:data-selection-mr-bt} and \ref{subfig:data-selection-rt-lm}, which confirms its effectiveness on heterogeneous datasets \citep{wang2024rethinking}. Additionally, it performs better than several other baselines that do not consider interactions (e.g., influence function), which validates its non-myopic and equitable benefits as introduced in \cref{sec:background}. That said, \nash\ still consistently improves over Data Shapley and outperforms other baselines. This shows that the benefits brought by \nash, such as addressing \ref{itm:P1} and \ref{itm:P2}, are separate from the benefits of Data Shapley. To further validate this, in the following section we further compare the performance of \nash\ and vanilla Data Shapley in the setting of heterogeneous-quality datasets.

\begin{figure}
    \centering
    \begin{subfigure}[b]{0.325\linewidth}
        \centering
        \includegraphics[width=\linewidth]{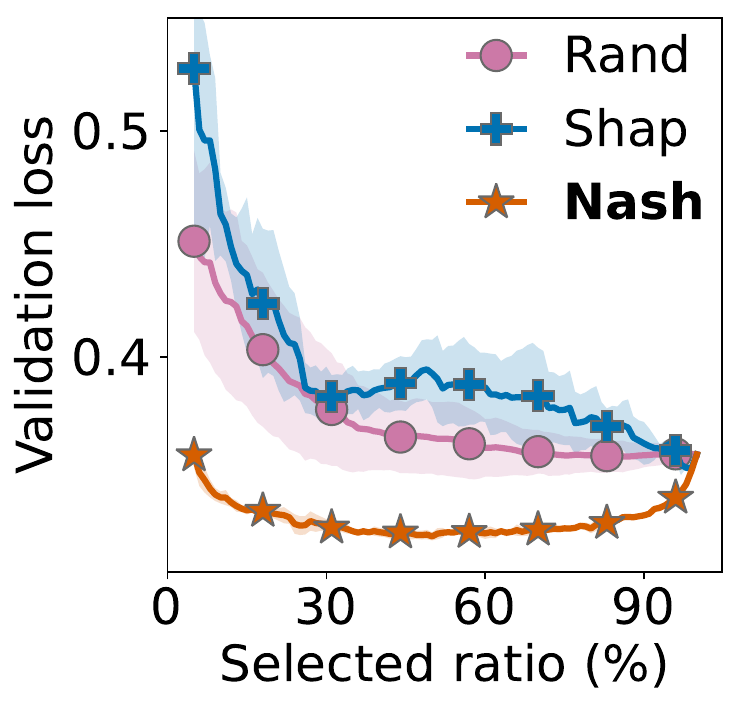}
        \caption{\textbf{WD-LR loss.}}
        \label{fig:wd-lr-loss}
    \end{subfigure}
    \hfill
    \begin{subfigure}[b]{0.325\linewidth}
        \centering
        \includegraphics[width=\linewidth]{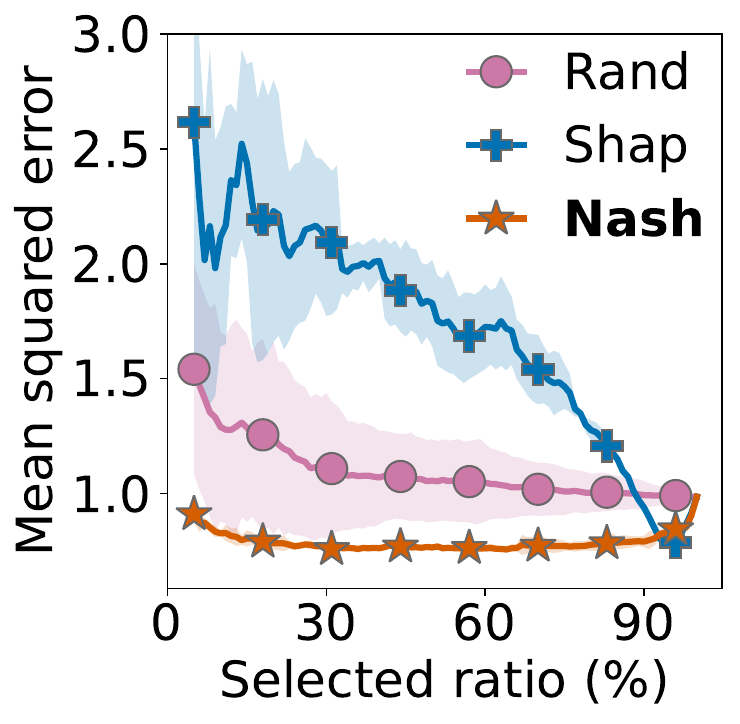}
        \caption{\textbf{PP-RR.}}
        \label{fig:pp-rr}
    \end{subfigure}
    \hfill
    \begin{subfigure}[b]{0.325\linewidth}
        \centering
        \includegraphics[width=\linewidth]{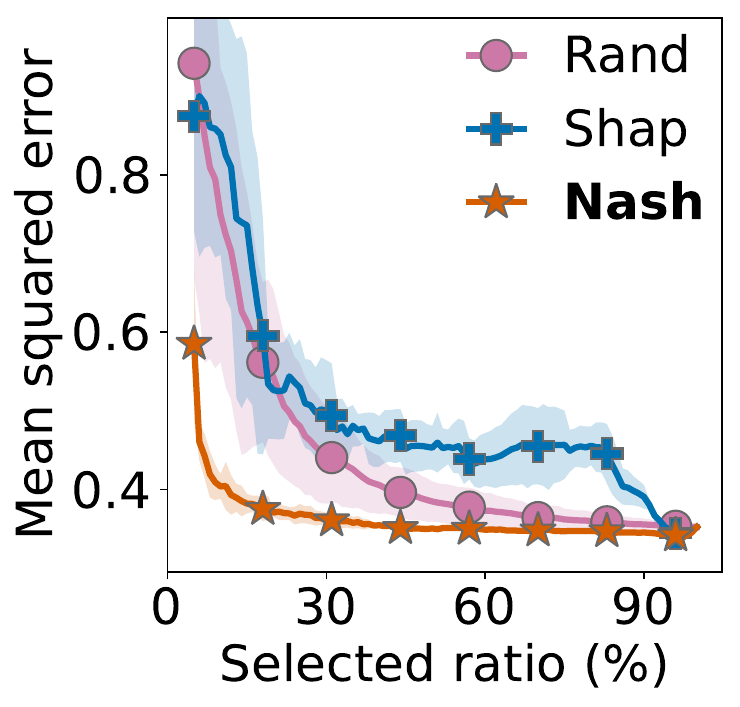}
        \caption{\textbf{AU-RR.}}
        \label{fig:au-rr}
    \end{subfigure}
    \caption{\textbf{Data selection performance using alternative utility functions.} \cref{fig:wd-lr-loss} uses validation loss; \cref{fig:pp-rr} and \ref{fig:au-rr} consider regression tasks and use (negated) mean squared error given by \textbf{RR} as the utility function. \nash\ still consistently outperforms other baselines.}
    \label{fig:data-selection-alt-utility-function}
\end{figure}

While our theoretical analysis in \cref{sec:methodology} focuses on validation accuracy $\ua$/prediction correctness $u_v$, our insights naturally generalize to other tasks and utility functions: Complex utility functions involve multiple roles which cannot be captured by a single Shapley value due to \ref{itm:P1} and \ref{itm:P2} in \cref{subsec:shapley-informativeness}, while \nash\ considers the interaction among these roles by non-linearly aggregating them.
We empirically verify this in \cref{fig:data-selection-alt-utility-function} using alternative utility functions. We observe the same trend: Data Shapley can be worse than random, while \nash\ remains consistently effective.
 Additional results are shown in \cref{appfig:other-utility-functions}.

\subsection{Heterogeneous-Quality Datasets}

In \cref{subsec:shapley-informative-components}, we note that prior works \citep{ghorbani2019data, wang2024rethinking} demonstrate that Data Shapley works well on heterogeneous-quality datasets with corrupted/noisy data. Since this setting is fairly common for real-life datasets, \textit{is our proposed method \nash\ still more effective than vanilla Data Shapley in this setting?} To answer this question, we inject different proportions of label noises to the training data and perform data selection. 

\cref{fig:data-selection-noisy} gives the results. As the proportion of noises increases, the performance of Data Shapley becomes better as expected. Yet, \nash\ still consistently improves over it. This is because \nash\ can also identify the corrupted data: These data are likely harmful to all validation data, thus having a small/negative Shapley value w.r.t. every component $u_v$. Adding such data would harm the \nash\ objective.
Therefore, \nash\ works well with both homogeneous-quality and heterogeneous-quality datasets. Additional results are given in \cref{app:heterogeneous-quality-datasets}.

\subsection{Compatibility with Other Semivalues} \label{subsec:other-semivalues}

In this section, we investigate whether our \nash\ framework can be used with other semivalues (e.g., Beta Shapley \citep{kwon2022beta}, Data Banzhaf \citep{wang2023data}), since all these semivalues (via top-$m$ heuristics) face the same challenge discussed in \cref{subsec:shapley-informativeness} (i.e., \ref{itm:P1} and \ref{itm:P2}). The results are shown in \cref{fig:other-semivalues} (more in App.~\ref{app:compatibility-with-other-semivalues}). Indeed, vanilla semivalues such as Data Banzhaf have ineffective performances due to the issues given in \cref{subsec:shapley-informativeness} and may perform no better than random. On the other hand, \nash\ effectively addresses these issues and greatly improves the model performances. 

We also include an ablation study in App.~\ref{app:ablation-semi} to explore how different choices of semivalues may affect data selection performances when paired with \nash. Firstly, we notice that semivalues that consider more interactions and are thus more non-myopic, such as Data Shapley and Data Banzhaf, are better than semivalues that consider less interactions such as LOO, even when paired with \nash. Secondly, we discover that when the selection size is very small, semivalues that assign larger weights to smaller coalitions such as Beta$(4, 1)$ could have an advantage. This is because consideration of larger coalitions (thus less myopic) is less useful when only a small number of data are to be selected. However, when the selection size is very large, semivalues that assign larger weights to larger coalitions such as Beta$(1, 4)$ do not lead to good performance. Since these values overlook small coalitions, less helpful data might be chosen at the early stage which negatively affects model performance.

\begin{figure}
    \captionsetup[subfigure]{justification=centering,singlelinecheck=false}
    \centering
    \begin{subfigure}[b]{0.325\linewidth}
        \centering
        \includegraphics[width=\linewidth]{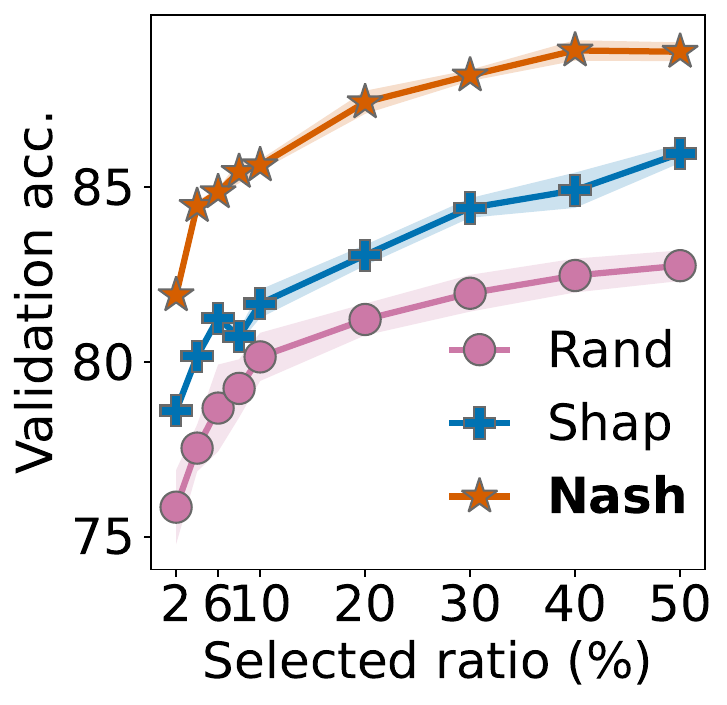}
        \caption{\textbf{MR-BT\\$10\%$ flip.}}
    \end{subfigure}
    \hfill
    \begin{subfigure}[b]{0.325\linewidth}
        \centering
        \includegraphics[width=\linewidth]{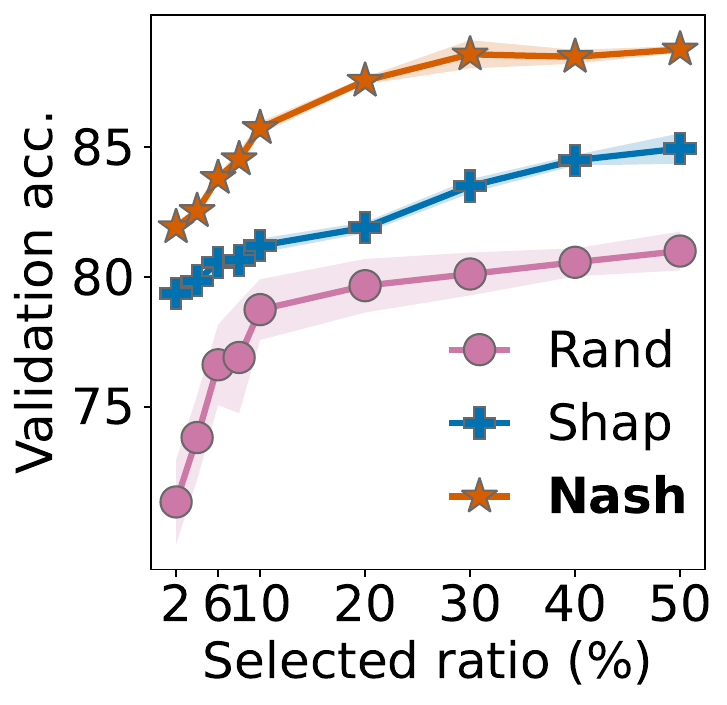}
        \caption{\textbf{MR-BT\\$20\%$ flip.}}
        \label{fig:mr-bt-flip20}
    \end{subfigure}
    \hfill
    \begin{subfigure}[b]{0.325\linewidth}
        \centering
        \includegraphics[width=\linewidth]{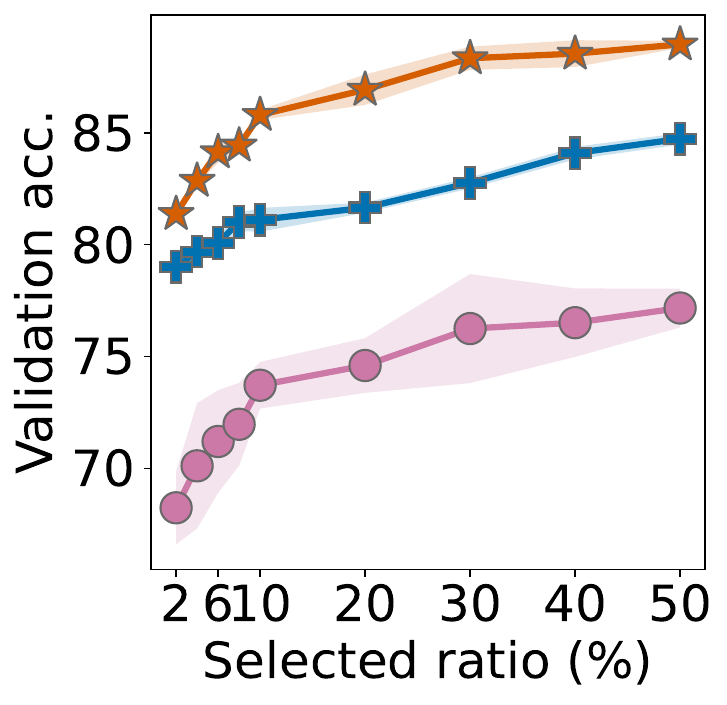}
        \caption{\textbf{MR-BT\\$30\%$ flip.}}
    \end{subfigure}
    \\
    \begin{subfigure}[b]{0.325\linewidth}
        \centering
        \includegraphics[width=\linewidth]{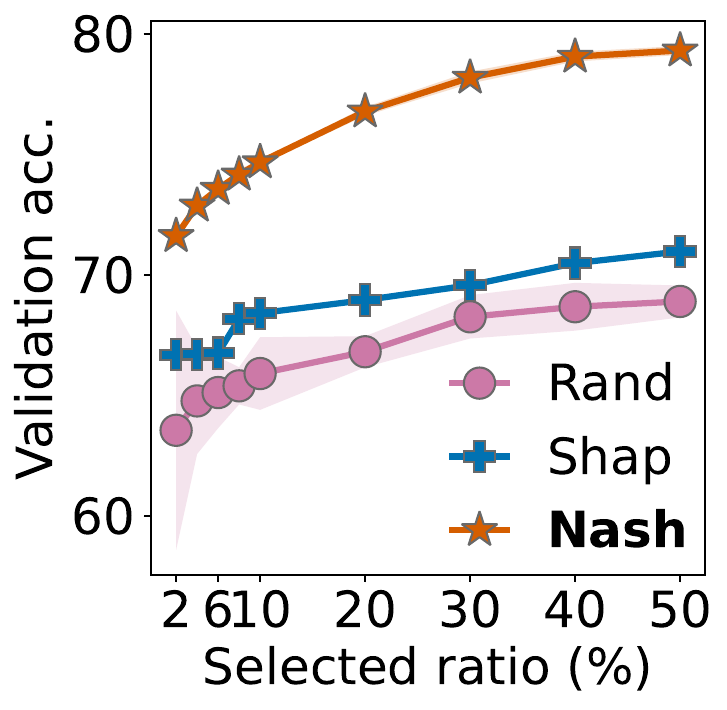}
        \caption{\textbf{MP-BT\\$10\%$ flip.}}
        \label{fig:mp-bt-flip10}
    \end{subfigure}
    \hfill
    \begin{subfigure}[b]{0.325\linewidth}
        \centering
        \includegraphics[width=\linewidth]{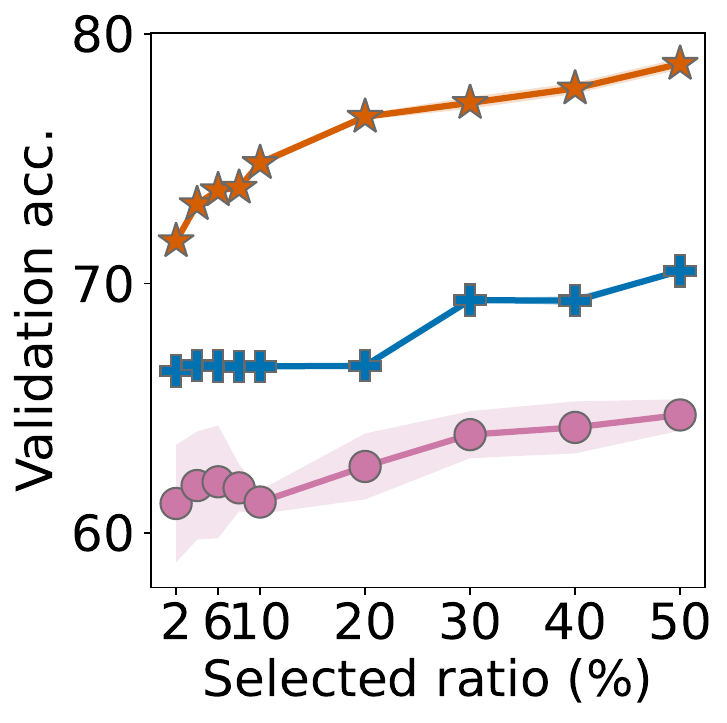}
        \caption{\textbf{MP-BT\\$20\%$ flip.}}
        \label{fig:mp-bt-flip20}
    \end{subfigure}
    \hfill
    \begin{subfigure}[b]{0.325\linewidth}
        \centering
        \includegraphics[width=\linewidth]{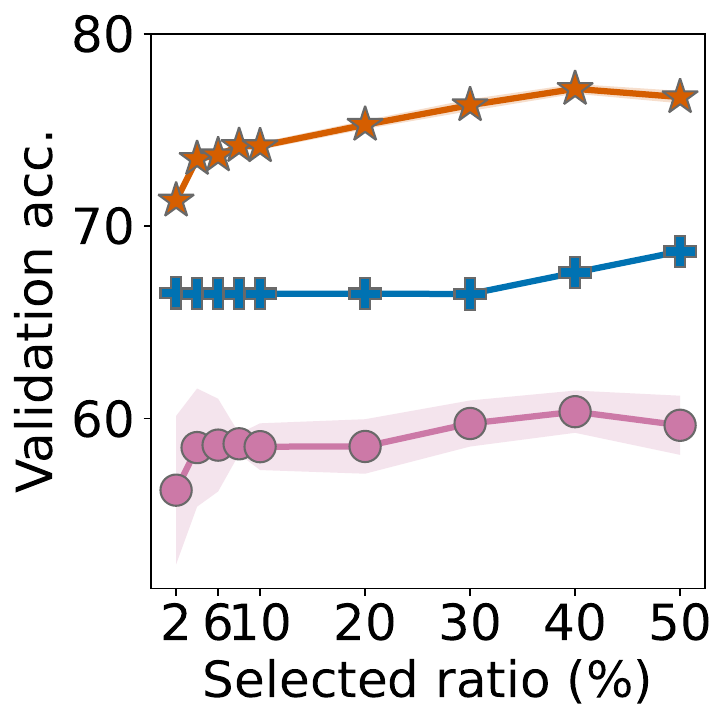}
        \caption{\textbf{MP-BT\\$30\%$ flip.}}
        \label{fig:mp-bt-flip30}
    \end{subfigure}
    \caption{\textbf{Data selection performance on MR-BT and MP-BT when label noises are added to the training set.} As the amount of noise increases, \nash\ consistently outperforms Data Shapley although Data Shapley improves.}
    \label{fig:data-selection-noisy}
\end{figure}

\section{Related Works} \label{sec:related-works}

\paragraph{Efficient approximation of Shapley values.} Exact computation of Shapley values/semivalues is intractable. Existing works on efficient approximation of Shapley values can be branched into two categories, \textbf{(1) reducing the times of evaluating $\ua$} and \textbf{(2) using a faster surrogate of $\ua$}. App.~\ref{app:efficient-computation-of-shapley-values} introduces these works in detail. To summarize, works in \textbf{(1)} (e.g., \citet{ghorbani2019data, li2026provablyadaptivelinearapproximation}) aim to approximate Shapley values with guarantees using fewer samples of coalitions $S \subseteq N$ (e.g., $\Oh(n)$ instead of exponential). Works in \textbf{(2)} \citep{jia2019efficient, wang2024freeshap} avoid the expensive model retraining (to evaluate $\ua$) by replacing them with faster surrogates. These works complement our work: They can be freely plugged into our work and become more effective in data selection.

\newpage

\paragraph{Data selection via top-$m$ heuristic.}
Top-$m$ heuristic (\cref{sec:introduction}) is commonly adopted in data selection beyond Data Shapley and semivalues. For example, \textit{influence}-based methods \citep{koh2017understanding, pruthi2020estimating} are widely used to quantify the impact of individual training data on model performance (which is thus also considered as an approximation to the LOO value); 
\textit{representer point} \citep{yeh2018representer} decomposes model prediction into a linear combination of training points' representer values. Top-$m$ heuristic then selects data with top-$m$ data values computed using these methods. While App.~\ref{app:other-data-selection} introduces these works in detail, they are orthogonal to \nash\ which is a new data selection framework that replaces top-$m$ heuristic.

\begin{figure}
    \centering
    
    \begin{subfigure}[b]{0.325\linewidth}
        \centering
        \includegraphics[width=\linewidth]{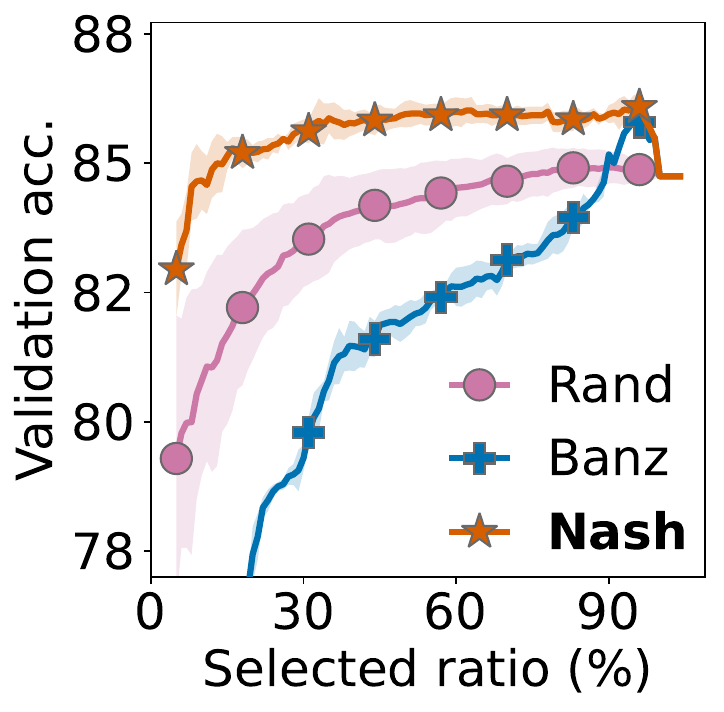}
        \captionsetup{justification=centering}
        \caption{\textbf{WD-LR\\Banzhaf.}}
        \label{fig:wd-lr-banzhaf}
    \end{subfigure}
    \hfill
    \begin{subfigure}[b]{0.325\linewidth}
        \centering
        \includegraphics[width=\linewidth]{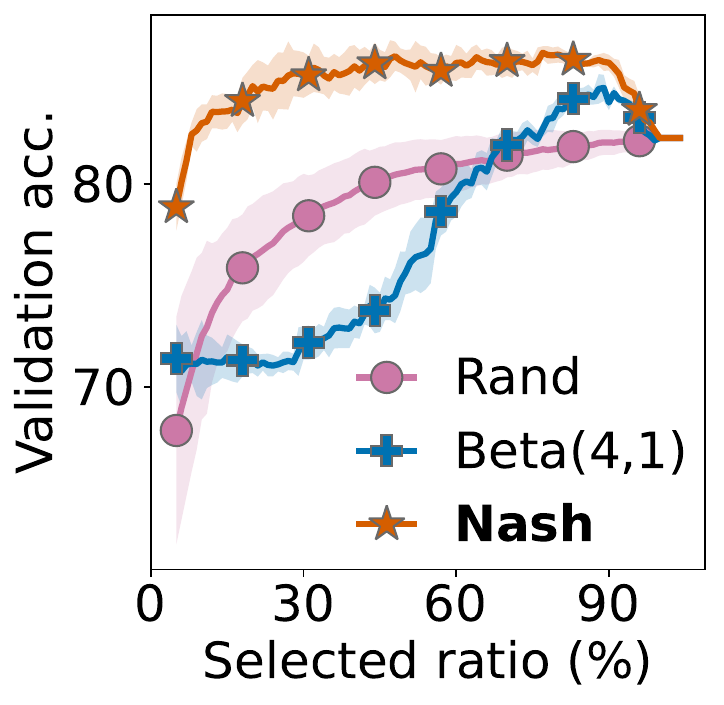}
        \captionsetup{justification=centering}
        \caption{\textbf{PO-LR Beta$(4, 1)$.}}
        \label{fig:po-lr-beta41}
    \end{subfigure}
    \hfill
    \begin{subfigure}[b]{0.325\linewidth}
        \centering
        \includegraphics[width=\linewidth]{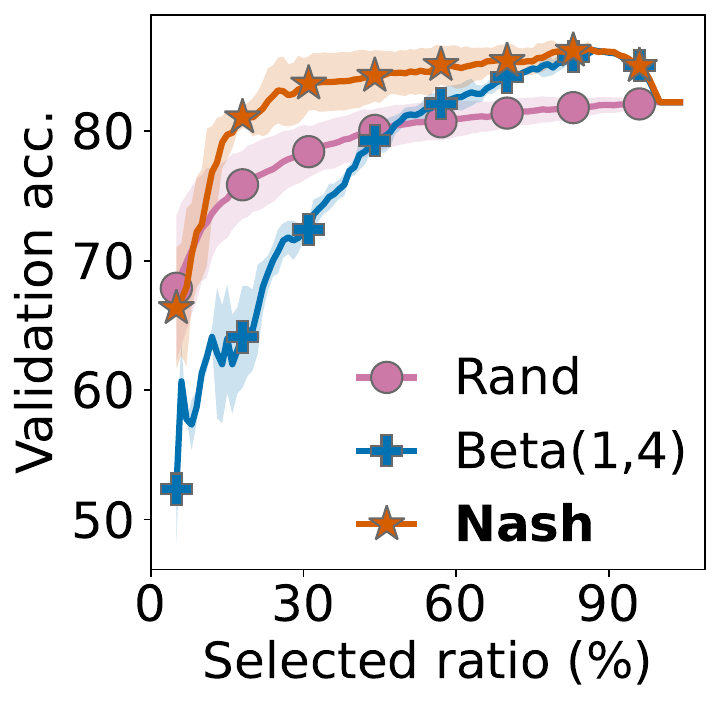}
        \captionsetup{justification=centering}
        \caption{\textbf{PO-LR Beta$(1, 4)$.}}
        \label{fig:po-lr-beta14}
    \end{subfigure}
    \caption{\textbf{\nash\ is compatible with other semivalues.} While other semivalues demonstrate ineffective performances similar to Data Shapley, \nash\ consistently improves over them.}
    \label{fig:other-semivalues}
\end{figure}

\section{Conclusion and Discussion}

In this paper, we analyze why Data Shapley (via top-$m$ heuristic) is not always effective in data selection. We then propose a novel data selection framework \nash\ which makes Data Shapley and semivalues effective in data selection by decomposing the utility function into simpler Shapley-informative components and aggregating them non-linearly.
Our theoretical and empirical results show the effectiveness of \nash. Future work can explore efficient approximation of Shapley values on different models to complement this work. Future work can also explore whether considering the temporal dependence of each datum's contribution could further improve the efficiency and effectiveness. As \nash\ is compatible with all semivalues and potentially other data values by aggregating the component values non-linearly, 
future work can explore what data values result in the most effective data selection.

\clearpage

\section*{Impact Statement}

This paper presents work whose goal is to advance the field of machine learning. There are many potential societal consequences of our work, none of which we feel must be specifically highlighted here.

\section*{Acknowledgments}
This research is supported by the National Research Foundation, Singapore under its National Large Language Models Funding Initiative (AISG Award No: AISG-NMLP-$2024$-$001$). Xiao Tian and Jue Fan are supported by Agency for Science, Technology and Research
(A*STAR) Graduate Academy. 
The authors would also like to thank the anonymous reviewers and AC for their helpful feedback.

\bibliography{references}
\bibliographystyle{icml2026}

\newpage
\appendix
\onecolumn

\section{Overview}

\subsection{Summary of Notations}
Below gives a summary of notations used in this paper.

\vspace{1em}

\hrule\vspace{0.25cm}

\centerline{\bf Constant \& Variables}
\bgroup
\def\arraystretch{1}
\begin{longtable}{ p{1.25in} p{5in} }
$\upalpha$ & Constant in concave functions that fit the learning curve \\
$\upbeta$ & Constant in concave functions that fit the learning curve \\
$c$ & Curvature of monotone submodular functions\\
$\delta$ & Maximum violation probability allowed for Shapley-informativeness \\
$\epsilon$ & Maximum error allowed for Shapley-informativeness \\
$\game_u$ & Vector that represents the cooperative game induced by utility function $u$ \\
$\gamma$ & Difference between expected sum of Shapley values for subset $\mathcal{M}_1$ and that for subset $\mathcal{M}_0$  \\
$g_i$ & Goodness indicator for datum $i$ \\
$i$ & Datum (to be selected for training) \\
$i^*$ & Datum selected by \nash\ at each iteration in the algorithm \\
$j$ & Datum (to be selected for training) \\
$\uplambda$ & Hyperparameter for \nash \\
$M$ & Subset of data (referring to the candidate selected set) \\
$M^*$ & Optimal subset \\
$\mathcal{M}$ & Randomly sampled subset of data \\
$\mathcal{M}_0$ & Subset whose actual utility is $0$ \\
$\mathcal{M}_1$ & Subset whose actual utility is $1$ \\
$\rMi$ & Random variable that denotes the subset obtained by independently selecting each train datum with probability $m/n$ \\
$m$ & Selection size \\
$N$ & Feasible set of training data \\ 
$n$ & Cardinality of $N$; number of feasible data \\
$\bm\phi_u$ & Vector of Shapley values w.r.t. utility function $u$ \\
$\phimax$ & Maximum absolute $\phi_i(u)$\\
$S$ & Coalition of data (referring to the coalitions for computing Shapley values/semivalues) \\
$\pi$ & Permutation of training data \\
$\mathcal{S}$ & Randomly sampled coalition \\
$s$ & Size of coalition $S$ \\
$\tau$ & Threshold that informs the sum of Shapley values needed for correct prediction\\
$\tau_v$ & Threshold that informs the sum of Shapley values needed for correct prediction of $v$\\
$\Tau$ & Random variable that denotes an unknown threshold $\tau_v$ drawn uniformly at random from $V$ \\
$\umax$ & Maximum utility \\
$V$ & Validation set \\
$\mathcal{V}$ & Variance of $Z$ \\
$v$ & Validation datum \\
$w_s$ & Weighting coefficient to coalitions of size $s$ (for semivalues) \\
$x_i$ & Modular weight of datum $i$ \\
$Z$ & Set of all $Z_i$ for all train datum $i$ \\
$Z_i$ & Random variable that denotes the indicator variable whether $i$ is included/excluded by $\rMi$ when $g_i$ is $+1/-1$.
\end{longtable}
\egroup
\vspace{1em}

\centerline{\bf Functions \& Operators}
\bgroup
\def\arraystretch{1}
\begin{longtable}{p{1.25in}p{5in}}
$2^N$ & Power set of $N$ \\
$\Delta_u(\cdot | \circ)$ & Marginal contribution to coalition $\circ$ brought by datum $\cdot$ w.r.t. utility function $u$ \\
$f(\cdot)$ & Mapping that recovers actual utility from approximated utility \\
$f_\tau(\cdot)$ & Threshold function that indicates whether $\cdot$ reaches threshold $\tau$ \\
$F_\Tau(\cdot)$ & Cumulative probability of $\Tau \leq \cdot$ \\
$\mathbb{I}[\cdot]$ & Indicator variable that returns $1$ if condition $\cdot$ is satisfied and $0$ otherwise \\
$p_\Tau(\cdot) $ & Probability of $\Tau = \cdot$ \\
$\pi(\cdot)$ & $\cdot$-th datum in $\pi$ \\
$\pi^{-1}(\cdot)$ & Index of datum $\cdot$ in permutation $\pi$\\
$\Pr[\cdot]$ & Probability of the event $\cdot$ \\
$\phi_i(u)$ & Shapley value of datum $i$ w.r.t. utility function $u$ \\
$\hat{\phi}_i(u)$ & Estimator of Shapley value of datum $i$ w.r.t. utility function $u$ given by MC sampling \\
$\varphi_i(u)$ & Semivalue of datum $i$ w.r.t. utility function $u$ \\
$S_\cdot^\pi$ & Set of first $\cdot$ data in $\pi$ \\
$u(\cdot)$ & Utility of the set $\cdot$ \\
$\uhat(\cdot)$ & Approximated utility of $\cdot$ using sum of Shapley values in $\cdot$ \\
$\ua(\cdot)$ & Validation accuracy on validation set $V$ as utility function \\
$u_v(\cdot)$ & Prediction correctness on validation datum $v$ as utility function \\
\end{longtable}
\egroup

\hrule\vspace{0.25cm}

\clearpage

\section{Additional Background and Related Works}

\subsection{Equitability Axioms and Other Desirable Properties} \label{app:equitability-axioms}

The following list of \textit{equitability axioms} ensures that \textbf{each datum is valued proportionately to its actual contribution} \citep{ghorbani2019data, sim2022data, tian2024derdava}. These equitability axioms originate from the \textit{Shapley fairness axioms} \citep{shapley1953value, dubey1981value}. Note that a score being equitable is not a sufficient condition for it to be good for data selection. However, equitability is still desirable to ensure that no datum gains unfair advantages. For example, a low-quality datum should not be selected due to its inflated valuation score.

\textbf{[LIN]} \textit{Linearity}: For any two utility functions $u_1$ and $u_2$, $\phi_i(u_1 + u_2) = \phi_i(u_1) + \phi_i(u_2)$.
\begin{remark*}
When two utility metrics are combined without favoring either one, the valuation scores of datum $i$ are also combined without favoring either one. When a utility metric doubles, the valuation score of datum $i$ also doubles.
\end{remark*}

\textbf{[DUM]} \textit{Dummy Player}: If a datum $i$ always contributes the same (i.e., $\Delta_u(i|S) = u(\{i\})$ for all $S \subseteq N \setminus \{i\}$), then its valuation score $\phi_i(u) = u(\{i\})$ (since it is independent of others).

\begin{remark*}
Note that the converse is not true: A datum in general does not \textit{always} make the same contribution (e.g., equal to its valuation score) to every coalition formed by other data.
\end{remark*}

\textbf{[INT]} \textit{Interchangeability}: If data $i$ and $j$ are interchangeable w.r.t. utility function $u$ (i.e., $u(S \cup \{i\}) = u(S \cup \{j\})$ for all $S \subseteq N \setminus \{i, j\}$, then $\phi_i(u) = \phi_j(u)$.

\begin{remark*}
Note that the converse is not true: Two data with the same valuation score are not \textit{always} interchangeable. For example, they could be strong at different roles and one of them would thus be preferred over the other given other selected data.
\end{remark*}

\textbf{[MON]} \textit{Monotonicity}: If $u$ is monotone non-decreasing, then $\phi_i(u) \geq 0$ for any datum $i \in N$.

\begin{remark*}
In this setting, every datum is always making a non-negative contribution and hence has a non-negative valuation score.
\end{remark*}

Below, we give some other properties that are not generally classified as equitability axioms but are desirable to ensure that each datum's valuation score reflects its true worth.

\textbf{[EFF]} \textit{Efficiency}: The sum of valuation scores of all data $i \in N$ is equal to the overall utility $u(N)$, i.e., $\sum_{i \in N} \phi_i(u) = u(N)$.

\begin{remark*}
In the context of ML, this property makes the valuation more interpretable: The overall utility is attributed/broken down to every single datum. However, this property originates from use cases where the overall utility is transferrable (e.g., monetary profits). In ML, the overall utility such as validation accuracy $\ua$ does not need to be transferred and some works \citep{kwon2022beta, wang2023data, tian2024derdava} deem this property less relevant.
\end{remark*}

\textbf{[DES]} \textit{Desirability} \citep{carreras2000note}: If the marginal contribution of datum $i$ is never smaller than that of datum $j$, then the valuation score of $i$ should not be lower than $j$: $\forall S \subseteq N \setminus \{i, j\} \ [\Delta_u(i|S) \geq \Delta_u(j|S)] \Rightarrow \phi_i(u) \geq \phi_j(u)$.

\begin{remark*}
Note that the converse is not true: Datum $j$ having a lower score than datum $i$ does not mean $j$ cannot contribute more than $i$ given other selected data. This is actually also a reason why selecting via top-$m$ heuristic might be ineffective.
\end{remark*}

The Shapley value \citep{shapley1953value, ghorbani2019data, jia2019efficient} (\cref{sec:background}) is the only valuation method that jointly satisfies \textbf{[LIN]}, \textbf{[DUM]}, \textbf{[INT]}, \textbf{[MON]} and \textbf{[EFF]}. It also satisfies \textbf{[DES]}. Its relaxation, semivalues \citep{dubey1981value, kwon2022beta, wang2023data} (\cref{sec:background}, \cref{app:semivalues}) is the only valuation method that jointly satisfies \textbf{[LIN]}, \textbf{[DUM]}, \textbf{[INT]} and \textbf{[MON]}. It also satisfies \textbf{[DES]}.

\clearpage

\subsection{Semivalues} \label{app:semivalues}

The \textit{semivalues} refer to a family of data values which uniquely satisfy the \textbf{[LIN]}, \textbf{[DUM]}, \textbf{[INT]} and \textbf{[MON]} axioms (App.~\ref{app:equitability-axioms}). Similar to the Shapley value, semivalues also consider the marginal contribution of datum $i$ to subsets $S$ of other training data. Formally 
\begin{definition}[Semivalue] \label{def:semivalue}
The \textit{semivalue} of a datum $i \in N$ with regards to (w.r.t.) utility function $u$ is
\begin{equation} \label{eqn:semivalue} \textstyle
\varphi_i(u) \coloneq \sum_{S \subseteq N \setminus \{i\}} w_{|S|} \cdot \Delta_u(i|S) / \binom{n-1}{|S|}\ ,
\end{equation}
where $w_{|S|}$'s are non-negative \textit{weighting coefficients} such that
\begin{equation} \label{eqn:weighting-coefficients}
\sum_{s=0}^{n-1} w_s = 1 \ .
\end{equation}
\end{definition}
Note that when $|S| = s$ for a fixed $s$, the term $\sum_{S \subseteq N \setminus \{i\}} \Delta_u(i|S) / \binom{n-1}{|S|}$ in \cref{eqn:semivalue} equates to the \textit{average} marginal contribution of datum $i$ to all coalitions of size $s$. Thus, the semivalue is equal to the expected average marginal contribution of datum $i$ when the coalition size $s$ follows a certain probability distribution specified by the weighting coefficients $w_s$'s. For example,
\begin{itemize}
    \item Data Shapley \citep{shapley1953value, ghorbani2019data, jia2019towards} corresponds to the uniform distribution where all $w_s = 1/n$;
    \item Data Banzhaf \citep{wang2023data} corresponds to the binomial distribution where $w_s = \binom{n-1}{s} / 2^{n-1}$. Alternatively, each coalition $S$ has an equal probability to be drawn;
    \item Beta Shapley \citep{kwon2022beta} corresponds to the beta-binomial distribution controlled by two parameters $\alpha$ and $\beta$. A larger $\alpha$ corresponds to a larger probability on smaller coalition sizes, and vice versa;
    \item Leave-one-out (LOO) corresponds to the distribution where $w_{n-1} = 1$ and all other $w_s$'s are $0$. That is, only the marginal contribution to $N \setminus \{i\}$ is considered.
\end{itemize}
 
\subsection{When Selecting via Top-$m$ Heuristic with Data Shapley Works Well} \label{app:selecting-h-shapley}

Previous works \citep{wang2024rethinking, chi2025unifyingoptimizingdatavalues} present some types of utility functions $u$ where data selection via top-$m$ heuristic with Data Shapley works well. ``Works well'' means that \textbf{(W)} the subset formed by data with top-$m$ Shapley values is optimal or close to optimal. We give a brief summary below.

\citet{wang2024rethinking} discovers that when $u$ is a \textit{monotonically transformed modular} (MTM) function, the subset formed by data with top-$m$ Shapley values is the optimal size-$m$ subset. Specifically, an MTM function is of the form $u(M) = m(\sum_{i \in M} x_i)$, where $m: \mathbb{R} \to \mathbb{R}$ is monotone and $x_i$ is some fixed modular weight associated with datum $i$. For example, for heterogeneous-quality datasets, the validation accuracy $\ua$ is approximately MTM where the normal data have larger modular weights than the corrupted data.

\citet{chi2025unifyingoptimizingdatavalues} identifies that when $u$ is monotone submodular with a smaller \textit{curvature}, the subset formed by data with top-$m$ Shapley values is better (or at least has a better guarantee). Specifically, the curvature $c$ is defined as $c \coloneq 1 - \min_{i \in N} \frac{\Delta_u(i | N \setminus \{i\})}{\Delta_u(i | \emptyset)}$ (i.e., how much the marginal contribution of $i$, $\Delta_u(i|S)$ would change as $S$ grows larger from the empty set $\emptyset$ to the largest set $N \setminus i$). The smallest $c = 0$ is attained when $u$ is modular (so the denominator is equal to the numerator). In this case, every datum $i$'s Shapley value/semivalue is equal to its modular weights and the subset formed by data with top-$m$ Shapley values is optimal. For general $c \in [0, 1]$, the utility of subset formed by data with top-$m$ Shapley values is at least $(1-c)^2$ of the utility of the optimal size-$m$ subset.

While these works provide valuable insights about when Data Shapley works well, we stress that \textbf{there are two important gaps left unaddressed} by these works which our work successfully closes:
\begin{enumerate}
    \item The actual utility functions induced by ML (e.g., validation accuracy $\ua$) may not exhibit the above structures. In fact, this is not uncommon as Data Shapley w.r.t. $\ua$ can perform ineffectively in practice. In contrast, our work identifies a property tied to ML-induced utility functions which makes Data Shapley more effective for data selection in ML.
    \item To modify vanilla Data Shapley so that it \textit{consistently} works well in data selection, a stricter requirement/property than \textbf{(W)} might be desirable. For example, in the context of our analysis, each role/validation datum might prefer different data from the feasible set. Thus, it is impossible for a single selected set to be optimal for every role/validation datum. In this case, \textbf{(W)} becomes less relevant, and we need a stronger property such as the Shapley-informativeness property (\cref{def:shapley-informativeness}).
\end{enumerate}
Together, closing these two gaps makes our proposed method \nash\ a substantially more effective Shapley/semivalue-based data selection method.

\subsection{Efficient Approximation of Shapley Values} \label{app:efficient-computation-of-shapley-values}

We recognize existing works on efficient approximation of Shapley values and semivalues as important complements to this work, since these works improve the efficiency of these values, while this work improves the effectiveness of them in data selection. In this section, we give a review of these works, where we focus on those implemented in this work. In particular, we mention in \cref{sec:related-works} that they can be branched into two categories as given in the two subsections below.

\subsubsection{Reducing the Times of Evaluating $u$}

This line of works focus on reducing the times of evaluating $\ua$ from exponential to polynomial or linear if possible.

\paragraph{Monte-Carlo (MC) sampling \citep{shapley1953value}.} An equivalent definition of the Shapley value is
\begin{equation*}
    \phi_i(u) \coloneq \mathbb{E}_\pi [u(S^{\pi}_{\pi^{-1}(i)}) - u(S^{\pi}_{\pi^{-1}(i) - 1})] = \mathbb{E}_\pi [\Delta_u(i | S^{\pi}_{\pi^{-1}(i) - 1})],
\end{equation*}
where $\pi$ is a permutation of the feasible set drawn uniformly at random, $\pi(k)$ is the $k$-th datum in $\pi$, $\pi^{-1}(i)$ is the index of datum $i$ in permutation $\pi$, $S_k^\pi$ is the set of first $k$ data in $\pi$, i.e., $S_k^\pi = \{\pi(1), \dots, \pi(k)\}$ and $S_0^\pi = \emptyset$. Inspired by this definition,
MC sampling approximates Shapley values by the average of marginal contributions over $T$ randomly sampled permutations of the feasible set. In each permutation, $u_V$ is evaluated $n$ times where $n$ is the total number of feasible data. Thus, MC sampling efficiently reduces the number of model training from $\Oh(2^n)$ to $\Oh(Tn)$. The Shapley value of $i$ is estimated by the average of its marginal contribution in all permutations,
\begin{equation*}
    \hat{\phi}_i(u) = \frac{1}{T}\sum_{t = 1}^T \left( u\left(S_{\pi_t^{-1}(i)}^{\pi_t}\right) - u\left(S_{\pi_t^{-1}(i) - 1}^{\pi_t}\right)\right).
\end{equation*}
MC sampling yields an unbiased estimate of the Shapley value and the approximation error decreases with a rate of $\Oh(T^{-1/2})$.

\paragraph{Truncated MC (TMC) sampling \citep{ghorbani2019data}.} TMC sampling leverages the common observation that in ML tasks, when the number of training data is sufficiently high, model performance tends to stabilize. In every permutation, when the marginal contribution of any datum $i$ falls below a threshold, TMC sampling assumes that adding more data into the current coalition will not bring significant utility gains. Thus, all subsequent data in this permutation will be assigned a marginal contribution of $0$ without further model retraining. Since TMC sampling introduces a threshold, this is a biased estimate of Shapley value and the bias is bounded by the threshold.

\paragraph{Least Squares Approximation \citep{li2024faster}.} This estimator frames the computation of semivalues as the solution to a least squares regression problem, which aims to minimize the total squared error between the estimated sum of (approximated) semivalues $\sum_{i\in S} \psi_i$ and actual utility $u(S)$ in each subset $S$:
\begin{equation*}
    \min_{\psi \in \mathbb{R}^n} \sum_{S \subset N; |S| > 0} W_s \left(u(S) - \sum_{i \in S} \psi_i\right)^2,
\end{equation*}
where $W_s$ %
is a non-negative weight vector associated with coalition size $s = |S|$ and semivalues ($W_s$ can be derived from $w_s$ in \cref{def:semivalue}) and $\sum_{s = 1}^{n - 1} W_s > 0$. Instead of sampling permutations, this estimator samples random coalitions directly and accumulates sufficient statistics (i.e., covariance matrix and target vector) required to solve the linear regression problem. The solution to the problem is an unbiased estimator of the semivalues.

\subsubsection{Using a Faster Surrogate of $u$}

Due to the prohibitive retraining cost of large ML models, even polynomial times of evaluating $\ua$ is infeasible. Thus, another branch of works focus on estimating $\ua$ without retraining, which greatly improves the computation time.

\paragraph{$K$NN-Shapley \citep{jia2019efficient}.} $K$NN-Shapley is a model-specific algorithm designed for $K$-nearest neighbor ($K$NN) classifiers and can compute the exact Shapley values w.r.t. $K$NN utility function efficiently without model training. It relies on the intuition that a training datum $i$ only impacts the prediction of a validation datum if $i$ is among its $K$ nearest neighbors. The $K$NN utility of a set $S$ on a validation datum $v$ is defined as the proportion of $v$'s $K$ nearest neighbors in $S$ with the same label as $v$, i.e.,
\begin{equation}
    u_v(S) = \frac{1}{K}\sum_{k = 1}^{\min(K, |S|)} \mathbb{I} [y_{(k)} = y_{v}],
\end{equation}
where $y_{(k)}$ is the label of $v$'s $k$-th nearest neighbor.
The utility w.r.t. a validation datum $v$ depends solely on the relative ranking of training data's distance to $v$. The algorithm first sorts all training data based on distance to $v$ such that $x_{(1)}$ is the nearest and $x_{(N)}$ is the farthest. The Shapley value of the $i$-th closest datum from $v$ is recursively calculated from the farthest back to the closest,
\begin{align}
        \phi_{x_{(N)}}(u) &= \frac{\mathbb{I}[y_{x_{(N)}} = y_v]}{N},\\
        \phi_{x_{(k)}}(u) &= \phi_{x_{(k + 1)}}(u) + \frac{\mathbb{I}[y_{x_{(k)}} = y_v] - \mathbb{I}[y_{x_{(k+1)}} = y_v]}{K} \cdot \frac{\min(K, k)}{k}.
\end{align}
By \textbf{[LIN]} (App.~\ref{app:equitability-axioms}), Shapley value w.r.t. multiple validation data is the average of the Shapley values w.r.t. each single validation datum. This algorithm is highly efficient because no model retraining/evaluation is required. 

\paragraph{Threshold $K$NN-Shapley (T$K$NN-Shapley) \citep{wang2023thresholdknnshapleylineartimeprivacyfriendly}.} T$K$NN-Shapley is similar to $K$NN-Shapley and is based on threshold $K$NN model (i.e., instead of considering $K$ neighbors, it considers all neighbors whose distance is within a threshold). Likewise, T$K$NN-Shapley can also be calculated exactly and efficiently in an efficient manner.

\paragraph{FreeShap \citep{wang2024freeshap}.} FreeShap avoids model retraining entirely and estimates the utility of a subset by approximating neural network finetuning using kernel regression on empirical neural tangent kernels (eNTKs). The marginal contribution of a datum (change in loss/utility) is estimated using kernel regression updates on the eNTK matrix. Since the eNTK of a subset is the submatrix of the full eNTK matrix, marginal contributions can be computed in closed-form linear algebra operations rather than the much more expensive gradient descent in model training. The eNTK is calculated using the Jacobian of the model output w.r.t. the pretrained weights $\theta_0$. For any datum $i$, let the Jacobian be $J(i):= \frac{\partial f(i ; \theta_0)}{\partial \theta_0}$. The kernel matrix element between two data $i$ and $j$, $\mathrm{Kernel}(i, j)$, is defined to be the dot product between their Jacobians, 
\begin{equation}
    \mathrm{Kernel}(i, j) = J(i)J(j)^\top.
\end{equation}
For a subset of data $S$, the label prediction for a validation datum $v$ is then computed using the eNTK regression model,
\begin{equation}
    f^{\mathrm{entk}}_S(v) = \mathrm{Kernel}(v, S)^\top \mathrm{Kernel}(S, S)^{-1}Y_S,
\end{equation}
where $Y_S$ is the vector of one-hot labels for subset $S$, $\mathrm{Kernel}(S, S)$ is the kernel matrix for subset $S$ where the $i$-th row and $v$-th column is the kernel matrix element between the $i$-th and the $v$-th training data in $S$, and $\mathrm{Kernel}(v, S)$ is a column vector where the $i$-th row is the kernel matrix element between validation datum $v$ and the $i$-th training datum in $S$. The utility of subset $S$ is defined as the negative loss of the predicted label w.r.t. the true label.

FreeShap also leverages TMC sampling to further enhance computational efficiency. As compared to conventional settings in Shapley value-based data selection, FreeShap greatly enhances the scalability of Shapley value-based data selection in terms of dataset size and model size.

\subsection{Other Data Selection Methods via Top-$m$ Heuristic} \label{app:other-data-selection} 
This section provides a detailed description of other data selection/valuation methods via top-$m$ heuristic which we adopt as baselines in this paper. Unlike Shapley value-based data selection which uses the utility function directly, these methods rely on first-order (gradients), second-order (Hessian) approximations, or feature space to estimate the importance of training data. Under the top-$m$ heuristic, these scores are computed in a one-shot manner and data with the top-$m$ highest scores are selected. These works are orthogonal to \nash\ which is a new data selection framework that replaces top-$m$ heuristic.

\paragraph{Influence functions \citep{koh2017understanding}.} Influence functions approximate the effect of upweighting a datum by the change in model parameters without actually retraining. The method uses a second-order Taylor expansion of of the loss function. The influence of a training datum $i$ on the loss of a validation datum $v$ is
\begin{equation}
    \mathrm{Inf}(i, v) = - \nabla_\theta L(v, \hat{\theta})^\top\ H_{\hat{\theta}}^{-1}\ \nabla_\theta L(i, \hat{\theta}),
\end{equation}
where $\nabla_\theta L(v, \hat{\theta})$ is the gradient w.r.t. parameters at the optimal model parameters $\hat{\theta}$, and $H_{\hat{\theta}}^{-1}$ is the inverse Hessian matrix of the training loss. A positive influence score indicates that upweihting datum $i$ would decrease the validation loss, which means datum $i$ is ``good''.

\paragraph{TracIn \citep{pruthi2020estimating}.} TracIn scores estimate influence of a training datum by tracking its contribution to the reduction of validation loss throughout the entire training process. Unlike influence functions that compute all training data's influences once at the end of training, TracIn sums the dot product of gradients at various checkpoints $\theta_t$. The TracIn score of training datum $i$ on validation datum $v$ is given by the sum of first-order approximation for the change in loss $\eta_t \nabla L(i, \theta_t)\ \nabla L(v, \theta_t)$ over all checkpoints $t$:
\begin{equation}
    \mathrm{TracIn}(i, v) = \sum_{t} \eta_t \nabla L(i, \theta_t)\ \nabla L(v, \theta_t),
\end{equation}
where $\eta_t$ is the learning rate at checkpoint $t$. If the gradient on a training datum aligns with that on a test datum, which yields a positive dot product, then this training datum helps to reduce the loss on the test datum. TracIn scores avoid expensive computation of Hessian matrix and rely solely on first-order gradients computed during the training.

\paragraph{Representer point selection \citep{yeh2018representer}.} Representer point selection decomposes the prediction of neural networks into a linear combination of training data based on the Representer Theorem \citep{scholkopf2001generalized}. Representer Theorem guarantees that the minimizer of the loss function lies within the subspace spanned by the kernel functions of training data.  For a validation datum $v$, the pre-activation output $f(v)$ is
\begin{equation}
    f(v) = \sum_{i = 1}^N \alpha_i \mathbf{f}_i^\top \mathbf{f}_v,
\end{equation}
where $\alpha_i$ is derived from the loss derivative of training datum $i$ w.r.t. pre-activation, and $\mathbf{f}_i^\top \mathbf{f}_v$ is the similarity kernel between training datum $i$ and validation datum $v$, which is a dot product of their feature embeddings from the second last layer. Then the representer value of any datum $i$ is just its contribution to $f(j)$,
\begin{equation}
    \mathrm{Rep}(i) = \alpha_i \mathbf{f}_i^\top \mathbf{f}_v.
\end{equation}
A high representer score indicates the training datum is similar to the validation datum (i.e., $\mathbf{f}_i^\top \mathbf{f}_v$ is large) or has a high weight $\alpha_i$. This method is highly efficient as it operates on the feature space of the last layer rather than the full parameter space.

\clearpage

\section{Additional Theoretical Results}

\subsection{Simple Examples of Shapley-Informative Functions} \label{app:examples-of-shapley-informative-functions}

In \cref{def:shapley-informativeness}, we give a probabilistic definition of Shapley-informative functions. In this section, we provide simple (but possibly unrealistic) examples to illustrate different levels of Shapley-informativeness. Note that this section is for illustration and understanding of the Shapley-informativeness property only.

As a starting point, a \textit{modular} utility function $u(M) = \sum_{i \in M} x_i$ (where $x_i$ is the modular weight of $i$) is $(0, 0)$-Shapley-informative at any size under the identity mapping $f(x) = x$. This is because for such functions, the marginal contribution of any datum $i$ to any coalition $S \subseteq N \setminus \{i\}$ is always equal to $u(\{i\})$, and hence $i$'s Shapley value equals $u(\{i\})$. Then for any $M \subseteq N$, $f(\uhat(M)) = \uhat(M) = \sum_{i \in M} \phi_i(u) = \sum_{i \in M} u(\{i\}) = u(M)$.

Now consider a near-modular utility function with bounded noise $u(M) = \sum_{i \in M} x_i + E$ such that $E$ is a random noise with $|E| \leq \xi$. The marginal contribution of datum $i$ to any subset $S \subseteq N \setminus \{i\}$ is thus bounded between $x_i \pm 2\xi$. The Shapley value of $i$, $\phi_i(u)$, as an expectation of the marginal contributions, is thus also bounded between $x_i \pm 2\xi$. The sum of Shapley values in a subset $M \subseteq N$, is thus bounded between $\sum_{i \in M} x_i \pm 2|M|\xi$, which differs from $u(M)$ by at most $(2|M|+1)\xi$. Thus, by setting $f$ as the identity mapping $f(x) = x$, $u$ is $((2m+1)\xi, 0)$-Shapley-informative at size $m$ under $f$. When $\xi$ is small, the above property ensures that the sum of Shapley values in $M$ indicates the actual utility of $M$ well. Moreover, if the actual distribution of $E$ is known, a much better $(\epsilon, \delta)$ with $\delta > 0$ is likely to be attainable.

Consider also a dataset which can be partitioned into useless data $N_\times$ and useful data $N_\checkmark$. For every $M \subseteq N_\times$, $u(M) = 0$; whenever $M$ contains at least $1$ useful data (i.e., $M \cap N_\checkmark \neq \emptyset$), $u(M) = 100$. Then, every useful datum has a positive Shapley value whereas every useless datum has a $0$ Shapley value. By setting $f$ as the mapping $f(x) = 100 \cdot \mathbb{I}[x > 0]$ (where $\mathbb{I}$ is the indicator function), $u$ is $(0, 0)$-Shapley-informative at any size.

\subsection{Arbitrary Utility Functions Are Not Shapley-Informative} \label{app:arbitrary-not-shapley-informative}

In \cref{subsec:shapley-informativeness}, we explain why arbitrary utility functions $u: 2^N \to \mathbb{R}$ are in general not Shapley-informative. In this section, we give an example of functions that are not Shapley-informative:
\begin{table}[!h]
    \centering
    \vspace{2mm}
    \caption{\textbf{A set of games that are not distinguishable from their Shapley value vectors.} $x, y, z \in \mathbb{R}$ are arbitrary numbers.}
    \setlength{\tabcolsep}{4pt}
    \begin{tabular}{c c c c c c c c c} 
        \toprule
        \textbf{Coalition $S$} & $\emptyset$ & $\{1\}$ & $\{2\}$  & $\{3\}$ & $\{1, 2\}$ & $\{1, 3\}$ & $\{2, 3\}$ & $\{1, 2, 3\}$ \\
        \midrule
        \textbf{Utility $u(S)$} & $0$ & $-x+z$ & $-x+y$  & $0$ & $x$ & $y$ & $z$ & $0$ \\
        \bottomrule
    \end{tabular}
    \vspace{2mm}
    \end{table}

Consider Shapley-informativeness for size-$2$ subsets (i.e., $\{1, 2\}, \{2, 3\}, \{1, 3\}$ and their actual utilities $x$, $y$ and $z$).  For every game of the above form, the Shapley values $\phi_1(u) = \phi_2(u) = \phi_3(u) = 0$. Hence, $\uhat(\{1, 2\}) = \uhat(\{1, 3\}) = \uhat(\{2, 3\}) = 0$. Under any mapping $f$, $f(\uhat(\{1, 2\})) = f(\uhat(\{1, 3\})) = f(\uhat(\{2, 3\})) = f(0)$. Nothing about $x$, $y$ and $z$ can be informed. Thus, an arbitrary utility function $u$ is not Shapley-informative under any mapping.

\begin{remark*}
\citet{wang2024rethinking} gives a simpler example of $2$ different utility functions with the same Shapley value vectors. Our above example generalizes it to infinite number of utility functions with the same Shapley value vectors.
\end{remark*}

That said, in \cref{subsec:shapley-informativeness} we give our view that ML-induced utility functions cannot take arbitrary value. Certain structures of them could make them Shapley-informative under carefully chosen mappings. This view is validated by our theoretical analysis and the empirical effectiveness of our proposed methods.

\subsection{Further Discussion on Roles} \label{app:further-discussion-on-roles}

In \cref{subsec:shapley-informativeness}, we motivate that commonly used utility functions such as validation accuracy $\ua$ are less likely to be Shapley-informative because it involves multiple roles and each feasible datum $i \in N$ has its unique strength in a certain set of roles. In this section we provide more discussions on these roles.

Conceptually, roles are latent directions in how a training datum can affect the objective $u$. For complex utility functions such as $\ua$, it is evident that multiple roles exist. This can be seen from how each training datum affects the prediction correctness on different validation data. For example, when training datum $i$ is closer to validation datum $v_i$ and training datum $j$ is closer to validation datum $v_j$, $i$ and $j$ would contribute to $\ua$ differently (e.g., if $v_i$ is already learnt well, then $j$ is likely to have a larger contribution). For simplicity one may think of roles as being orthogonal to each other. Each training datum is then associated with a vector whose $k$-th entry represents its contribution to the $k$-th role. Based on the values at each entry, each training datum has different strengths and different data contribute more to different roles. Putting \ref{itm:P1} and \ref{itm:P2} in this framework, although different training data are associated with different vectors of roles (capturing their unique strengths), their Shapley values can be the same. Thus, vanilla Data Shapley may select data ineffectively.

On the other hand, if there is only one role, each datum's contribution to the role should be strongly correlated to its Shapley value; if the utility function involves fewer roles, Data Shapley will also be more informative. For example, in \cref{subsec:shapley-informative-components}, by breaking down $\ua$ to prediction correctness $u_v$'s, the utility function becomes provably Shapley-informative. Moreover, the aforementioned problems \ref{itm:P1} and \ref{itm:P2} can also be justified by observing the $u_v$'s of each training datum $i \in N$, as shown in \cref{appfig:same-shapley-diff-roles}. There is a very low correlation between data $i$ and $j$'s Shapley values on each validation datum (the Pearson correlation is as low as $0.176$). This shows that $i$ and $j$ indeed contribute to each role of $\ua$ differently. Yet, they cannot be distinguished by Data Shapley since they have the same score, which explains why Data Shapley could be ineffective\footnote{In contrast, our method \nash\ accounts for this: if a validation datum below the $y=x$ line is poorly predicted, then \nash\ will prefer datum $i$ (which contributes more to this validation datum) over datum $j$.}.

\begin{remark*}
Each validation datum may not correspond to a single role. For example, if two validation data are close to each other, a training datum that contributes much to one of them would also contribute much to the other, but the contributions are not exactly the same. This suggests that the two validation data are associated with highly overlapping, but not exactly identical, roles.
\end{remark*}

\begin{figure}[!h]
    \centering
        \includegraphics[width=0.188\linewidth]{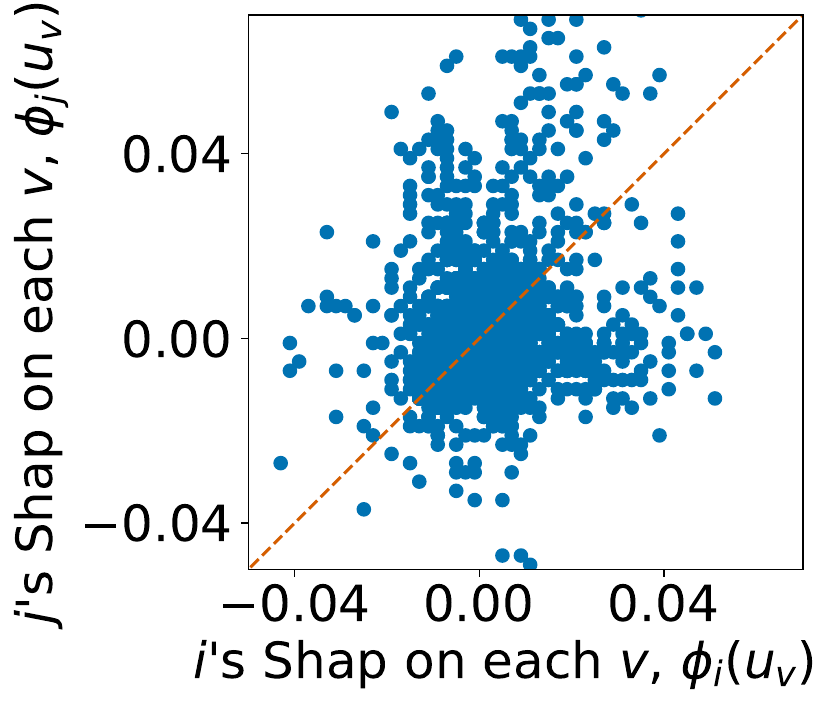}
        \captionof{figure}{\textbf{Paired comparison of data $i$ and $j$'s Shapley values $\phi_i(u_v)$ and $\phi_j(u_v)$ on each validation datum $v \in V$.} Specifically, $i$ and $j$ are taken from the \textbf{PO-LR} setting, both with Data Shapley values $\phi_i(\ua) = \phi_j(\ua) = 3.6 \times 10^{-3}$. Each scatter point corresponds to a validation datum $v$'s $(\phi_i(u_v), \phi_j(u_v))$. We also plot the line $y = x$ (red dashed line).}\label{appfig:same-shapley-diff-roles}
\end{figure}

\subsection{Empirical Justification of Consistent Player (\cref{assump:consistent-player})}\label{app:consistent-player-practicality}
\cref{assump:consistent-player} assumes that a training datum's contribution to a validation datum is typically consistently good (non-negative marginal contribution) or consistently bad. Empirically, as shown in \cref{tab:consistent-player-violation-frequency}, the violation frequency is very small, which supports that the assumption is reasonable as an idealized setting for theoretical analysis.

\begin{table}[!h]
    \centering
    \caption{\textbf{Violation frequency of the Consistent Player assumption (WD-LR).} The results are measured on the \textbf{WD-LR} dataset (\cref{app:datasets}). We sample \num{10000} marginal contributions for each pair of training datum and validation datum.}\label{tab:consistent-player-violation-frequency}
    \begin{tabular}{lccccc}
        \toprule
        \textbf{Frequency of violations} & \textbf{Min} & \textbf{25-th percentile} & \textbf{Median} & \textbf{75-th percentile} & \textbf{Max} \\
        \midrule
        \textbf{Across training data}   & $1.4\mathrm{e-}5$ & $1.4\mathrm{e}{-4}$ & $2.5\mathrm{e-}4$ & $3.4\mathrm{e-}4$ & $1.6\mathrm{e-}3$ \\
        \textbf{Across validation data} & $0$             & $4.4\mathrm{e-}5$ & $1.7\mathrm{e-}4$ & $4.5\mathrm{e-}4$ & $1.3\mathrm{e-}3$ \\
        \bottomrule
    \end{tabular}
\end{table}

\subsection{Consistent Player Implies Separation of Conditional Expectation} \label{app:consistent-player-implies-separation-of-conditional-expectation}

In this section, we present an important and intuitive lemma that leads to the main Shapley-informativeness proposition (\cref{prop:uv-self-mappedness}). The lemma basically states that the sum of Shapley values of a $0$-utility subset $\rM_0$, $\uhat(\rM_0)$, is expected to be smaller than that of a $1$-utility subset $\rM_1$, $\uhat(\rM_1)$. Then we can prove \cref{prop:uv-self-mappedness} by showing that any randomly selected subset $\rM$ has its sum of Shapley values $\uhat(\rM)$ concentrated around its conditional expectation in later sections.

To simplify the analysis, we consider each candidate $\mathcal{M}$ is selected by independently deciding whether to include each datum $i \in N$ with probability $m/n$ (we will discuss how to restrict to the constant size-$m$ case later). We use $\rMi$ to denote such a randomly selected subset. A formal statement of our lemma is as follows:
\begin{lemma}[Consistent Player Implies Separation of Conditional Expectation] \label{lem:conditional-expectation-separation}
Let $N \coloneq \{1, 2, \cdots, n\}$ be a set of data and $u: 2^N \to \{0, 1\}$ be a utility function mapping each coalition $S \subseteq N$ to its utility $u(S)$. Assume $u$ satisfies the Consistent Player assumption (\cref{assump:consistent-player}). Let $\rMi$ be the random variable denoting the subset obtained by independently selecting each datum $i \in N$ with probability $m/n$. Then
\begin{equation*}
    \E_\rMi[\uhat(\rMi) \mid u(\rMi) = 0] \leq \E_\rMi[\uhat(\rMi) \mid u(\rMi) = 1],
\end{equation*}
with equality attained only if all Shapley values are zero ($\forall i \in N \ [\ \phi_i(u) = 0\ ]$).
\end{lemma}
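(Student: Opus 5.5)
The plan is to reduce the comparison of the two conditional expectations to the sign of a covariance, and then to show that this covariance is a sum of non-negative terms, one per datum. Write $p \coloneq m/n$ and assume $0<p<1$. Let $X_i \coloneq \mathbb{I}[i \in \rMi]$, so that $\uhat(\rMi) = \sum_{i\in N}\phi_i(u) X_i$. The $X_i$ are independent Bernoulli$(p)$ variables. Since $u(\rMi)\in\{0,1\}$, whenever $q \coloneq \Pr[u(\rMi)=1]\in(0,1)$ we have the standard identity
\begin{equation*}
\Cov\bigl(\uhat(\rMi), u(\rMi)\bigr) = q(1-q)\Bigl(\E[\uhat(\rMi)\mid u(\rMi)=1]-\E[\uhat(\rMi)\mid u(\rMi)=0]\Bigr).
\end{equation*}
Hence it suffices to prove $\Cov(\uhat(\rMi),u(\rMi))\ge 0$, with strict inequality unless all $\phi_i(u)=0$. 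By linearity,
\begin{equation*}
\Cov\bigl(\uhat(\rMi),u(\rMi)\bigr)=\sum_{i\in N}\phi_i(u)\,\Cov\bigl(X_i,u(\rMi)\bigr).
\end{equation*}

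\textbf{Step 1: the sign of each covariance term.} Condition on $\mathcal{S}\coloneq\rMi\setminus\{i\}$, which is independent of $X_i$. This gives
\begin{equation*}
\Cov\bigl(X_i,u(\rMi)\bigr)=p(1-p)\,\E_{\mathcal S}\bigl[u(\mathcal S\cup\{i\})-u(\mathcal S)\bigr]=p(1-p)\,\E_{\mathcal S}\bigl[\Delta_u(i\mid\mathcal S)\bigr].
\end{equation*}
By \cref{assump:consistent-player}, every $\Delta_u(i\mid S)$ lies in $\{0,g_i\}$. Therefore $g_i\,\Cov(X_i,u(\rMi))\ge 0$.

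\textbf{Step 2: the sign of each Shapley value.} By \cref{def:shapley-value}, $\phi_i(u)$ is a convex combination of the same marginal contributions $\Delta_u(i\mid S)\in\{0,g_i\}$, so $g_i\phi_i(u)\ge 0$. Combining this with Step 1 and using $g_i^2=1$, each summand $\phi_i(u)\Cov(X_i,u(\rMi)) = (g_i\phi_i(u))\,(g_i\Cov(X_i,u(\rMi)))$ is non-negative. This proves the inequality.

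\textbf{Step 3: the equality case.} Suppose some $\phi_{i}(u)\neq 0$. Then the Shapley weights are all strictly positive, so some coalition $S\subseteq N\setminus\{i\}$ has $\Delta_u(i\mid S)=g_i$. Because $0<p<1$, every coalition $S$ occurs as $\mathcal S$ with positive probability. Hence $\E_{\mathcal S}[\Delta_u(i\mid\mathcal S)]$ is nonzero with sign $g_i$, and the $i$-th summand is strictly positive, so the covariance is strictly positive. The same observation settles well-definedness of the conditioning. A nonzero marginal contribution means $u$ is non-constant, and every subset has positive probability under $\rMi$, so $q\in(0,1)$ and both conditional expectations exist. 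Thus strict inequality holds whenever some Shapley value is nonzero, which is the claimed equality characterisation. In the degenerate case where all $\phi_i(u)=0$, we have $\uhat\equiv 0$ and the statement is trivial.

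The main subtlety I anticipate is this equality and well-definedness bookkeeping rather than the inequality itself. In particular, the rewriting of the conditional-expectation gap as a covariance requires $0<q<1$, which must be checked via the positivity argument above. An alternative route would flip each datum with $g_i=-1$, via the paper's variables $Z_i$, so that $u$ becomes monotone in $Z$ and then apply Harris/FKG. However, because $\uhat$ is linear in the $X_i$, the direct per-coordinate covariance computation above suffices and avoids FKG entirely.
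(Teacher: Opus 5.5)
Your proof is correct and is in substance the paper's argument: the same covariance identity that reduces the gap to the sign of $\Cov(\uhat(\rMi),u(\rMi))$, the same sign consistency $g_i\phi_i(u)\ge 0$, and the same per-coordinate decomposition $\Cov = \sum_i \phi_i(u)\,p(1-p)\,\E[\Delta_u(i\mid\mathcal S)]$, which the paper writes in the flipped variables $Z_i$. The only real difference is that the paper first gets $\Cov\ge 0$ from FKG and uses the decomposition only for the equality case. You instead use the decomposition for the inequality too, which correctly makes FKG unnecessary, and your explicit checks that every coalition has positive probability and that $0<q<1$ make rigorous the degenerate cases the paper just calls trivial.
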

\begin{proof}
Recall that under \cref{assump:consistent-player} each player $i \in N$ is associated with a goodness indicator $g_i \in \{+1, -1\}$ characterizing whether it is consistently good or bad. Since the Shapley value is a weighted sum of marginal contributions with non-negative weights, we have the following facts under \cref{assump:consistent-player}:
\begin{equation} \label{eqn:shapley-value-sign-consistency}
    \begin{split}
        \forall i \in N \quad [g_i = +1 \Rightarrow \phi_i(u) \geq 0]; \\
        \forall i \in N \quad [g_i = -1 \Rightarrow \phi_i(u) \leq 0].
    \end{split}
\end{equation}
That is, the sign of each datum $i$'s Shapley value is the same as its $g_i$.

For each datum $i \in N$ we define the following random variables
\begin{equation*}
    Z_i = \begin{cases}
        \mathbb{I}[i \in \rMi] & \text{if } g_i = +1 \\
        \mathbb{I}[i \notin \rMi] & \text{if } g_i = -1
    \end{cases},
\end{equation*}
where $\mathbb{I}$ is the indicator function ($\mathbb{I}[P] = 1$ if predicate $P$ is true and $0$ otherwise).
Clearly, each $Z_i$ is independent of one another since each $i$ is included in $\rMi$ independently. Let $Z \coloneq (Z_1, Z_2, \cdots, Z_n)$. Then for each $\rMi$ we can rewrite its utility $u(\rMi)$ and its sum of Shapley values $\uhat(\rMi)$ in terms of $Z_i$ as
\begin{align*}
    u(\rMi) & \xlongequal{\text{(A)}} u(\{i : g_i = 2Z_i - 1\}) = U(Z); \\
    \uhat(\rMi) & \xlongequal{\text{(B)}} \sum_{\substack{i \in N \\ g_i = +1}} \phi_i(u) \cdot Z_i + \sum_{\substack{i \in N \\ g_i = -1}} \phi_i(u) \cdot (1-Z_i) = \sum_{i \in N} |\phi_i(u)| \cdot Z_i + \sum_{\substack{i \in N \\ g_i = -1}} \phi_i(u) = \hat{U}(Z).
\end{align*}
Specifically, we partition $\rMi$ based on whether $g_i$ is $+1$ or $-1$, such that when $Z_i = 1$, $g_i = 2Z_i - 1 = +1$; when $Z_i = 0$, $g_i = 2Z_i - 1 = $, $Z_i = -1$.
Step (B) is from Fact.~\eqref{eqn:shapley-value-sign-consistency}. Clearly, both $U(Z)$ and $\hat{U}(Z)$ are monotone increasing with regards to every $Z_i$ because whenever a $Z_i$ changes from $0$ to $1$, \begin{itemize}
    \item if $g_i = +1$, it means we are including a good datum $i$, which will not decrease the actual utility $U(Z)$ or the sum of Shapley values $\hat{U}(Z)$;
    \item if $g_i = -1$, it means we are excluding a bad datum $i$, which will not decrease the actual utility $U(Z)$ or the sum of Shapley values $\hat{U}(Z)$ either.
\end{itemize}
Since both $U(Z)$ and $\hat{U}(Z)$ are monotone increasing functions of independent $Z_i$, by FKG inequality \citep{grimmett2012percolation}, \begin{equation} \label{eqn:fkg}
    \Cov_Z[U(Z), \hat{U}(Z)] \geq 0.
\end{equation}
That is, $U(Z)$ and $\hat{U}(Z)$ are non-negatively correlated. 
Rewriting Eq.~\eqref{eqn:fkg}, we have
\begin{align*}
    & \phantom{{}={}} \Cov_Z[U(Z), \hat{U}(Z)] \\
    &= \E_Z[U(Z) \cdot \hat{U}(Z)] - \E_Z[U(Z)] \cdot \E_Z[\hat{U}(Z)] \\
    &= \Pr_Z[U(Z) = 1] \cdot \E_Z[\hat{U}(Z) | U(Z) = 1] - \Pr_Z[U(Z) = 1] \cdot \E_Z[\hat{U}(Z)] \\
    &= \Pr_Z[U(Z) = 1] \cdot \left(\E_Z[\hat{U}(Z) | U(Z) = 1] - (\Pr_Z[U(Z) = 1] \cdot \E_Z[\hat{U}(Z) | U(Z) = 1] + \Pr_Z[U(Z) = 0] \cdot \E_Z[\hat{U}(Z) | U(Z) = 0])\right) \\
    &= \Pr_Z[U(Z) = 1] \cdot \Pr_Z[U(Z) = 0] \cdot \left(\E_Z[\hat{U}(Z) | U(Z) = 1] - \E_Z[\hat{U}(Z) | U(Z) = 0]\right) \\
    &= \Pr_\rMi[u(\rMi) = 1] \cdot \Pr_\rMi[u(\rMi) = 0] \cdot \left(\E_\rMi[\uhat(\rMi) \mid u(\rMi) = 1] - \E_\rMi[\uhat(\rMi) \mid u(\rMi) = 0]\right).
\end{align*}
When $\Pr_\rMi[u(\rMi) = 1] = 0$ or $\Pr_\rMi[u(\rMi) = 1] = 0$, the problem is trivial as any selection is optimal. In all non-trivial cases, Eq.~\eqref{eqn:fkg} implies that
\begin{equation*}
    \E_\rMi[\uhat(\rMi) \mid u(\rMi) = 1] - \E_\rMi[\uhat(\rMi) \mid u(\rMi) = 0] \geq 0,
\end{equation*}
with equality attained when $\Cov_Z[U(Z), \hat{U}(Z)] = 0$. Note that 
\begin{align*}
    &\phantom{{}={}} \Cov_Z[U(Z), \hat{U}(Z)] \\
    &= \Cov_Z\left[U(Z), \sum_{i \in N} |\phi_i(u)| \cdot Z_i + \sum_{\substack{i \in N \\ g_i = -1}} \phi_i(u)\right] \\
    &= \sum_{i \in N} |\phi_i(u)| \cdot \Cov_Z[U(Z), Z_i] \\
    &= \sum_{i \in N} |\phi_i(u)| \cdot \left(\E_Z[U(Z) \cdot Z_i] - \E_Z[U(Z)]\cdot \E_Z[Z_i]\right) \\
    &= \sum_{i \in N} |\phi_i(u)| \cdot \left(\Pr_Z[Z_i = 1] \cdot \E_Z[U(Z) | Z_i = 1] - \Pr_Z[Z_i = 1] \cdot \E_Z[U(Z)]\right) \\
    &= \sum_{i \in N} |\phi_i(u)| \cdot \Pr_Z[Z_i = 1] \cdot \left(\E_Z[U(Z) | Z_i = 1] - \Pr_Z[Z_i = 1] \cdot \E_Z[U(Z) | Z_i = 1] - \Pr_Z[Z_i = 0] \cdot \E_Z[U(Z) | Z_i = 0]\right) \\
    &= \sum_{i \in N} |\phi_i(u)| \cdot \Pr_Z[Z_i = 1] \cdot \Pr_Z[Z_i = 0] \cdot \left(\E_Z[U(Z) | Z_i = 1] - \E_Z[U(Z) | Z_i = 0]\right).
\end{align*}
The first term $|\phi_i(u)| \geq 0$ and attains $0$ only if $\phi_i(u) = 0$. The second and third terms, $\Pr_Z[Z_i = 1] \cdot \Pr_Z[Z_i = 0] = (m/n) \cdot (1 - m/n) > 0$. Since $U(Z)$ is monotone increasing, the last term always $\geq 0$, where $0$ is attained only if changing $Z_i$ from $0$ to $1$ never increases $U$'s value, regardless of whether other $j \in N \setminus \{i\}$ are selected or not. This means $i$'s marginal contribution is always $0$, hence its Shapley value $\phi_i(u)$ is $0$. Thus, $\Cov_Z[U(Z), \hat{U}(Z)] = 0$ only if every player $i$'s Shapley value is $0$ (which is trivial in our setting). This concludes our proof.
\end{proof}

\subsection{Proof and Discussion of Proposition~\ref{prop:uv-self-mappedness}} \label{app:consistent-player-implies-shapley-informativeness}

With \cref{lem:conditional-expectation-separation}, we are now ready to prove that any utility function that satisfies the Consistent Player assumption (\cref{assump:consistent-player}) is Shapley-informative. The idea is to show that for any candidate set $M \subseteq N$, its sum of Shapley values $\uhat{(M)}$ concentrates around its conditional expectation $\E_\mathcal{M}[\uhat(\rM) \mid u(\rM) = u(M)]$.

\begingroup
\renewcommand{\thetheorem}{\ref{prop:uv-self-mappedness}}
\begin{proposition}[Consistent Player Implies Shapley-Informativeness, Formal]
Let $N \coloneq \{1, 2, \cdots, n\}$ be a set of data and $u: 2^N \to \{0, 1\}$ be a utility function mapping each coalition $S \subseteq N$ to its utility $u(S)$. Assume $u$ satisfies the Consistent Player assumption (\cref{assump:consistent-player}). For $\tau \in \mathbb{R}$, let $h_\tau$ be the threshold function $h_\tau(u(M)) = \mathbb{I}[u(M) \geq \tau]$. For any selection size $m \leq n$, there exists some threshold $\tau$ such that $u$ is $(0, \delta)$-Shapley-informative at size $m$ under mapping $h_\tau$, where 
\begin{align*}
    \delta &= \exp{\left(-\frac{\gamma^2 }{8\mathcal{V} + \frac{4}{3}\phimax\gamma}\right)}; \\
    \gamma &\coloneq \E_\rMi[\uhat(\rMi) \mid u(\rMi) = 1] - \E_\rMi[\uhat(\rMi) \mid u(\rMi) = 0]; \\
    \mathcal{V} &\coloneq \frac{m(n-m)}{n^2} \sum_{i \in N} |\phi_i(u)|^2; \\
    \phimax & \coloneq \max_{i \in N} |\phi_i(u)|.
\end{align*}
\end{proposition}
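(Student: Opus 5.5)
We work in the independent-inclusion model $\rMi$ of \cref{lem:conditional-expectation-separation} and reuse its notation. By that lemma,
\[
\uhat(\rMi) = \hat U(Z) = \sum_{i \in N} |\phi_i(u)|\, Z_i + c, \qquad c \coloneq \sum_{i:\, g_i = -1} \phi_i(u),
\]
where the $Z_i$ are independent Bernoulli variables. Each $Z_i$ has parameter $m/n$ or $1-m/n$, so its variance is $\frac{m(n-m)}{n^2}$. Hence $\hat U$ is a sum of independent terms with ranges bounded by $\phimax$ and total variance exactly $\mathcal{V}$. Write $\mu_b \coloneq \E[\hat U(Z) \mid U(Z)=b]$ for $b \in \{0,1\}$, so $\gamma = \mu_1 - \mu_0 \ge 0$ by the lemma. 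Note that the mapping $h_\tau$ is to be applied to $\uhat$. I will take the midpoint threshold $\tau \coloneq (\mu_0+\mu_1)/2$.

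The error event $\{u(\rMi) \neq \mathbb{I}[\uhat(\rMi) \ge \tau]\}$ splits into two pieces. The first is $\{U=1,\ \hat U < \mu_1 - \gamma/2\}$, and the second is $\{U=0,\ \hat U \ge \mu_0 + \gamma/2\}$. The plan is to bound each piece by a Bernstein-type tail with deviation $t = \gamma/2$. The unconditional Bernstein bound gives
\[
\Pr\bigl[|\hat U - \E \hat U| \ge t\bigr] \le \exp\!\left(-\frac{t^2}{2\mathcal{V} + \tfrac{2}{3}\phimax t}\right).
\]
Substituting $t = \gamma/2$ gives $\exp\bigl(-\gamma^2/(8\mathcal{V} + \tfrac{4}{3}\phimax\gamma)\bigr)$, which is exactly $\delta$ (up to the factor from combining the two pieces). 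This matches the stated form and confirms the choice $t=\gamma/2$.

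\textbf{Main obstacle.} The deviations that matter are measured from the conditional means $\mu_b$, not from the unconditional mean. Conditioning on $U=1$ (an increasing event) destroys independence of the $Z_i$, so Bernstein cannot be applied directly. I would try two routes.

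\emph{Route (a).} Use monotonicity: conditioned on the increasing event $\{U=1\}$, the law of $Z$ stochastically dominates the product measure (Harris/FKG). This gives the lower tail of $\hat U$ on $\{U=1\}$ relative to the unconditional law. Similarly, on the decreasing event $\{U=0\}$, the law is dominated, which controls the upper tail. If instead $\tau$ is placed at $\E\hat U$, each error piece is bounded by a one-sided unconditional Bernstein tail. The gap $\gamma$ then enters through $\E\hat U - \mu_0 = \Pr[U=1]\gamma$ and $\mu_1 - \E\hat U = \Pr[U=0]\gamma$. This is clean but loses the factors $\Pr[U=b]$.

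\emph{Route (b).} Apply the concentration of a sum of bounded independent terms under the conditional measure (Bernstein/Freedman via a martingale). Then bound the two one-sided probabilities, weighted by $\Pr[U=1]$ and $\Pr[U=0]$. Their sum is at most $\max$ of the two tails, which is $\le \delta$.

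I expect to settle for whichever route reproduces the stated constant. Most likely I will present route (b) with a Bernstein bound at deviation $\gamma/2$, choosing $\tau$ between $\mu_0$ and $\mu_1$.

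Finally, I will transfer the result from $\rMi$ to a uniform size-$m$ subset. Conditioned on $|\rMi| = m$, $\rMi$ is uniform over size-$m$ subsets. I would either repeat the argument under sampling without replacement (Hoeffding's comparison gives the same Bernstein bound, and variance can only decrease), or accept the paper's remark that this restriction is handled separately. This yields $(0,\delta)$-Shapley-informativeness under $h_\tau$ at size $m$.
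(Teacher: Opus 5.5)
You follow the paper's proof: the same representation $\uhat(\rMi)=\sum_i|\phi_i(u)|Z_i+c$, the same midpoint threshold $\tau=\mu_0+\gamma/2$, and a one-sided Bernstein bound at deviation $\gamma/2$ with variance proxy $\mathcal{V}$. The paper applies Bernstein directly to the conditional law given $u(\rMi)=0$ (and symmetrically $=1$), centred at the conditional mean. That is exactly the step you flag as the main obstacle, and the paper does not justify it. Neither of your routes justifies it either.

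\textbf{Route (a).} Harris/FKG only gives $\Pr[\uhat<\tau\mid U=1]\le\Pr[\uhat<\tau]$ and $\Pr[\uhat\ge\tau\mid U=0]\le\Pr[\uhat\ge\tau]$. The first bound is small only if $\tau$ lies well below $\E\uhat$, the second only if $\tau$ lies well above it. At $\tau=\E\uhat$ both are trivial (deviation zero), so $\gamma$ never enters the bound.

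\textbf{Route (b).} Under $\Pr[\cdot\mid U=b]$ the $Z_i$ are dependent. The Doob-martingale increments then contain the shift in the conditional means of all unrevealed coordinates, so they are not bounded by $\phimax$. Likewise, the predictable variance is not bounded by $\mathcal{V}$.

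\textbf{The conditional bound fails.} In fact $\Pr[\uhat\ge\tau\mid U=0]\le\delta$, which your last step needs, is false in general. Take disjoint groups $B,C$ of players with $|C|=|B|/100$ and $m=n/2$, and set $u(S)=\mathbb{I}[|S\cap B|\ge k]\,\mathbb{I}[|S\cap C|\ge l]$. This game is monotone, so the Consistent Player assumption holds with all $g_i=+1$. Choose $k,l$ so that the failure events $\{|\rMi\cap B|<k\}$ and $\{|\rMi\cap C|<l\}$ are equally likely, each about $c$ standard deviations into the lower tail.

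Since $l/|C|$ is then smaller than $k/|B|$ by roughly $c$ standard deviations of the relevant order statistic, the $B$-threshold is almost always reached last in a random permutation. Hence the $C$-players get negligible Shapley value, and for large $|B|$ we have $\uhat\approx|\rMi\cap B|/|B|$. On the half of $\{U=0\}$ where only $C$ fails, $\uhat$ is distributed essentially as it is unconditionally. Meanwhile $\tau\approx\E\uhat-\tfrac{c}{4}\sqrt{\mathcal{V}}$. So $\Pr[\uhat\ge\tau\mid U=0]\approx\tfrac12\Phi(c/4)\approx 0.49$ for $c=8$, whereas $\delta\approx e^{-c^2/32}\approx 0.14$.

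The total error in this example is still below $\delta$, but only because $\Pr[U=0]$ is tiny. A correct argument therefore has to bound the joint events $\{U=b,\ h_\tau(\uhat)\neq b\}$, using the weights $\Pr[U=b]$, rather than bounding each conditional tail by $\delta$.

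\textbf{Transfer to exact size $m$.} This step is also not established. Both $\gamma$ and $\tau$ are defined under $\rMi$. Conditioning on $|\rMi|=m$ changes the conditional means $\E[\uhat\mid u=b]$ and the dependence structure. Hoeffding's comparison for sampling without replacement therefore controls only the unconditional sum, not the conditional (or joint) tails you need. The paper does not prove the stated $\delta$ at exact size $m$ either. Its subsequent remark settles for a weaker bound, obtained by dividing by $\Pr[|\rMi|=m]$, which costs a factor of order $\sqrt{m(n-m)/n}$.
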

\begin{proof}
By \cref{lem:conditional-expectation-separation}, in the non-trivial cases, $\gamma > 0$. Choose $\tau = \E_\rMi[\uhat(\rMi) \mid u(\rMi) = 0] + \gamma / 2$. From App.~\ref{app:consistent-player-implies-separation-of-conditional-expectation}, $\uhat(\rMi) = \hat{U}(Z) = \sum_{i \in N} |\phi_i(u)| \cdot Z_i + \sum_{\substack{i \in N \\ g_i = -1}} \phi_i(u)$. Treating each $|\phi_i(u)| \cdot Z_i$ as an independent random variable bounded by $\phimax$. The variance of $|\phi_i(u)| \cdot Z_i$ is thus $(m/n)\cdot (1-m/n) \cdot |\phi_i(u)|^2$.
By Bernstein's inequality \citep{bernstein1924modification} (one-sided version), 
\begin{align*}
&\phantom{{}={}} \Pr_\rMi[\uhat(\rMi) \geq \tau \mid u(\rMi) = 0] \\
&=\Pr_Z[\hat{U}(Z) - \E_Z[\hat{U}(Z) \mid U(Z) = 0] \geq \frac{\gamma}{2} \mid U(Z) = 0] \\
&\leq \exp\left(-\frac{(\frac{\gamma}{2})^2}{2\sum_{i \in N} \frac{m}{n}\cdot (1-\frac{m}{n}) \cdot |\phi_i(u)|^2  + \frac{1}{3}\phimax \gamma}\right) \\
&\leq \exp\left(-\frac{\gamma^2}{8\mathcal{\mathcal{V}}  + \frac{4}{3}\phimax \gamma}\right).
\end{align*}
Similarly, 
\begin{equation*}
\Pr_\rMi[\uhat(\rMi) \leq \tau \mid u(\rMi) = 1] \leq \exp\left(-\frac{\gamma^2}{8\mathcal{V}  + \frac{4}{3}\phimax \gamma}\right).
\end{equation*}

\end{proof}

\begin{remark*}
As an example, consider the task of selecting $400$ out of $2000$ data with $\gamma=0.25, \mathcal{V}=1.09\mathrm{e}{-2}, \phimax = 1.05\mathrm{e}{-2}$,\footnote{For $n = 2000$, by Efficiency axiom of the Shapley value (App.~\ref{app:equitability-axioms}), the average Shapley value of all feasible data is $1/2000 = 5\mathrm{e}{-4}$. So we consider samples where the Shapley values are drawn from the uniform distribution $\mathrm{Unif}(-9.5\mathrm{e}{-3}, 1.05\mathrm{e}{-2})$ and take the sampled value of $\mathcal{V}$ and $\phimax$.} the bound given in \cref{prop:uv-self-mappedness} gives $\delta \approx 4.08\mathrm{e}{-3}$.
\end{remark*}

\begin{remark*}
\cref{prop:uv-self-mappedness} analyzes size-$m$ subsets by consider binomial samples where each datum is included with probability $m/n$ for ease of analysis. While the bound is meaningful, if one wants to strictly enforce a size-$m$ subset $\rM$ instead of a binomal sample $\rMi$, a straightforward way is to condition on the event $|\rMi| = m$. For example, by Stirling,
\begin{align*}
    \Pr[|\rMi| = m] &= \binom{n}{m} \left(\frac{m}{n}\right)^m \left(1 - \frac{m}{n}\right)^{n-m} \\
    &= \frac{n!}{m!(n-m)!} \left(\frac{m}{n}\right)^m \left(1 - \frac{m}{n}\right)^{n-m} \\
    &\geq \frac{\sqrt{2\pi n} n^n e^{-n} e^{\frac{1}{12n+1}}}{\sqrt{2\pi m} m^m e^{-m} e^{\frac{1}{12m}}\sqrt{2\pi(n-m)} (n-m)^{n-m} e^{m-n} e^{\frac{1}{12(n-m)}}} \left(\frac{m}{n}\right)^m \left(1 - \frac{m}{n}\right)^{n-m} \\
    &\geq \frac{\exp{\left(\frac{1}{12n+1} - \frac{1}{12m} - \frac{1}{12(n-m)}\right)}}{\sqrt{2\pi m (n-m) / n}}.
\end{align*}
The last two inequalities are due to Stirling's formula \citep{robbins1955remark}. Hence,
\begin{align}
&\phantom{{}={}} \Pr_\rM[\uhat(\rM) \geq \tau \mid u(\rM) = 0] \nonumber \\ 
&\leq  \frac{\Pr_\rMi[\uhat(\rMi) \geq \tau \mid u(\rMi) = 0]}{\Pr_\rMi[|\rMi| = m]} \nonumber \\
&\leq  \frac{\exp\left(-\frac{\gamma^2}{8\mathcal{V}  + \frac{4}{3}\phimax \gamma}\right) \cdot \sqrt{2\pi m (n-m) / n}}{ \exp{\left(\frac{1}{12n+1} - \frac{1}{12m} - \frac{1}{12(n-m)}\right)}} \\
&= \frac{\exp\left(-\frac{\gamma^2}{8\mathcal{V}  + \frac{4}{3}\phimax \gamma} + \frac{1}{2} \log (2\pi m(n-m)/n)\right)}{ \exp{\left(\frac{1}{12n+1} - \frac{1}{12m} - \frac{1}{12(n-m)}\right)}}. \label{eqn:poisson-bound}
\end{align}
The same holds for the $u(\mathcal{M}) = 1$ case, \textit{mutatis mutandis}. Note that the above is a worst-case bound where the behavior on size-$m$ could be entirely different from the Bernoulli samples which is unrealistic. Despite so, the bound itself could be practical: 
For example, consider the task of selecting $400$ out of $2000$ data with $\gamma=0.25, \mathcal{V}=5\mathrm{e}{-3}, \phimax = 3\mathrm{e}{-3}$,\footnote{For $n = 2000$, by Efficiency axiom of the Shapley value (App.~\ref{app:equitability-axioms}), the average Shapley value of all feasible data is $1/2000 = 5\mathrm{e}{-4}$. So we consider samples where the Shapley values are drawn from the uniform distribution $\mathrm{Unif}(-2\mathrm{e}{-3}, 3\mathrm{e}{-3})$ and take the sampled value of $V$ and $\phimax$.} the bound given in \cref{prop:uv-self-mappedness} gives $\delta \approx 1.2\mathrm{e}{-4}$ while the above bound \eqref{eqn:poisson-bound}  gives $\delta \approx 6\mathrm{e}{-3}$. 
\end{remark*}

\begin{remark*}
\textit{How do the above bounds change as $n$ scales?} Due to the Efficiency axiom of the Shapley value (App.~\ref{app:equitability-axioms}), the sum of all $\phi_i(u)$'s is constant ($1$ for a validation datum that is eventually predicted correctly). Thus, in the ML setting where each datum has a limited influence, both $\mathcal{V}$ and $\phimax$ are of order $\Oh(n^{-1})$. Given the same selection ratio $m/n$ and $\gamma$, the exponent in the bound in \cref{prop:uv-self-mappedness} would be of order $\Oh(n)$ and gets smaller as $n$ grows. For the bound in \cref{eqn:poisson-bound}, the additional term in the exponent is of order $\log n$ (because $m(n-m)/n = (m/n) \cdot ((n-m)/n) \cdot n$ where the first two terms are constant when the selection ratio does not change). It is thus dominated by the first term in the exponent, and overall $\delta$ still decays as $n$ increases.
\end{remark*}

\endgroup

\subsection{Effectiveness of the Greedy Algorithm} \label{app:greedy-is-good}

In this section, we provide some supplementary results that demonstrate the effectiveness of the greedy algorithm in optimizing the \nash\ objective (\cref{def:nash-objective}). Specifically, when all Shapley values $\phi_i(u_v)$'s are non-negative, the \nash\ objective is monotone submodular and the greedy algorithm enjoys a guarantee of $(1 - 1/e)$ of the optimum. When some Shapley values are negative for real-life datasets, we plot in \cref{appfig:greedy} the \nash\ objective values given by greedily selected subsets vs. \num{100000} randomly selected subsets. As can be seen, the greedy algorithm consistently gives a much higher objective value, which validates its efficacy. In fact, this empirical trend appears better than that of many functions that are always monotone submodular. Additionally, the change of \nash\ objective value as more data are selected is similar to the change of actual utility/validation accuracy.

\begin{figure}[!h]
    \centering
    \begin{subfigure}[b]{0.1625\linewidth}
        \centering
        \includegraphics[width=\linewidth]{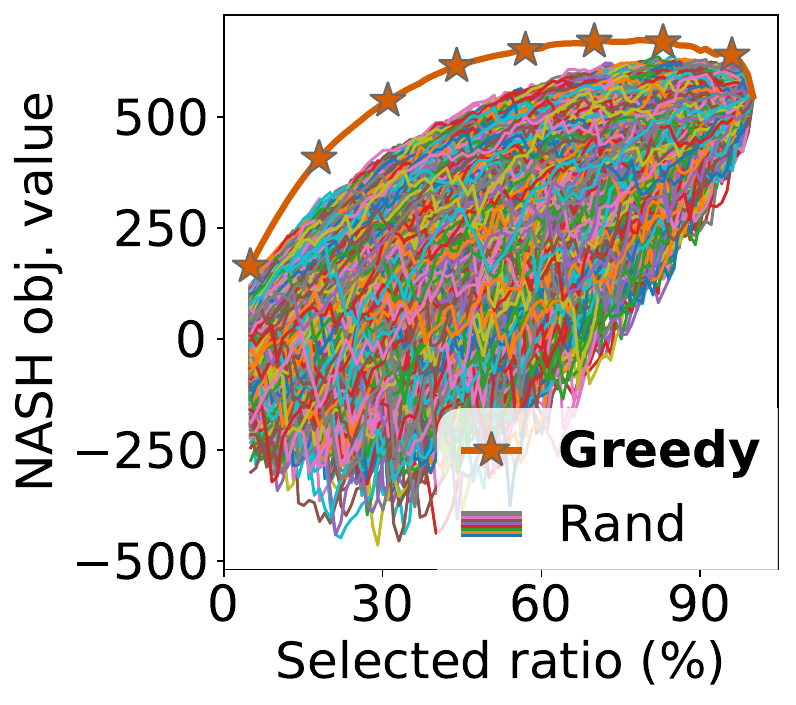}
        \captionsetup{justification=centering}
        \caption{\textbf{WD-LR.}}
    \end{subfigure}
    \begin{subfigure}[b]{0.1625\linewidth}
        \centering
        \includegraphics[width=\linewidth]{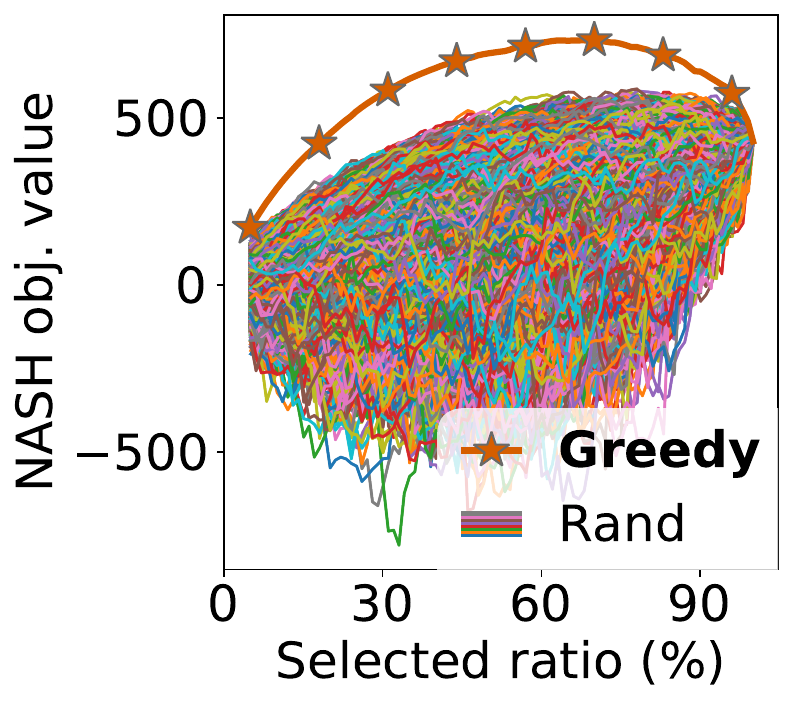}
        \captionsetup{justification=centering}
        \caption{\textbf{PO-LR.}}
    \end{subfigure}
    \begin{subfigure}[b]{0.1625\linewidth}
        \centering
        \includegraphics[width=\linewidth]{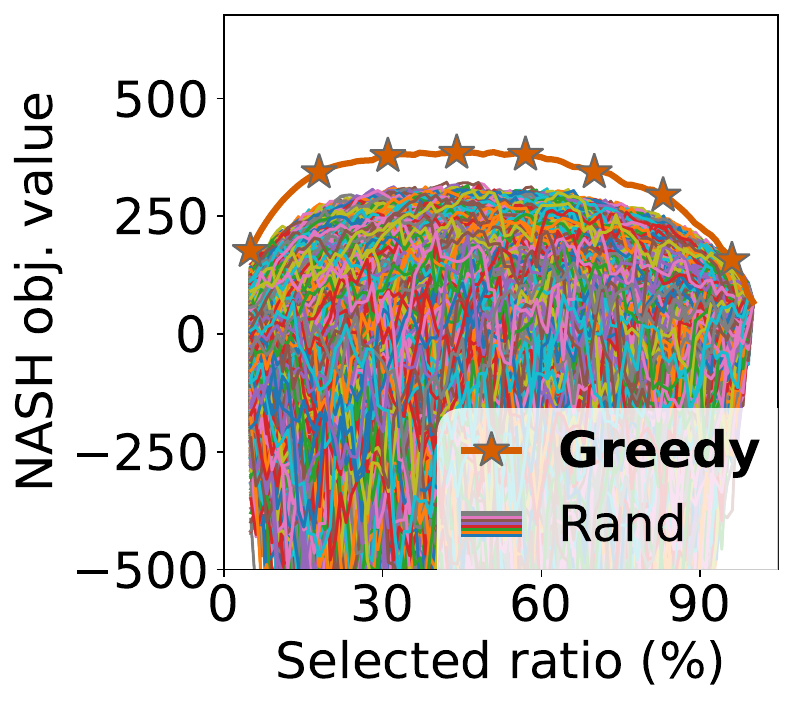}
        \captionsetup{justification=centering}
        \caption{\textbf{PM-LR.}}
    \end{subfigure}
    \caption{\textbf{Comparison of the objective value given by greedily selected subsets vs. randomly selected subsets.} The greedy algorithm consistently gives a much higher objective value.}
    \label{appfig:greedy}
\end{figure}

To gain more insights on why the greedy algorithm performs well for the \nash\ algorithm, we investigate the source of non-submodularity. Clearly, a large number of negative Shapley values come from the noisy/low-quality data with negative/small $\phi_i(u_V)$'s. \textit{Does the \nash\ objective have a nicer structure if such data are removed from the feasible set $N$}? 
We take the approach of \citet{summers2019performance} to empirically estimate the submodularity ratio of the \nash\ objective after removing different proportions of data with bottom Shapley values. Specifically, we estimate $\min_{A \subseteq N, B \subseteq N \setminus A} {(\sum_{i \in B} [f(A \cup \{i\}) - f(A)])}/{(f(A \cup B) - f(A))}$ by sampling multiple $B$ and $S$'s. 
The results are shown in \cref{tab:submodularity-ratio}. As the removed proportion of data with bottom Shapley values increases, the submodularity ratio increases to close to $1$ (i.e., nearly perfectly submodular). Since the removed data are the (consistently) harmful ones and would not be preferred by the greedy algorithm anyway, this can help explain why the greedy algorithm works well.

\begin{table}[!h]
    \centering
    \vspace{2mm}
    \caption{\textbf{Empirical submodularity ratio as different proportions of low-quality data (indicated by low Shapley values) are removed from the feasible set on WD-LRs.}}
    \setlength{\tabcolsep}{4pt}
    \begin{tabular}{c c c c c c c c c c c} 
        \toprule
        \textbf{Proportion removed} & $0\%$ & $10\%$ & $20\%$ & $30\%$ & $40\%$ & $50\%$ & $60\%$ & $70\%$ & $80\%$ & $90\%$ \\
        \midrule
        \textbf{Submodularity ratio} & $-6.28$ & $0.63$ & $0.85$ & $0.88$ & $0.91$ & $0.93$ & $0.95$ & $0.97$ & $0.98$ & $0.99$ \\
        \bottomrule
    \end{tabular}\label{tab:submodularity-ratio}
    \vspace{2mm}
    \end{table}

\clearpage

\section{Additional Experiment Results} \label{app:experiments}

We run the experiments on two machines (Ubuntu 22.04.3 LTS, AMD EPYC 7763 64-Core Processor with NVIDIA L40 GPUs (CUDA 12.3)). The use of GPUs is required for computing eNTKs for FreeShap \citep{wang2024freeshap} and finetuning language models. We write all the programs in Python and execute them using Anaconda. Readers may find the implementation details from \url{https://github.com/snoidetx/nash}.

\subsection{Setup}

\subsubsection{Datasets} \label{app:datasets}

As introduced in \cref{sec:experiments}, we evaluate our method on standard data valuation datasets as well as larger-scale language datasets. The data valuation datasets are listed in \cref{tab:data-valuation-datasets} (they are also included in the OpenDataVal benchmarks \citep{jiang2023opendataval}). We preprocess each dataset in the same way as existing data valuation works \citep{kwon2022beta, wang2023data, wang2024rethinking}: We pick a size-$200$ subset as the training set and a size-$2000$ subset as the validation set. For multi-class datasets, we binarize the labels using $\mathbb{I}[y=1]$. Moreover, since most if not all existing data valuation works/OpenDataVal do not consider any regression dataset, we include three OpenML datasets of similar scale (the last three rows of \cref{tab:data-valuation-datasets}).
\begin{table}[!h]
    \centering
    \vspace{2mm}
    \caption{\textbf{List of standard data valuation datasets used in this paper.}}
    \label{tab:data-valuation-datasets}
    \setlength{\tabcolsep}{4pt}
    \begin{tabular}{c c c} 
        \toprule
        \textbf{Dataset} & \textbf{Acronym} & \textbf{Source} \\
        \midrule
        Wind \citep{vanschoren2014wind} & \textbf{WD} & \url{https://www.openml.org/d/847}
        \\
        Pol \citep{vanschoren2014pol} & \textbf{PO} & \url{https://www.openml.org/d/722}
        \\
        Phoneme \citep{grin2022phoneme} & \textbf{PM} & \url{https://www.openml.org/d/43973}
        \\
        CPU \citep{vanschoren2014cpu} & \textbf{CP} & \url{https://www.openml.org/d/761}
        \\
        2D Planes \citep{vanschoren20142dplanes} & \textbf{TP} & \url{https://www.openml.org/d/727}
        \\
        APS Failure \citep{vanschoren2014apsfailure} & \textbf{AF} & \url{https://www.openml.org/d/41138}
        \\
        Vehicle \citep{duarte2004vehicle} & \textbf{VE} & \url{https://www.openml.org/d/357}
        \\
        Credit Card \citep{dal2015calibrating} & \textbf{CC} & \url{https://www.openml.org/d/42397}
        \\
        California Housing \citep{2022ch} & \textbf{CH} & \url{https://www.openml.org/d/43939} \\ Physiochemical Protein \citep{2022pp} & \textbf{PP} &\url{https://www.openml.org/d/44963}\\
        Auction \citep{2022au} & \textbf{AU} & \url{https://www.openml.org/d/44958} \\
        \bottomrule
    \end{tabular}
    \vspace{2mm}
    \end{table}

For the language datasets, we use the datasets from \citet{wang2024freeshap}. The Rotten Tomatoes Movie Review (\textbf{MR)} dataset contains single-sentence movie reviews labelled as ``terrible'' or ``great''.
The Microsoft Research Paraphrase Corpus (\textbf{MP}) dataset contains sentence pairs such that each pair is associated with a binary label indicating whether the pair of sentences are semantically similar.  
The Recognizing Textual Entailment (\textbf{RT}) dataset also contains sentence pairs and a binary label is used to indicate whether the first sentence entails the second.
The Stanford Sentiment Treebank v2 (\textbf{ST}) dataset is similar to the \textbf{MR} dataset, which contains sentences labelled with their sentiments. \cref{tab:language-datasets} gives the number of feasible data and validation data for each language dataset.
\begin{table}[!h] 
    \centering
    \vspace{2mm}
    \caption{\textbf{List of language datasets used in this paper.}}
    \label{tab:language-datasets}
    \setlength{\tabcolsep}{4pt}
    \begin{tabular}{c c c c} 
        \toprule
        \textbf{Dataset} & \textbf{Acronym} & \textbf{No. of feasible data} & \textbf{No. of val. data} \\
        \midrule
        Rotten Tomatoes Movie Review \citep{pang2005seeing} & \textbf{MR} & $8530$ & $1066$
        \\
        Microsoft Research Paraphrase Corpus \citep{dolan2005automatically} & \textbf{MP} & $3668$ & $1725$
        \\
        Recognizing Textual Entailment \citep{bentivogli2009fifth} & \textbf{RT} & $2490$ & $277$
        \\
        Stanford Sentiment Treebank v2 \citep{socher2013recursive} & \textbf{ST} & $25000$ & $872$
        \\
        \bottomrule
    \end{tabular}
    \vspace{2mm}
    \end{table}

\subsubsection{Models}

For the standard data valuation tasks, we use the logistic regression (\textbf{LR}) model as used in \citep{kwon2022beta, wang2023data, wang2024rethinking}. We also considered $K$NN (\textbf{KN}), threshold $K$NN (\textbf{TN}) and ridge regression (\textbf{RR})  models. For the larger-scale finetuning tasks, we follow the same settings as \citet{wang2024freeshap} and use the BERT (\textbf{BT}) \citep{devlin2019bert} and Llama2-7B (\textbf{LM}) \citep{touvron2023llama2} models. We use \texttt{dataset}-\texttt{model} (e.g., \textbf{WD-LR}, \textbf{MR-BT}) to indicate the specific setting used in each experiment.

For BERT, we finetune the \texttt{bert-base-uncased} model. We freeze the first $8$ out of the $12$ layers of encoders and conduct finetuning on the remaining layers using Adam optimizer. Dropout and data shuffling are disabled. The learning rate is set to be $1\mathrm{e}{-5}$, batch size is $16$, number of training epochs is $10$. 
For Llama2-7B, we do not freeze the $32$ layers of decoders since we use LoRA for efficient fine-tuning. We set the rank and alpha value to be $16$ for LoRa and set no bias and dropout. The learning rate is set to be $1\mathrm{e}{-5}$ for \textbf{RT} and \textbf{MP} and $1\mathrm{e}{-6}$ for \textbf{MR}. The batch size is $2$ for \textbf{RT} and \textbf{MP} and $4$ for \textbf{MR}. Number of training epochs is $5$.
The maximum sequence length is set to be $64$ for \textbf{MR} and \textbf{ST}, $128$ for \textbf{MP} and $256$ for \textbf{RT}. We adopt the same prompts for each dataset in \citet{wang2024freeshap} as shown in \cref{tab:prompts}.

\begin{table}[h]
\centering
\caption{\textbf{List of prompts and targets for finetuning tasks on language datasets.} For single-sequence tasks, $\texttt{[text]}$ denotes the input text; for sentence-pair tasks, \texttt{[text0]} and \texttt{[text1]} represent the individual segments within each pair. \texttt{mask} indicates the position of the token to be predicted.}
\begin{tabular}{@{}c c l l@{}}
\toprule
\textbf{Model} & \textbf{Dataset} & \textbf{Prompt} & \textbf{Target} \\ 
\midrule
BERT (\textbf{BT}) 
 & \textbf{MR} & \texttt{[text]} It was \texttt{[mask]}. & terrible, great \\
 & \textbf{MP} &  \texttt{[text0]} \texttt{[mask]}, \texttt{[text1]} & No, Yes \\
 & \textbf{RT} &  \texttt{[text0]}? \texttt{[mask]}, \texttt{[text1]} & No, Yes \\
 & \textbf{ST} & \texttt{[text]} It was \texttt{[mask]}. & terrible, great \\
\midrule
Llama2-7B (\textbf{LM})
& \textbf{MR} &  \texttt{[text]} It was \texttt{mask}. & terrible, great \\
 & \textbf{MP} &  \texttt{[text0]} Question: \texttt{[text1]} Yes or No? \texttt{mask} & No, Yes \\
 & \textbf{RT} & \texttt{[text0]}? \texttt{[text1]} Entailment or not? \texttt{mask} & No, Yes \\ 
\bottomrule
\end{tabular}\label{tab:prompts}
\end{table}

\subsubsection{Data Valuation Methods}

We first give the approximation methods for Data Shapley and semivalues used in our experiments. For the standard data valuation tasks, we adopt MC Shapley with $500$ sampled permutations for \textbf{LR} models and $K$NN-Shapley and T$K$NN-Shapley for the corresponding nearest neighbor models. For the finetuning tasks, we adopt FreeShap paired with TMC Shapley with $1000$ sampled permutations for \textbf{MR}, \textbf{MP} and \textbf{RT}, and $200$ sampled permutations for \textbf{ST}. The tolerance of TMC Shapley is set to be $0.05$. For all experiments that involve other semivalues, we adopt the least squares approximation of semivalues.

Hyperparameters for the other baselines are given in \cref{tab:baseline-hyperparams}. Influence function and TracIn are computed on unfrozen model parameters. For influence function, we use the conjugate gradient approximation for BERT (\textbf{BT}) and the LiSSA approximation for Llama2-7B (\textbf{LM}).

\begin{table}[h]
\centering
\caption{\textbf{List of hyperparameters for baseline data valuation methods.}}
\label{tab:baseline_hyperparams}
\begin{tabular}{@{}c c l@{}}
\toprule
\textbf{Model} & \textbf{Method} & \textbf{Hyperparameters} \\ 
\midrule
BERT (\textbf{BT})
 & Influence & damping $ = 3 \times 10^{-3}$, maximum iteration $= 1000$ \\
 & TracIn & checkpoint step $= 3$ iterations \\
 & Representer & $\lambda = 3 \times 10^{-3}$ for \textbf{MP}, $1.56 \times 10^{-3}$ for \textbf{MR}, $2.5 \times 10^{-3}$ for \textbf{RT}, $1.56 \times 10^{-3}$ for \textbf{ST} \\ 
\midrule
Llama2-7B (\textbf{LM})
 & Influence & damping $ = 3 \times 10^{-3}$, repeat $=1$, depth $=2500$, scale $=1\times10^{-4}$ \\
 & TracIn & checkpoint step $= 1$ epoch \\
 & Representer & $\lambda = 2.2 \times 10^{-2}$ for \textbf{MP}, $2.2 \times 10^{-2}$ for \textbf{MR}, $2.26 \times 10^{-2}$ for \textbf{RT} \\ 
\bottomrule
\end{tabular}\label{tab:baseline-hyperparams}
\end{table}

\subsection{General Data Selection} \label{app:general-data-selection}

In \cref{appfig:data-selection}, we include additional experiment results under the general data selection setting described in \cref{subsec:general-data-selection}. Similar to the results in the main paper, \nash\ generally outperforms other baselines by a clear margin and consistently improves over the performance of vanilla Data Shapley. This proves the efficacy of our method.

Note that the performance of vanilla Data Shapley vs. random selection is not consistent. This is as expected because even though no noise is injected, different datasets inherently have different quality of data and thus vanilla Data Shapley could sometimes enjoy an advantage.

\begin{figure}[!h]
    \centering
    \begin{subfigure}[b]{0.1625\linewidth}
        \centering
        \includegraphics[width=\linewidth]{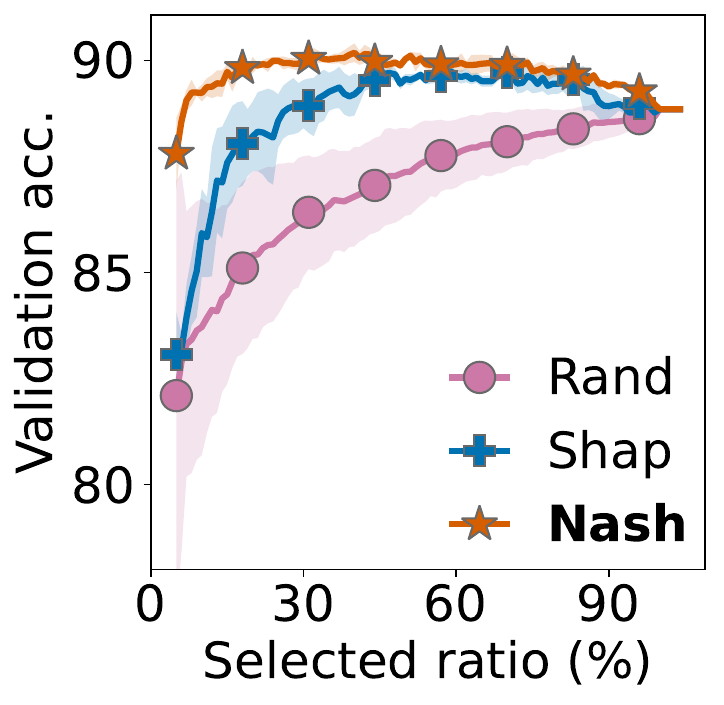}
        \caption{\textbf{CP-LR.}}
    \end{subfigure}
    \begin{subfigure}[b]{0.1625\linewidth}
        \centering
        \includegraphics[width=\linewidth]{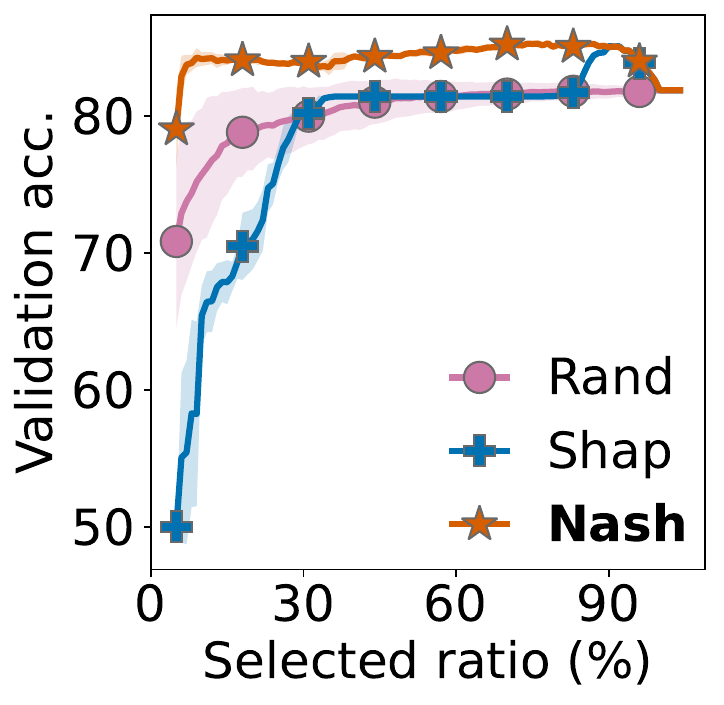}
        \caption{\textbf{TP-LR.}}
    \end{subfigure}
    \begin{subfigure}[b]{0.1625\linewidth}
        \centering
        \includegraphics[width=\linewidth]{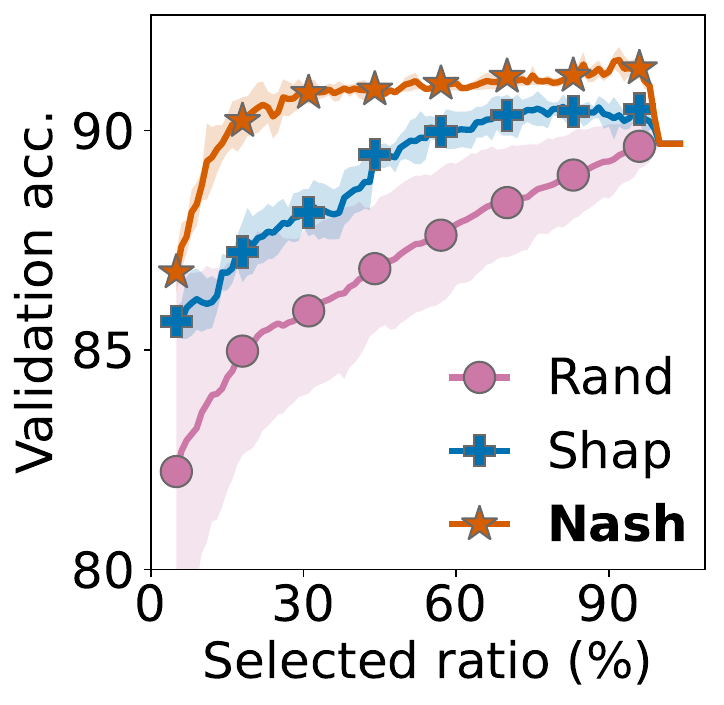}
        \caption{\textbf{AF-LR.}}
    \end{subfigure}
    \begin{subfigure}[b]{0.1625\linewidth}
        \centering
        \includegraphics[width=\linewidth]{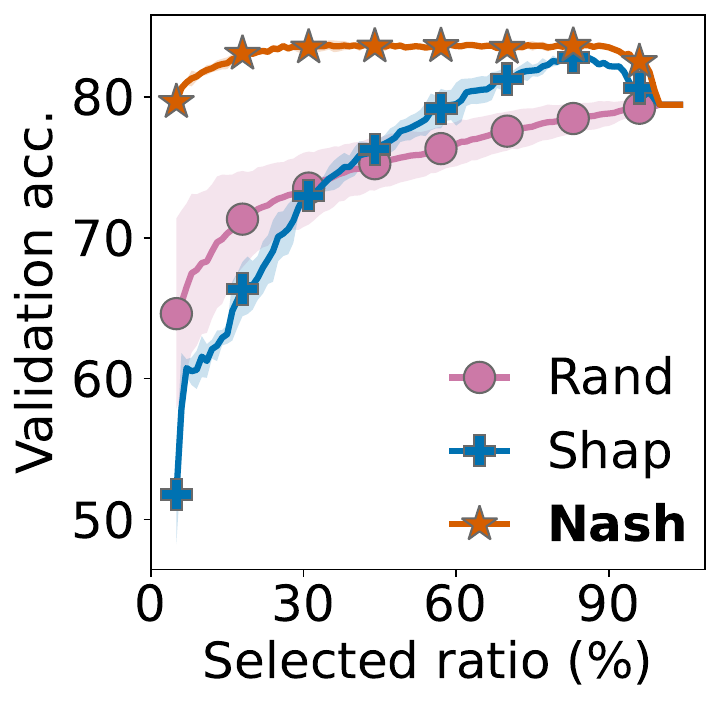}
        \caption{\textbf{VE-LR.}}
    \end{subfigure}
    \begin{subfigure}[b]{0.1625\linewidth}
        \centering
        \includegraphics[width=\linewidth]{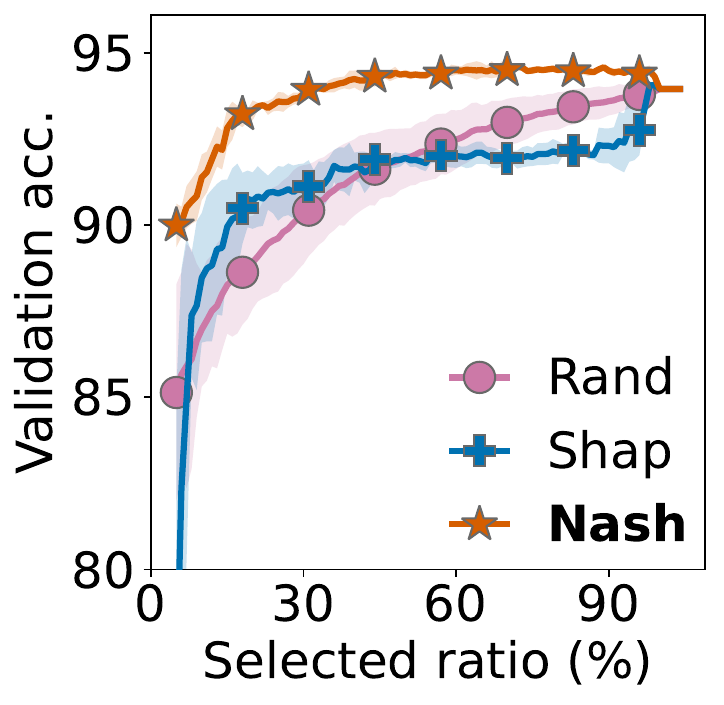}
        \caption{\textbf{CC-LR.}}
    \end{subfigure}
    \\
        \begin{subfigure}[b]{0.1625\linewidth}
        \centering
        \includegraphics[width=\linewidth]{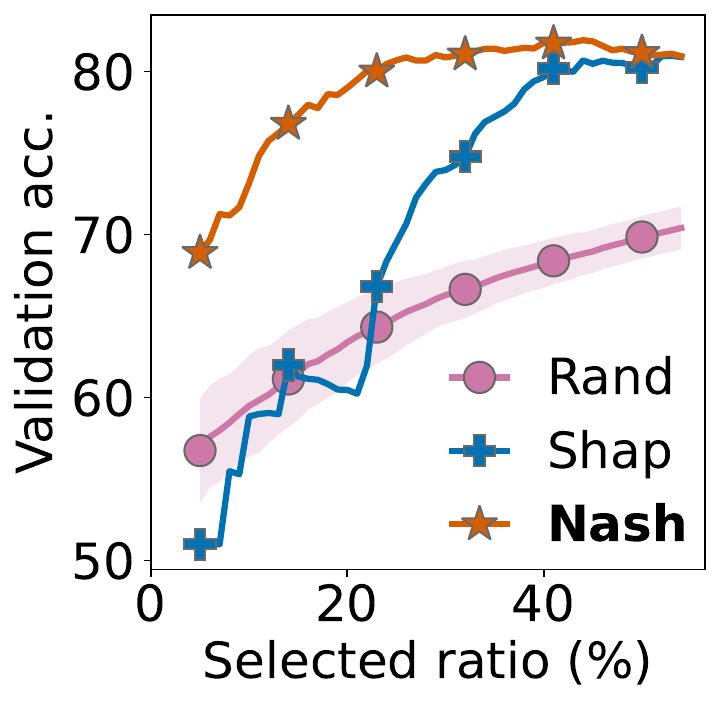}
        \caption{\textbf{PO-KN.}}
    \end{subfigure}
    \begin{subfigure}[b]{0.1625\linewidth}
        \centering
        \includegraphics[width=\linewidth]{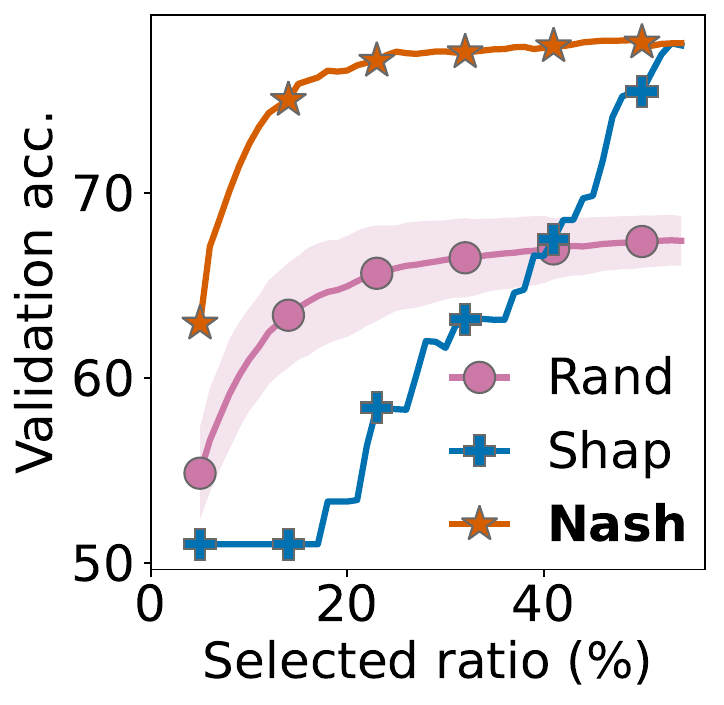}
        \caption{\textbf{VE-KN.}}
    \end{subfigure}
    \begin{subfigure}[b]{0.1625\linewidth}
        \centering
        \includegraphics[width=\linewidth]{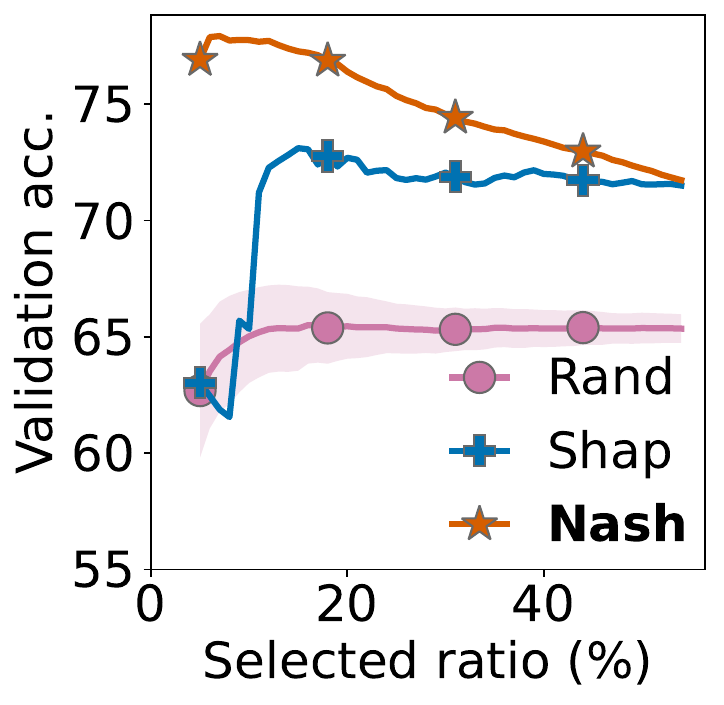}
        \caption{\textbf{TP-TN.}}
    \end{subfigure}
    \begin{subfigure}[b]{0.1625\linewidth}
        \centering
        \includegraphics[width=\linewidth]{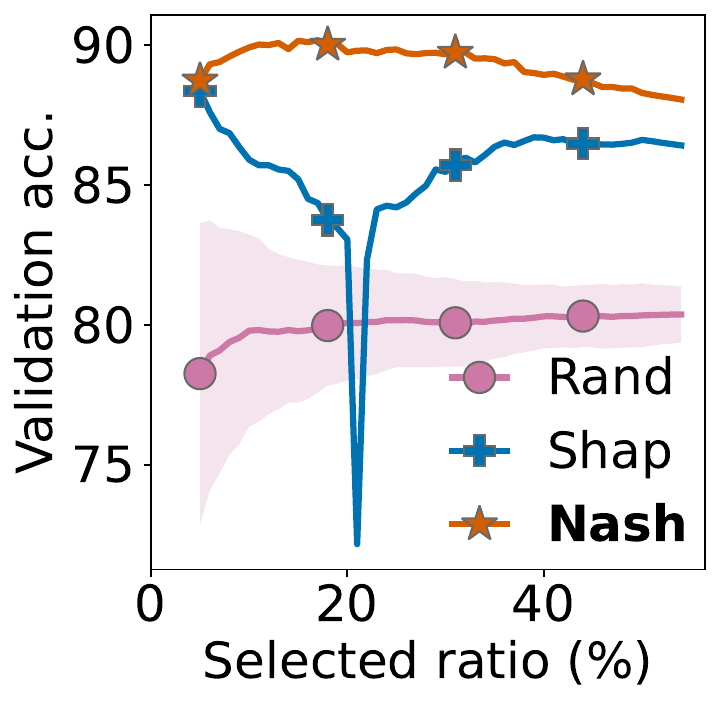}
        \caption{\textbf{CC-TN.}}
    \end{subfigure}
    \\
    \begin{subfigure}[b]{0.1625\linewidth}
        \centering
        \includegraphics[width=\linewidth]{Figures/Experiments/MR-BT/mr-bt.pdf}
        \caption{\textbf{MR-BT.}}
    \end{subfigure}
    \begin{subfigure}[b]{0.1625\linewidth}
        \centering
        \includegraphics[width=\linewidth]{Figures/Experiments/MP-BT/mp-bt.pdf}
        \caption{\textbf{MP-BT.}}
    \end{subfigure}
    \begin{subfigure}[b]{0.1625\linewidth}
        \centering
        \includegraphics[width=\linewidth]{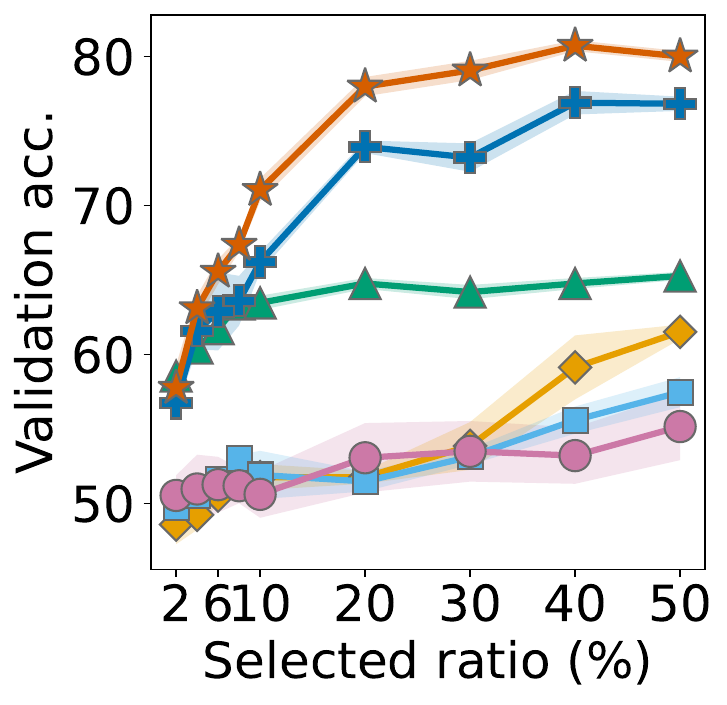}
        \caption{\textbf{RT-BT.}}
    \end{subfigure}
    \begin{subfigure}[b]{0.1625\linewidth}
        \centering
        \includegraphics[width=\linewidth]{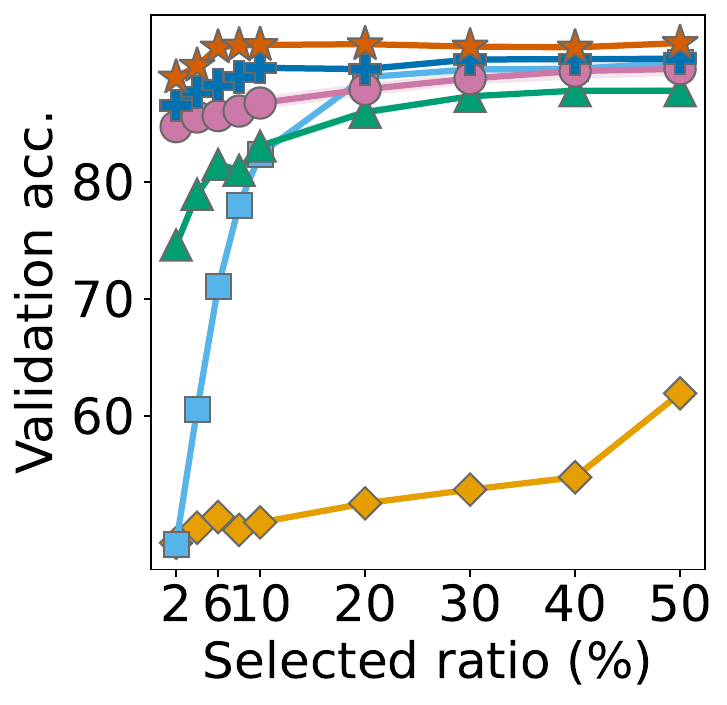}
        \caption{\textbf{ST-BT.}}
    \end{subfigure}
    \\
    \begin{subfigure}[b]{0.1625\linewidth}
        \centering
        \includegraphics[width=\linewidth]{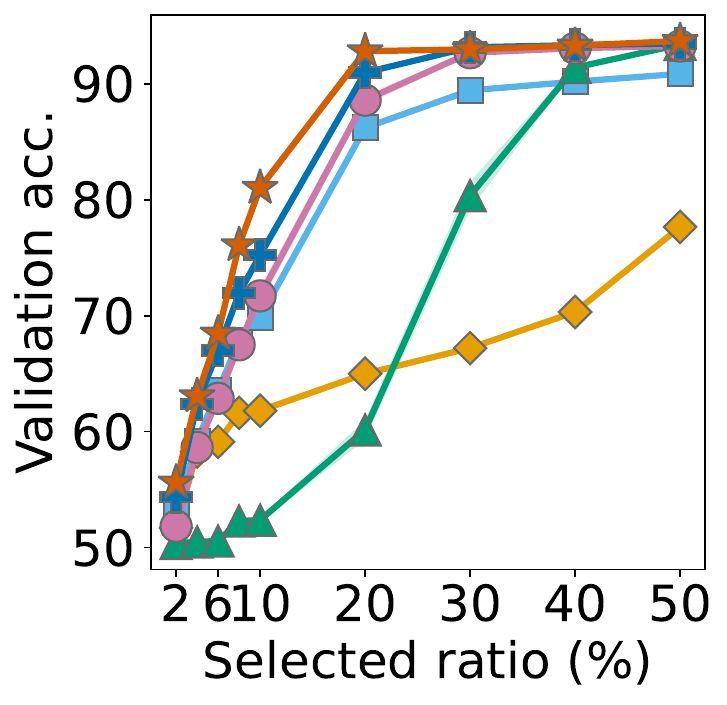}
        \caption{\textbf{MR-LM.}}
        \label{appsubfig:data-selection-mr-lm}
    \end{subfigure}
    \begin{subfigure}[b]{0.1625\linewidth}
        \centering
        \includegraphics[width=\linewidth]{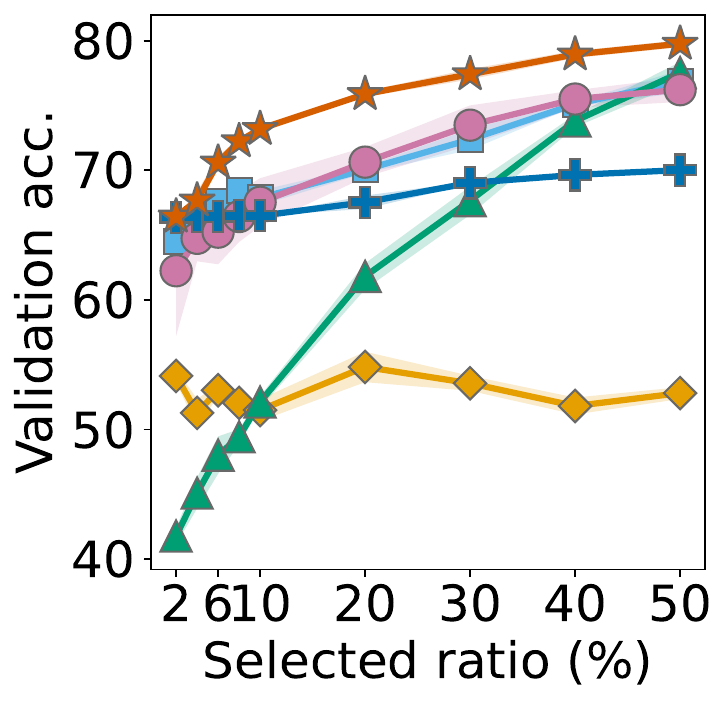}
        \caption{\textbf{MP-LM.}}
        \label{appsubfig:data-selection-mp-lm}
    \end{subfigure}
    \begin{subfigure}[b]{0.1625\linewidth}
        \centering
        \includegraphics[width=\linewidth]{Figures/Experiments/RT-LM/rt-lm.pdf}
        \caption{\textbf{RT-LM.}}
        \label{appsubfig:data-selection-rt-lm}
    \end{subfigure}
    \caption{\textbf{Data selection performance.} Different ratios of training data are selected and evaluated using validation accuracy. \nash\ in general outperforms other baselines.}
    \label{appfig:data-selection}
\end{figure}

Most of our experiments focus on classification problems with validation accuracy as the utility function. Nonetheless, \nash\ can naturally extend to other problems and is compatible with other utility functions. For example, \nash\ would prefer data that reduce the mean squared error (MSE) of many poorly predicted validation data instead of well-predicted validation data even though the total change in MSE is the same. To empirically verify this, in \cref{appfig:other-utility-functions} we include additional results for regression tasks with negated MSE as utility function (\ref{rebfig:ch-rr}-\ref{rebfig:au-rr}) and for classification tasks but using validation loss as utility function (\ref{rebfig:wd-lr-loss}-\ref{rebfig:po-lr-loss}). The trends are similar: Vanilla Data Shapley may work no better than random and \nash\ always significantly improves over it.

\begin{figure}[!h]
    \centering
    \begin{subfigure}[b]{0.1625\linewidth}
        \centering
        \includegraphics[width=\linewidth]{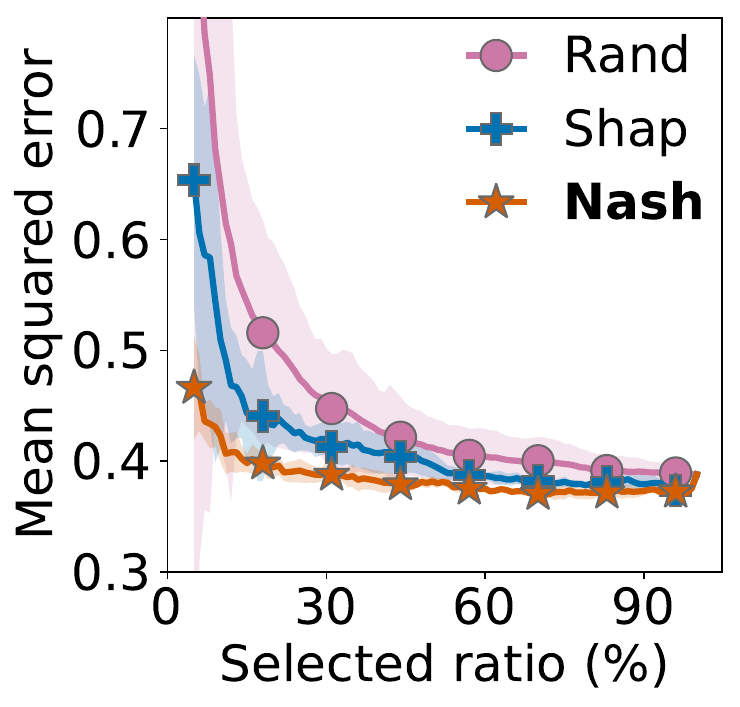}
        \caption{\textbf{CH-RR.}}
        \label{rebfig:ch-rr}
    \end{subfigure}
    \begin{subfigure}[b]{0.1625\linewidth} 
        \centering
        \includegraphics[width=\linewidth]{Figures/Experiments/PP-RR/pp-rr.pdf}
        \caption{\textbf{PP-RR.}}
        \label{rebfig:pp-rr}
    \end{subfigure} 
    \begin{subfigure}[b]{0.1625\linewidth} 
        \centering
        \includegraphics[width=\linewidth]{Figures/Experiments/AU-RR/au-rr.pdf}
        \caption{\textbf{AU-RR.}}
        \label{rebfig:au-rr}
    \end{subfigure}
    \begin{subfigure}[b]{0.1625\linewidth} 
        \centering
        \includegraphics[width=\linewidth]{Figures/Experiments/WD-LR/wd-lr-loss.pdf}
        \caption{\textbf{WD-LR loss.}}
        \label{rebfig:wd-lr-loss}
    \end{subfigure}
    \begin{subfigure}[b]{0.1625\linewidth}
        \centering
        \includegraphics[width=\linewidth]{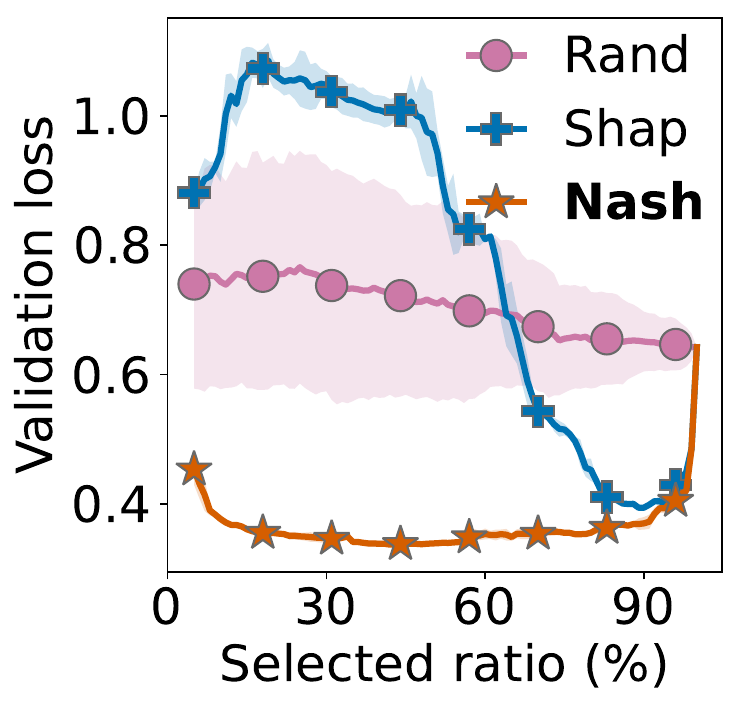}
        \caption{\textbf{PO-LR loss.}}
        \label{rebfig:po-lr-loss}
    \end{subfigure}
    \caption{\textbf{\nash\ improves the performance with other utility functions $u$.} In \ref{rebfig:ch-rr}-\ref{rebfig:au-rr}, we consider regression tasks using the (negated) MSE given by \textbf{RR} model as utility function. In \ref{rebfig:wd-lr-loss} and \ref{rebfig:po-lr-loss}, (negated) loss is used as utility function.}
    \label{appfig:other-utility-functions}
\end{figure}

\begin{figure}[!h]
    \centering
    
    \begin{subfigure}[b]{0.1625\textwidth}
        \centering
        \includegraphics[width=\linewidth]{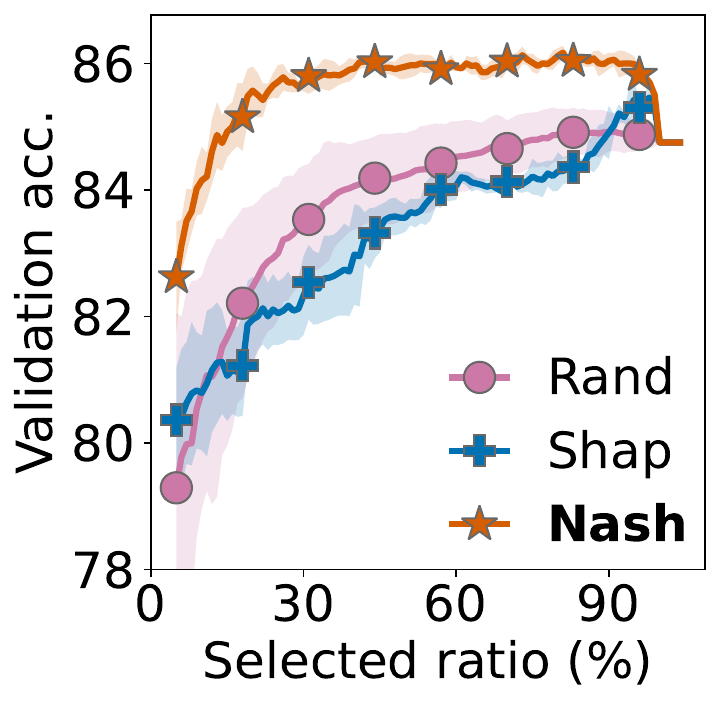} %
        \caption{\textbf{WD-LR $0\%$ flip.}}
    \end{subfigure}
    \begin{subfigure}[b]{0.1625\textwidth}
        \centering
        \includegraphics[width=\linewidth]{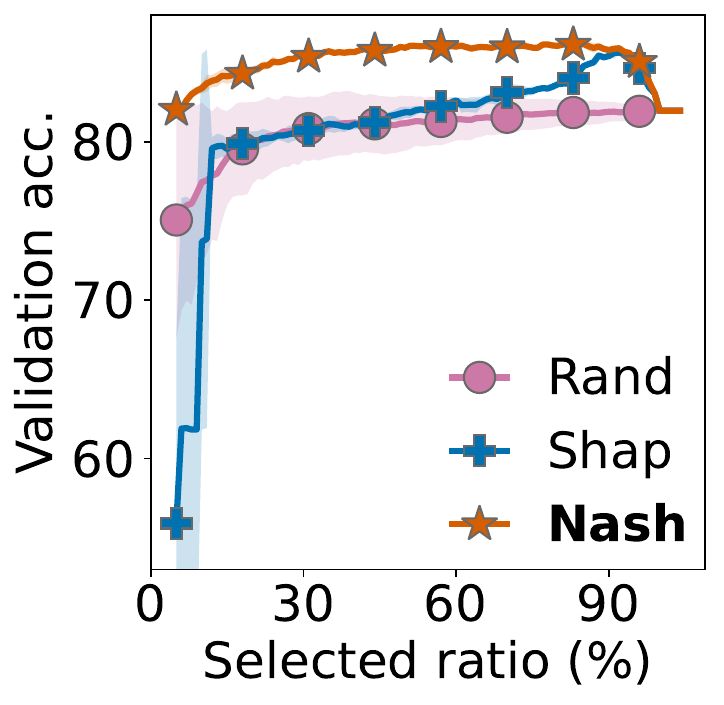}
        \caption{\textbf{WD-LR $10\%$ flip.}}
    \end{subfigure}
    \begin{subfigure}[b]{0.1625\textwidth}
        \centering
        \includegraphics[width=\linewidth]{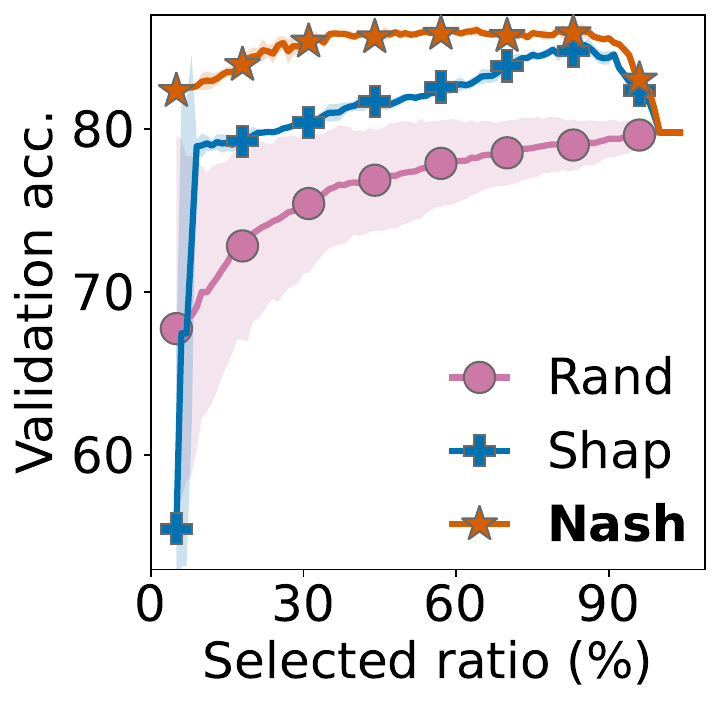}
        \caption{\textbf{WD-LR $20\%$ flip.}}
    \end{subfigure}
    \begin{subfigure}[b]{0.1625\textwidth}
        \centering
        \includegraphics[width=\linewidth]{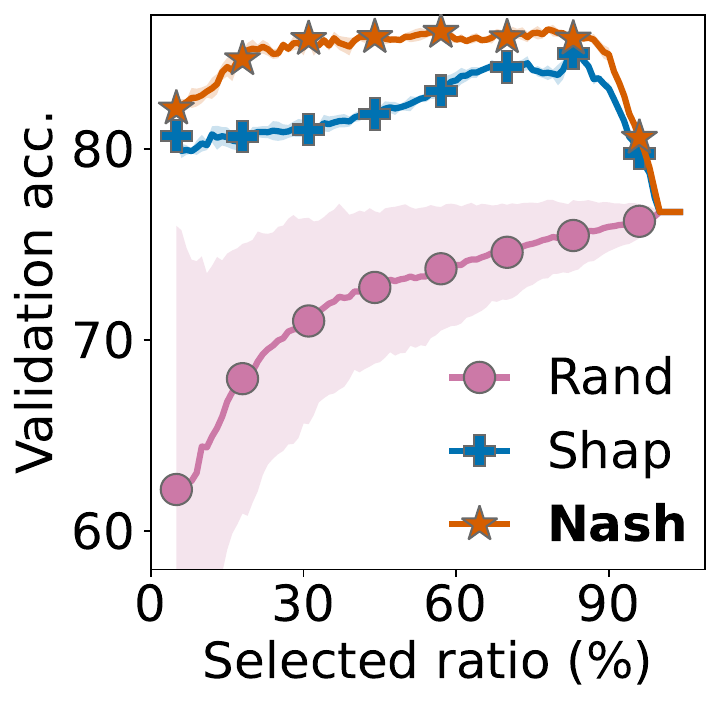}
        \caption{\textbf{WD-LR $30\%$ flip.}}
    \end{subfigure}
    \\
    \begin{subfigure}[b]{0.1625\textwidth}
        \centering
        \includegraphics[width=\linewidth]{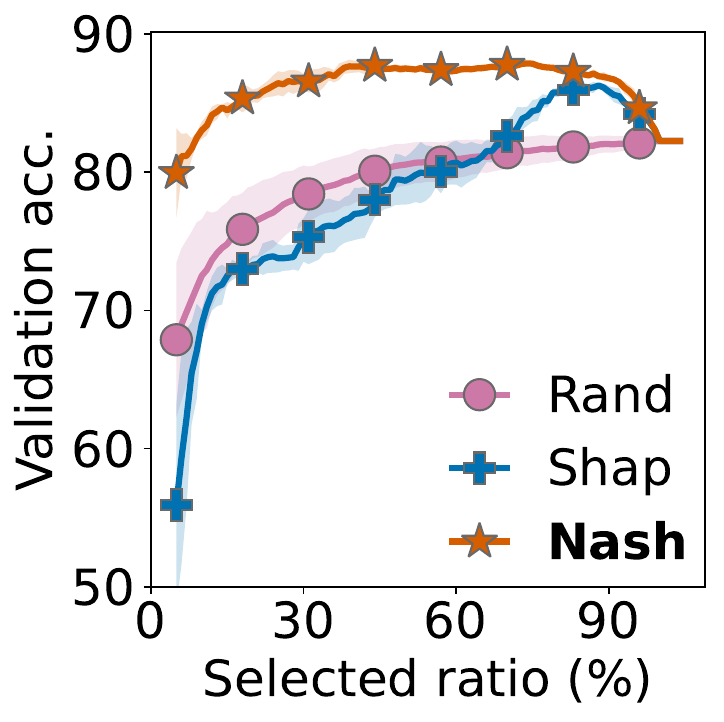} %
        \caption{\textbf{PO-LR $0\%$ flip.}}
    \end{subfigure}
    \begin{subfigure}[b]{0.1625\textwidth}
        \centering
        \includegraphics[width=\linewidth]{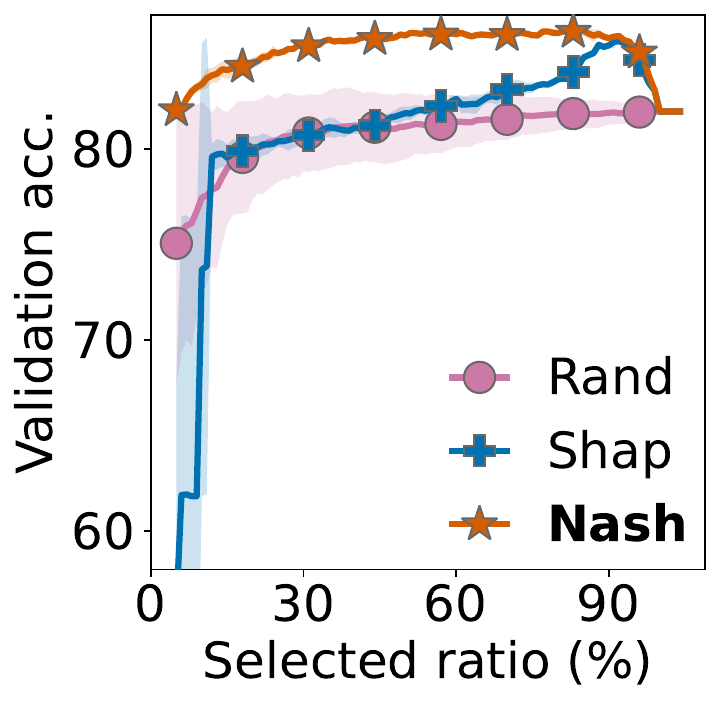}
        \caption{\textbf{PO-LR $10\%$ flip.}}
    \end{subfigure}
    \begin{subfigure}[b]{0.1625\textwidth}
        \centering
        \includegraphics[width=\linewidth]{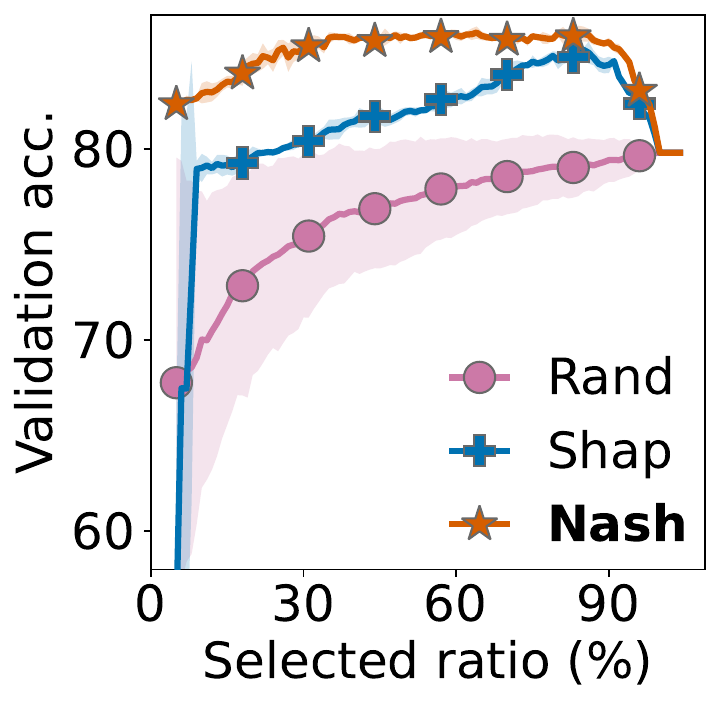}
        \caption{\textbf{PO-LR $20\%$ flip.}}
    \end{subfigure}
    \begin{subfigure}[b]{0.1625\textwidth}
        \centering
        \includegraphics[width=\linewidth]{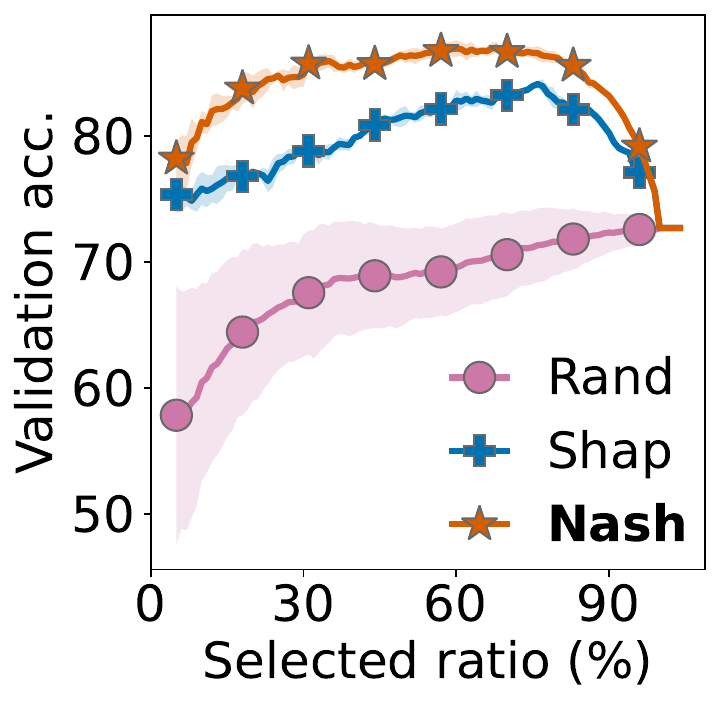}
        \caption{\textbf{PO-LR $30\%$ flip.}}
    \end{subfigure}
    \\
        \begin{subfigure}[b]{0.1625\textwidth}
        \centering
        \includegraphics[width=\linewidth]{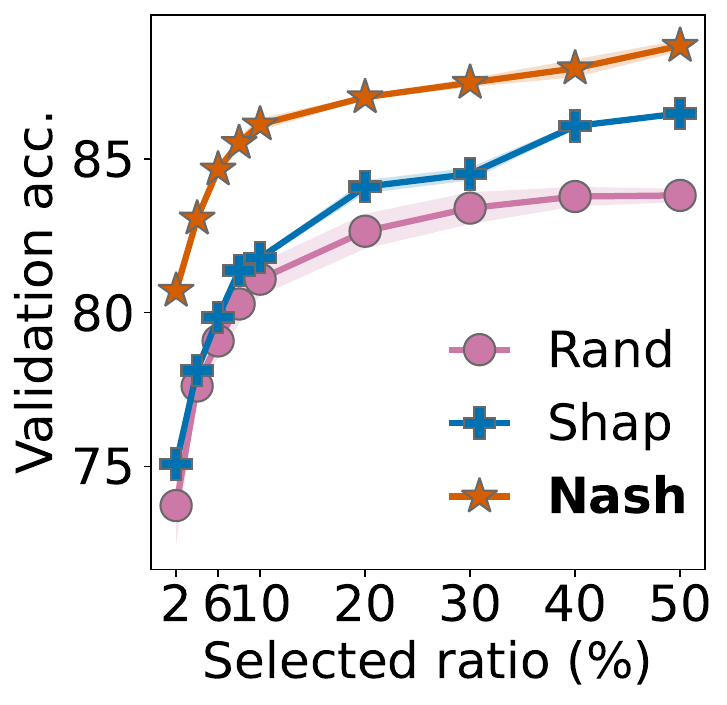} %
        \caption{\textbf{MR-BT $0\%$ flip.}}
    \end{subfigure}
    \begin{subfigure}[b]{0.1625\textwidth}
        \centering
        \includegraphics[width=\linewidth]{Figures/Experiments/MR-BT/mr-bt-flip10.pdf}
        \caption{\textbf{MR-BT $10\%$ flip.}}
    \end{subfigure}
    \begin{subfigure}[b]{0.1625\textwidth}
        \centering
        \includegraphics[width=\linewidth]{Figures/Experiments/MR-BT/mr-bt-flip20.pdf}
        \caption{\textbf{MR-BT $20\%$ flip.}}
    \end{subfigure}
    \begin{subfigure}[b]{0.1625\textwidth}
        \centering
        \includegraphics[width=\linewidth]{Figures/Experiments/MR-BT/mr-bt-flip30.pdf}
        \caption{\textbf{MR-BT $30\%$ flip.}}
    \end{subfigure}
    \\
    \begin{subfigure}[b]{0.1625\textwidth}
        \centering
        \includegraphics[width=\linewidth]{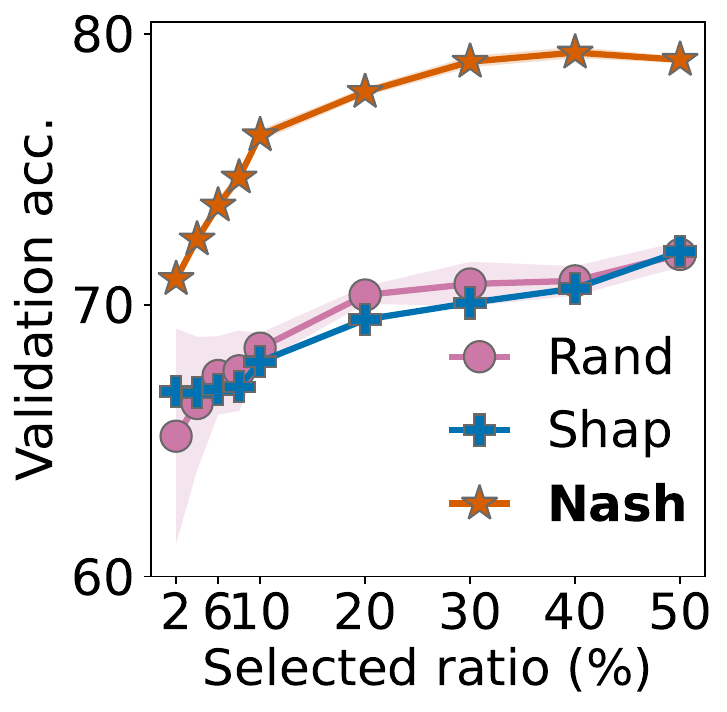}
        \caption{\textbf{MP-BT $0\%$ flip.}}
    \end{subfigure}
    \begin{subfigure}[b]{0.1625\textwidth}
        \centering
        \includegraphics[width=\linewidth]{Figures/Experiments/MP-BT/mp-bt-flip10.pdf}
        \caption{\textbf{MP-BT $10\%$ flip.}}
    \end{subfigure}
    \begin{subfigure}[b]{0.1625\textwidth}
        \centering
        \includegraphics[width=\linewidth]{Figures/Experiments/MP-BT/mp-bt-flip20.pdf}
        \caption{\textbf{MP-BT $20\%$ flip.}}
    \end{subfigure}
    \begin{subfigure}[b]{0.1625\textwidth}
        \centering
        \includegraphics[width=\linewidth]{Figures/Experiments/MP-BT/mp-bt-flip30.pdf}
        \caption{\textbf{MP-BT $30\%$ flip.}}
    \end{subfigure}

    \caption{\textbf{Data selection on heterogeneous datasets}. Different proportions of train labels are flipped to create heterogeneous datasets. Different ratios of training data are selected and evaluated using validation accuracy. Shapley-based data selection performs better than random when datasets are more heterogeneous. \nash\ consistently improves over Shapley-based data selection.}
    \label{appfig:heterogenenous-datasets}
\end{figure}

\subsection{Heterogeneous-Quality Datasets} \label{app:heterogeneous-quality-datasets}

Data selection experiments on \textbf{WD-LR}, \textbf{PO-LR}, \textbf{MP-BT} and \textbf{MR-BT}  further validate that Shapley-based data selection may perform no better than random. In this experiment, we flip a portion of the train labels to increase the heterogeneity of datasets. As the ratio of flipped labels increases from $0\%$ to $30\%$ in \cref{appfig:heterogenenous-datasets} and the dataset becomes more heterogeneous, the effectiveness of Shapley value in identifying ``good'' data becomes more evident in comparison to random selection, which is consistent with \citet{wang2024rethinking}. Even in this setting, \nash\ consistently improves over the performance of vanilla Data Shapley, demonstrating its usefulness in improving the effectiveness of Shapley-based data selection on general datasets regardless of the heterogeneity of quality.

\subsection{Compatibility with Other Semivalues} \label{app:compatibility-with-other-semivalues}

In \cref{subsec:other-semivalues}, we mention that since other semivalues also face the issues \ref{itm:P1} and \ref{itm:P2}, \nash\ can also boost their performances. In this section, we include additional experiment results on more semivalues to validate this. As shown in \cref{appfig:other-semivalues}, \nash\ can improve a variety of semivalues, including Beta$(16, 1)$ which puts larger weights on smaller coalitions, Banzhaf which puts the same weight on every coalition, and Beta$(1, 16)$ which puts larger weights on larger coalitions. 

On the other hand, we also observe that different semivalues have different traits when paired with \nash. For example, the semivalues that put larger weights on larger coalitions, such as Beta$(1, 16)$, still performs ineffectively even after being boosted by \nash. To investigate this and whether Data Shapley has particular advantages, we conduct an ablation study on different choices of semivalues in App.~\ref{app:ablation-semi}, where we discuss the observations in depth.

\begin{figure}[!h]
    \centering
    \begin{subfigure}[b]{0.1625\linewidth}
        \centering
        \includegraphics[width=\linewidth]{Figures/Experiments/WD-LR/wd-lr-banzhaf.pdf}
        \captionsetup{justification=centering}
        \caption{\textbf{WD-LR\\Banzhaf.}}
    \end{subfigure}
    \begin{subfigure}[b]{0.1625\linewidth}
        \centering
        \includegraphics[width=\linewidth]{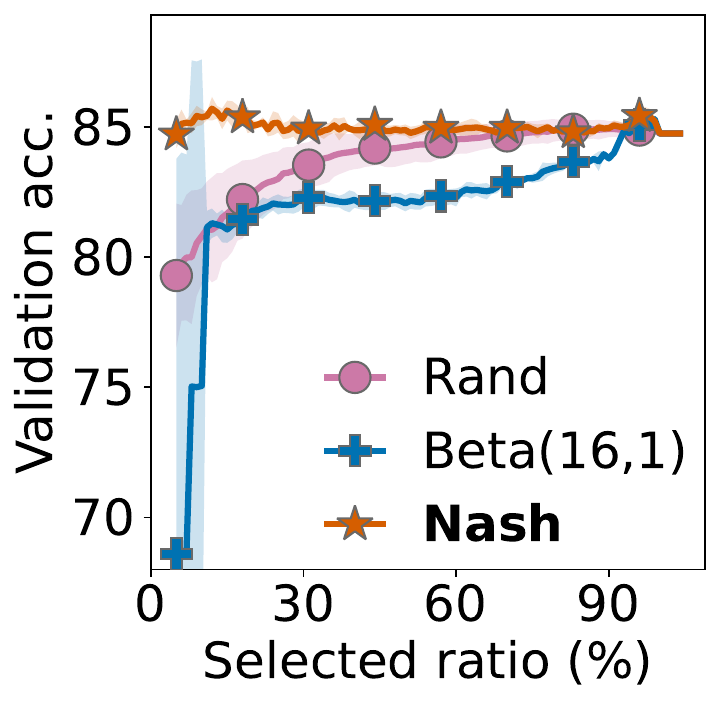}
        \captionsetup{justification=centering}
        \caption{\textbf{WD-LR\\Beta$(16,1)$.}}
    \end{subfigure}
    \begin{subfigure}[b]{0.1625\linewidth}
        \centering
        \includegraphics[width=\linewidth]{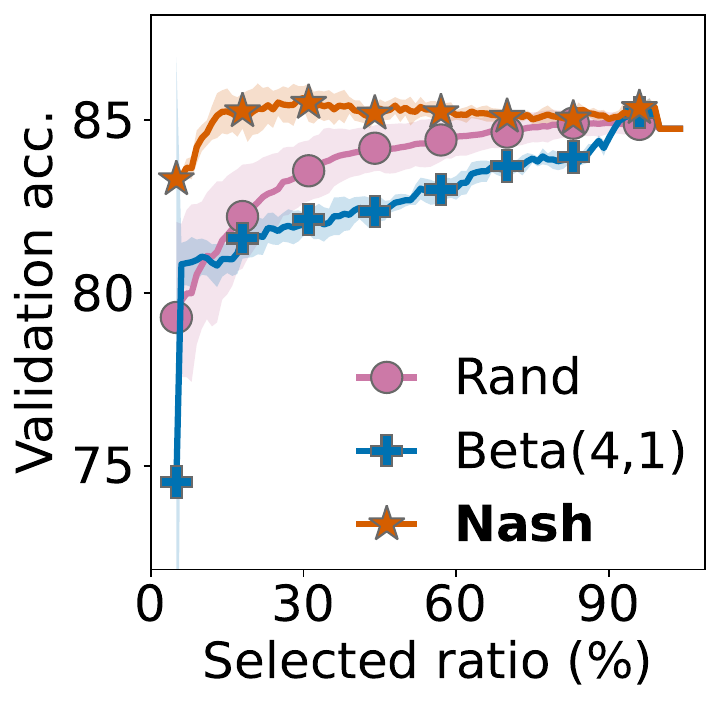}
        \captionsetup{justification=centering}
        \caption{\textbf{WD-LR\\Beta$(4,1)$.}}
    \end{subfigure}
    \begin{subfigure}[b]{0.1625\linewidth}
        \centering
        \includegraphics[width=\linewidth]{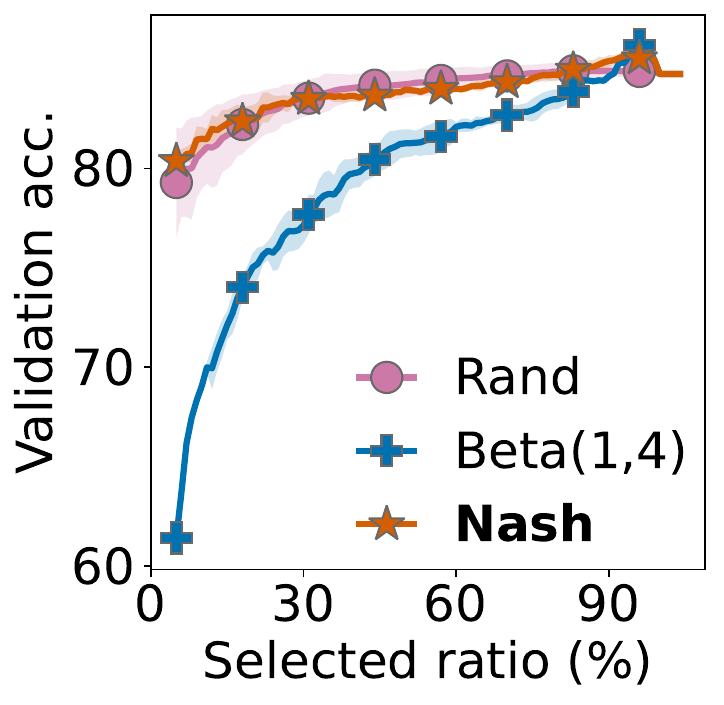}
        \captionsetup{justification=centering}
        \caption{\textbf{WD-LR\\Beta$(1,4)$.}}
    \end{subfigure}
    \begin{subfigure}[b]{0.1625\linewidth}
        \centering
        \includegraphics[width=\linewidth]{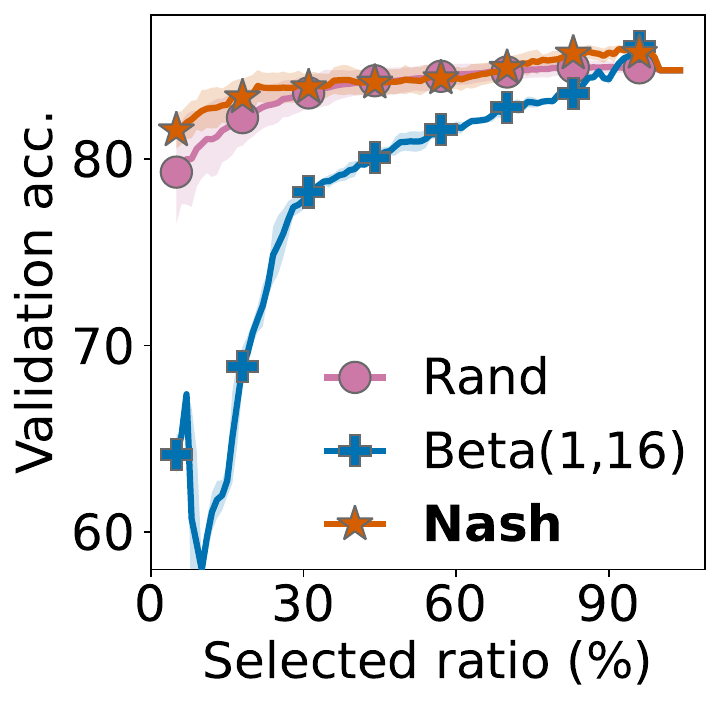}
        \captionsetup{justification=centering}
        \caption{\textbf{WD-LR\\Beta$(1,16)$.}}
    \end{subfigure}
    \\
    \begin{subfigure}[b]{0.1625\linewidth}
        \centering
        \includegraphics[width=\linewidth]{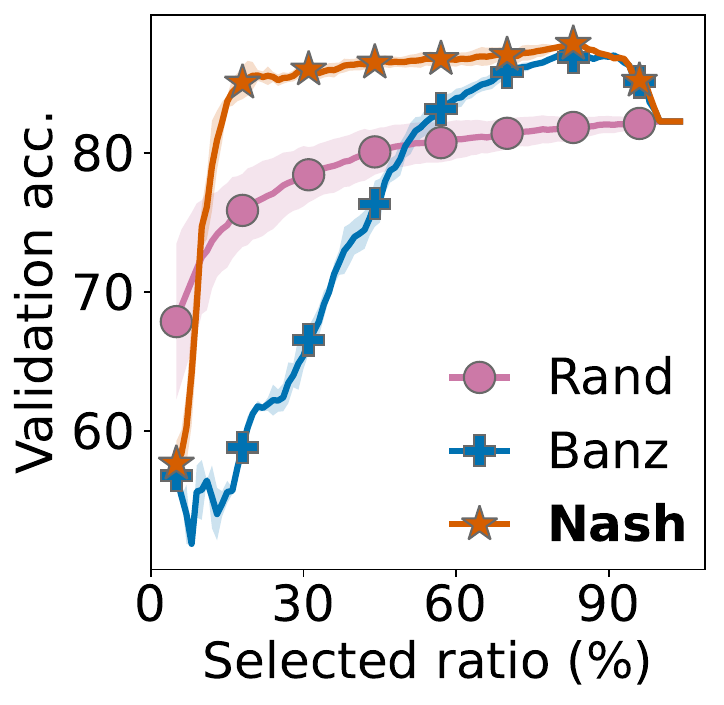}
        \captionsetup{justification=centering}
        \caption{\textbf{PO-LR\\Banzhaf.}}
    \end{subfigure}
    \begin{subfigure}[b]{0.1625\linewidth}
        \centering
        \includegraphics[width=\linewidth]{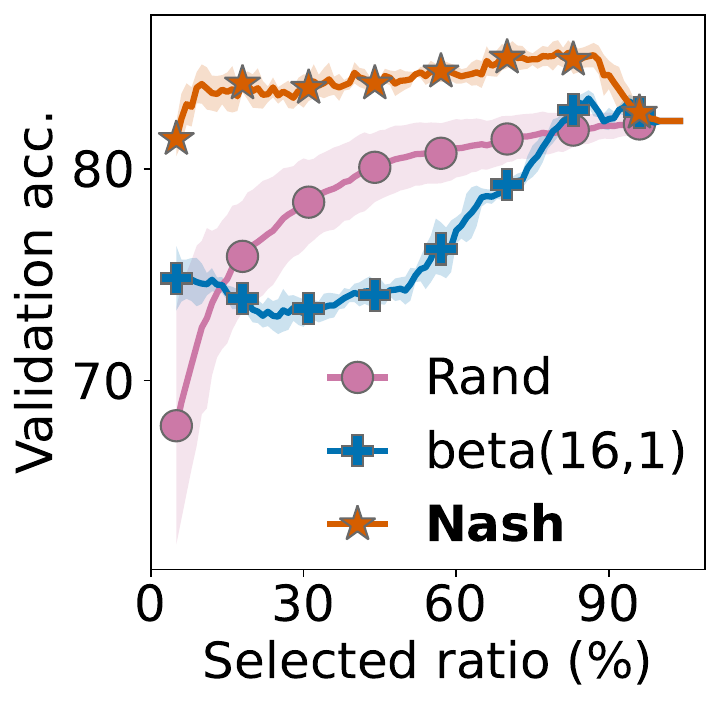}
        \captionsetup{justification=centering}
        \caption{\textbf{PO-LR\\Beta$(16, 1)$.}}
    \end{subfigure}
    \begin{subfigure}[b]{0.1625\linewidth}
        \centering
        \includegraphics[width=\linewidth]{Figures/Experiments/PO-LR/po-lr-beta41.pdf}
        \captionsetup{justification=centering}
        \caption{\textbf{PO-LR\\Beta$(4, 1)$.}}
    \end{subfigure}
    \begin{subfigure}[b]{0.1625\linewidth}
        \centering
        \includegraphics[width=\linewidth]{Figures/Experiments/PO-LR/po-lr-beta14.pdf}
        \captionsetup{justification=centering}
        \caption{\textbf{PO-LR\\Beta$(1, 4)$.}}
    \end{subfigure}
    \begin{subfigure}[b]{0.1625\linewidth}
        \centering
        \includegraphics[width=\linewidth]{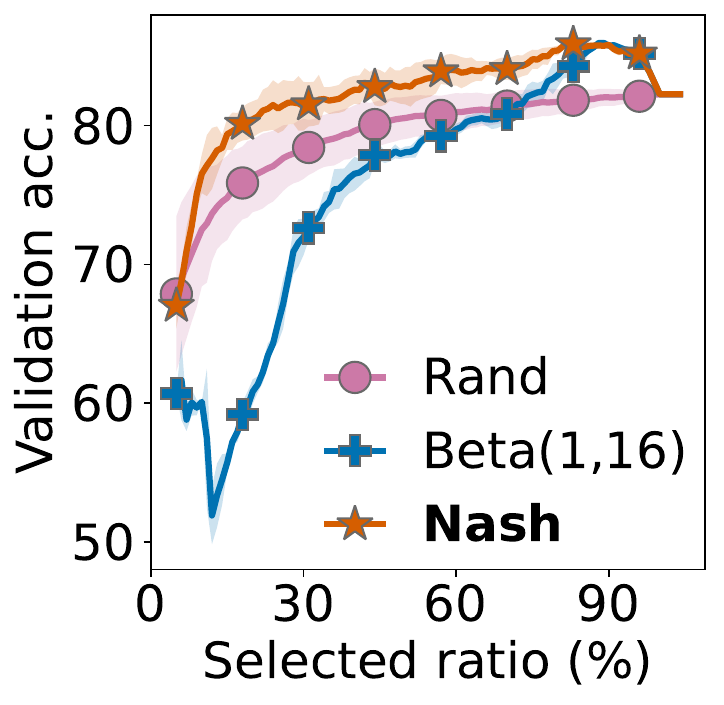}
        \captionsetup{justification=centering}
        \caption{\textbf{PO-LR\\Beta$(1, 16)$.}}
    \end{subfigure}
    \caption{\textbf{\nash\ improves the performance over other semivalues.}}
    \label{appfig:other-semivalues}
\end{figure}

\subsection{Ablation Studies}

\subsubsection{Choice of $F_\Tau$ in the \nash\ Objective} \label{app:ablation-ftau}

In this section, we compare across different choices of aggregating function $F_\Tau$ and justify our choice of the exponential form in the main paper. We consider $3$ non-linear functions inspired by the learning curves, including the exponential law (Exp), power law (Pow) and logarithmic law (Log) \citep{viering2022shape} (how the learning curve informs about $F_\Tau$ is described in \cref{subsec:aggregate}). Specifically, Pow takes the form $F_\Tau(x) \coloneq - x^{-\uplambda}$ and Log takes the form $F_\Tau(x) \coloneq - \log x$ (for simplicity we remove the constant and scaling terms which do not impact the selection outcome). We also include the linear aggregation (Lin) (i.e., average), which we argue to be bad in the main paper, for comparison.

\cref{appfig:ablation-ftau} shows the results. The exponential law used in the main paper performs the best. All the non-linear aggregation functions perform better than the linear aggregation, which validates our arguments.

\begin{figure}[!h]
    \centering
    \begin{subfigure}[b]{0.1625\linewidth}
        \centering
        \includegraphics[width=\linewidth]{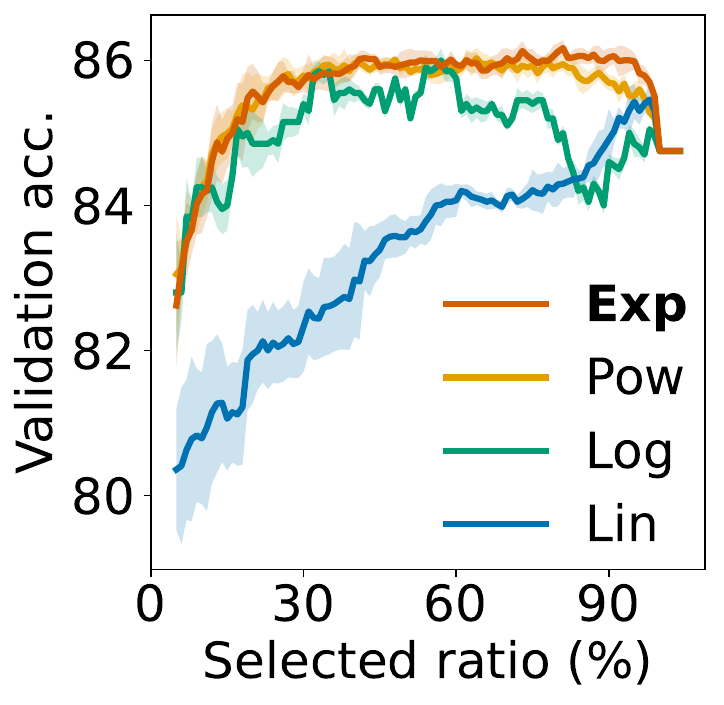}
        \captionsetup{justification=centering}
        \caption{\textbf{WD-LR.}}
    \end{subfigure}
    \begin{subfigure}[b]{0.1625\linewidth}
        \centering
        \includegraphics[width=\linewidth]{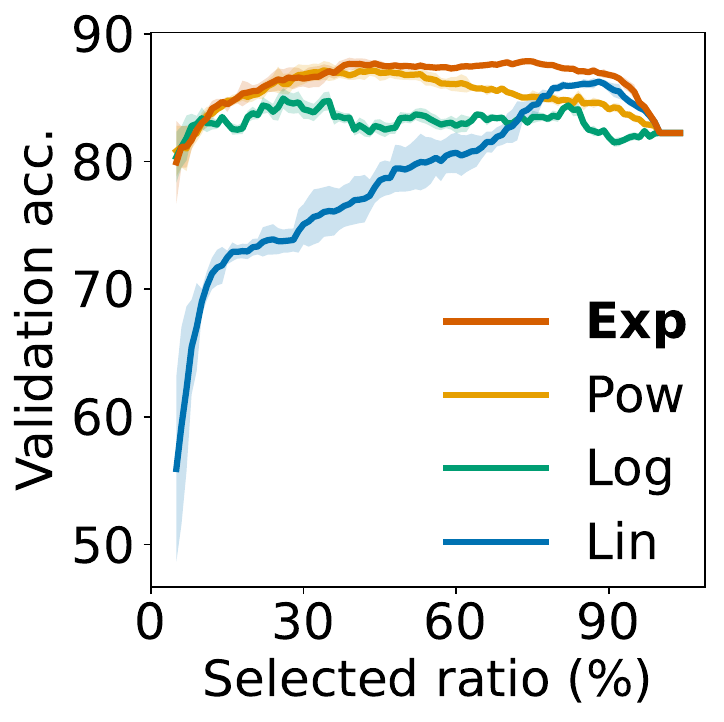}
        \captionsetup{justification=centering}
        \caption{\textbf{PO-LR.}}
    \end{subfigure}
    \begin{subfigure}[b]{0.1625\linewidth}
        \centering
        \includegraphics[width=\linewidth]{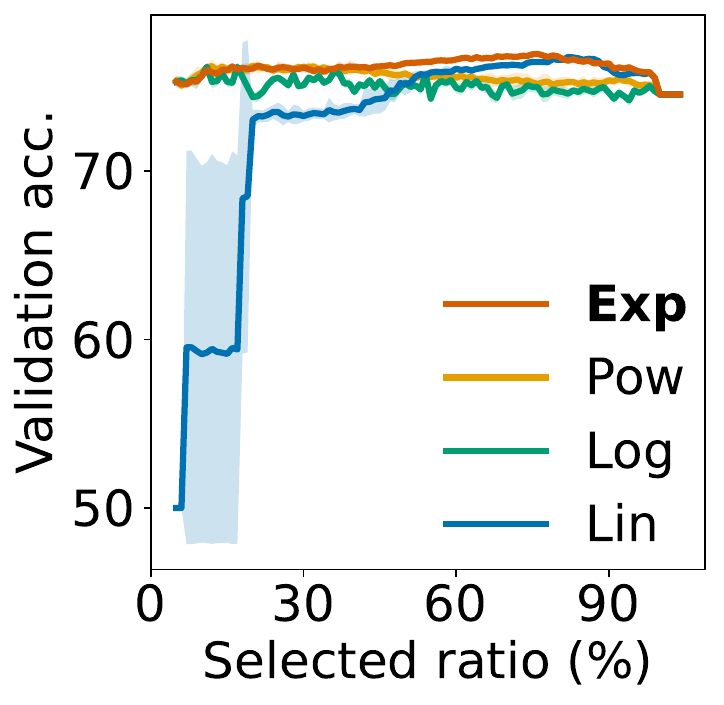}
        \captionsetup{justification=centering}
        \caption{\textbf{PM-LR.}}
    \end{subfigure}
    \caption{\textbf{Comparison across different choices of $F_\Tau$.} The exponential form (in the main paper) consistently gives the best result.}
    \label{appfig:ablation-ftau}
\end{figure}

\begin{figure}
    \centering
    \begin{subfigure}[b]{0.1625\linewidth}
        \centering
        \includegraphics[width=\linewidth]{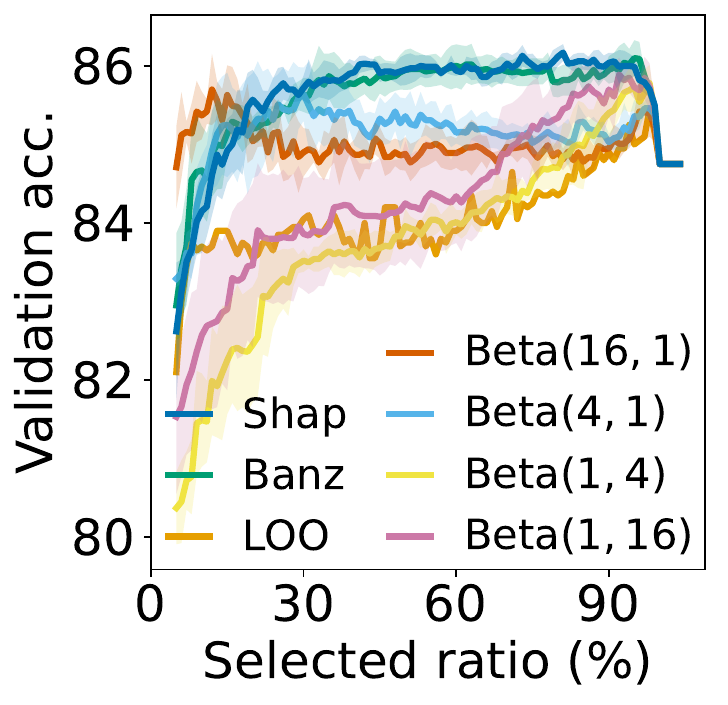}
        \captionsetup{justification=centering}
        \caption{\textbf{WD-LR.}}
    \end{subfigure}
    \begin{subfigure}[b]{0.1625\linewidth}
        \centering
        \includegraphics[width=\linewidth]{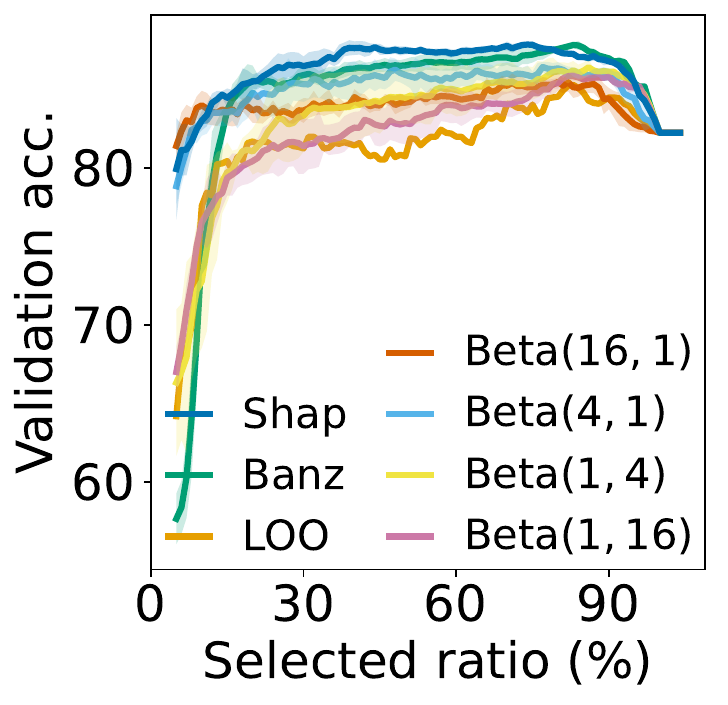}
        \captionsetup{justification=centering}
        \caption{\textbf{PO-LR.}}
    \end{subfigure}
    \begin{subfigure}[b]{0.1625\linewidth}
        \centering
        \includegraphics[width=\linewidth]{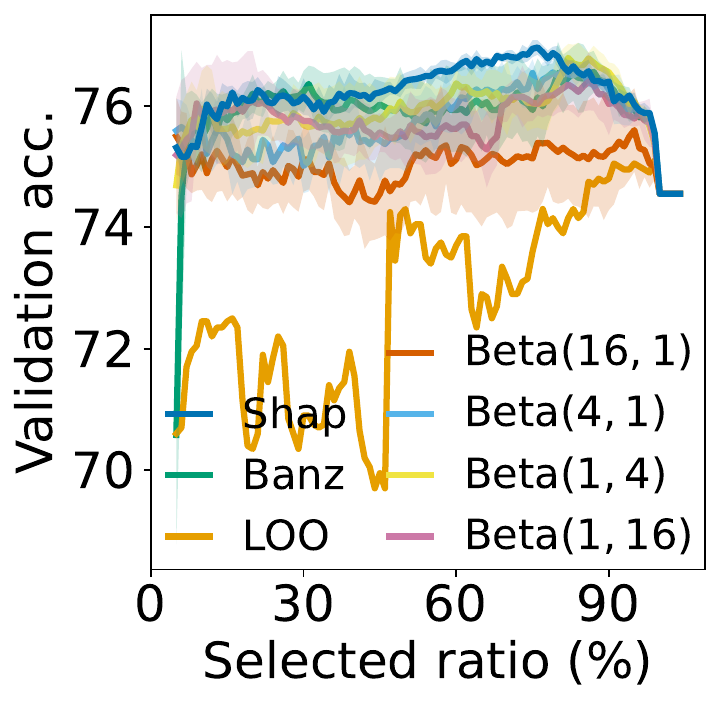}
        \captionsetup{justification=centering}
        \caption{\textbf{PM-LR.}}
    \end{subfigure}
    \caption{\textbf{Comparison across different choices of semivalues.}}
    \label{appfig:ablation-semi}
\end{figure}

\subsubsection{Choice of Semivalues $\varphi$}  \label{app:ablation-semi}

In this section, we investigate the effect of varying the semivalue weights. We focus on how putting larger weights on different coalition sizes would affect data selection performances. As can be seen from \cref{appfig:ablation-semi}, the Shapley value gives the overall best performance. Semivalues that put larger weights on smaller coalitions, such as Beta$(16, 1)$ and Beta$(4, 1)$, tends to have a good performance when the selection ratio is small. This is sensible because the values are significantly dominated by the marginal contributions of data to smaller coalitions, which exactly corresponds to the cases where the selection ratio is small. However, as the selection ratio gets larger, these values' performances drop because they fail to consider the long-term interactions among data.

On the other hand, semivalues that put larger weights on larger coalitions, such as Beta$(1, 16)$ and Beta$(1, 4)$, do not in contrast result in a better performance when the selection ratio is large. Instead, their performances tend to be worse for any selection ratio. We hypothesize that this is because of two reasons: \begin{itemize}
    \item The contributions of data to larger coalitions can be very different from that to smaller coalitions. Thus, data that contribute more to larger coalitions could be very bad at the early stage, and thus model performance stays low. Even when selection ratio increases, these data may not work well with each other.
    \item When the coalition size is large, model utility is less affected by a single datum and the marginal contributions may have larger variance. Focusing on such coalitions would make the value more noisy and less useful.
\end{itemize}

That said, it is still meaningful for future works to explore what semivalues or data values are desirable in data selection.

It is also worth mentioning that by comparing the ablation studies in App.~\ref{app:ablation-ftau} and App.~\ref{app:ablation-semi}, both components of our method \nash, the use of Shapley values and the use of non-linear aggregation, have significant contributions to the selection performance.

\subsection{Sensitivity to Hyperparameter $\uplambda$} \label{app:sensitivity-to-lambda}

In this section, we include the performances when different $\uplambda$'s in the \nash\ algorithm (\cref{alg:nash}) are used. As can be seen from \cref{appfig:sensitivity}, \nash's performance is consistently better than vanilla Data Shapley and random selection, across a wide range of $\uplambda$'s tested. This shows the practical applicability of our method.

\begin{figure}
    \centering
    \begin{subfigure}[b]{0.1625\linewidth}
        \centering
        \includegraphics[width=\linewidth]{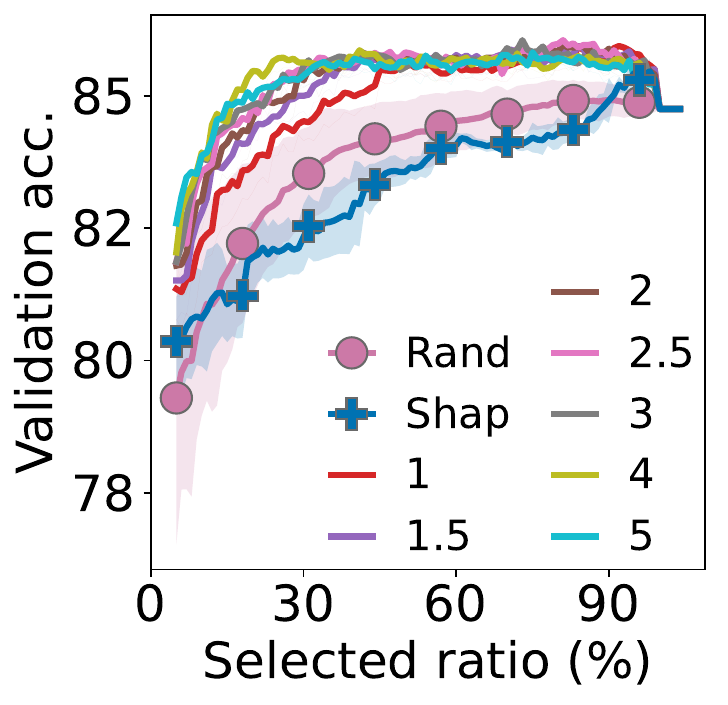}
        \captionsetup{justification=centering}
        \caption{\textbf{WD-LR.}}
    \end{subfigure}
    \begin{subfigure}[b]{0.1625\linewidth}
        \centering
        \includegraphics[width=\linewidth]{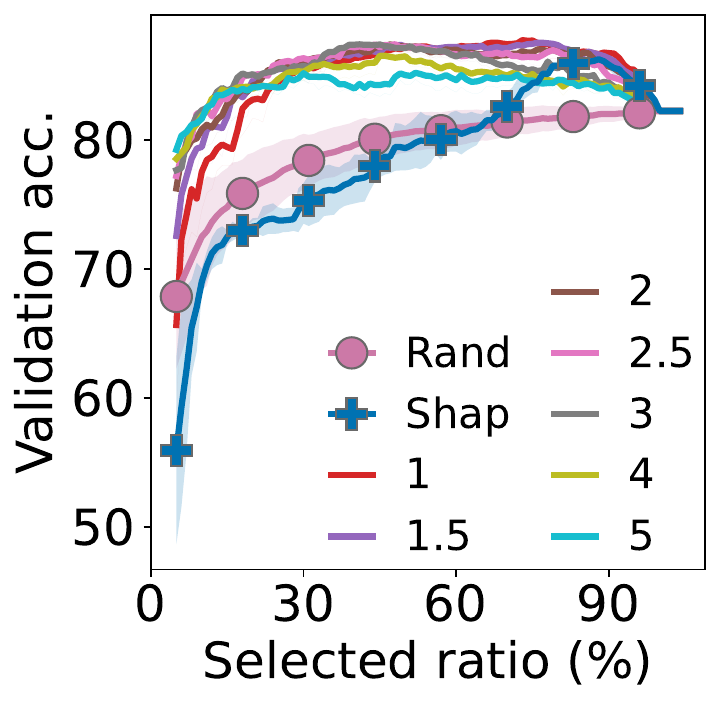}
        \captionsetup{justification=centering}
        \caption{\textbf{PO-LR.}}
    \end{subfigure}
    \caption{\textbf{\nash\ works well for a wide range of $\uplambda$'s (i.e., different numbers in the legends).}}
    \label{appfig:sensitivity}
\end{figure}

\clearpage

\section{Other Questions}

\begin{myquote}
One of the main draws of Shapley values is their unique satisfaction of equitability axioms (\cref{app:equitability-axioms}). Does \nash\  preserve these axioms?
\end{myquote}

These equitability axioms are important in many use cases (e.g., when rewarding each data owner \cite{sim2022data, tian2024derdava}). Since \nash\ first computes the Shapley values, they can directly be applied to these use cases.

\nash\ focuses on how to further use (but not change) the values for effective data selection, where existing works simply select top-$m$ data and are ineffective. Under top-$m$, some equitability axioms become undesirable:
\begin{itemize}
    \item \textbf{[EFF]} Efficiency: \citet{kwon2022beta} shows that removing \textbf{[EFF]} and placing more weights on smaller coalitions may lead to better performance;
    \item \textbf{[INT]} Interchangeability: Identical data with same value are selected together. However, repetition may only improve on roles that are already well-predicted instead of validation accuracy.
\end{itemize}
In contrast, NASH can benefit from these axioms. Identical data may not be selected together precisely because they are interchangeable and have same Shapley values for all roles.

\begin{myquote}
Is there any other prior work that uses different selection criteria such as diversity-driven selection?
\end{myquote}

Yes, there are prior works that use diversity-driven selection in the general field such as Craig \citep{mirzasoleiman2020coresets} and model-specific submodular functions \citep{wei2015submodularity}. However, the focus of our work is on \textbf{utility-driven selection} (in particular Shapley-based) instead of diversity-driven. We target top-$m$ (Data Shapley) heuristics as (1) they are compatible with all utility functions (some of which might not favor data diversity, e.g., finetuning for specific downstream tasks), (2) Shapley values are non-myopic and equitable (\cref{sec:background}) and (3) yet their data selection performances are inconsistent.

While NASH could ensure better ``coverage'' of various roles associated with the utility function, it may not always select more diverse data. This is because the validation set may not be evenly/diversely distributed in the data space (e.g., only certain data are helpful to the specific task) and diverse samples may be harmful to the utility function.

\begin{myquote}
Would the non-linear aggregation make \nash\ less interpretable then the top-$m$ methods?
\end{myquote}

\nash\ is interpretable since the value of each training datum on each role/validation datum captures their unique strengths. This explains why a datum is preferred over another datum of similar overall score but different strengths by showing what roles the current subset is lacking and needs.

\end{document}